\documentclass{article}

\usepackage{microtype}
\usepackage{graphicx}
\usepackage{subcaption}
\usepackage{booktabs} 

\usepackage{hyperref}

\usepackage[accepted]{icml2026}

\usepackage{amsmath}
\usepackage{amssymb}
\usepackage{mathtools}
\usepackage{amsthm}

\usepackage[capitalize,noabbrev]{cleveref}

\theoremstyle{plain}
\newtheorem{theorem}{Theorem}[section]

\newtheorem{lemma}[theorem]{Lemma}
\newtheorem{corollary}[theorem]{Corollary}
\theoremstyle{definition}

\theoremstyle{remark}

\usepackage{tcolorbox}
\usepackage{graphicx}
\usepackage{bm}
\usepackage{amsmath}
\usepackage{amssymb}
\usepackage{multirow} 
\usepackage{comment}

\newcommand{\myparagraph}[1]{\vspace{.00in}\noindent\textbf{#1.}}
\usepackage{xcolor}

\definecolor{qylblue}{RGB}{0,100,255}

\definecolor{myblue}{RGB}{0,107,179}
\definecolor{cvprblue}{rgb}{0.21,0.49,0.74}
\definecolor{mygreen}{RGB}{34,139,34}

\usepackage[textsize=tiny]{todonotes}

\icmltitlerunning{Geometry-Aware Dataset Condensation for Diffusion Model Training}

\begin{document}

\twocolumn[
  \icmltitle{Geometry-Aware Dataset Condensation for Diffusion Model Training}

  \icmlsetsymbol{equal}{*}

  \begin{icmlauthorlist}
    \icmlauthor{Xiao Cui}{1}
    \icmlauthor{Yulei Qin}{4}
    \icmlauthor{Mo Zhu}{3}
    \icmlauthor{Wengang Zhou}{1}
    \icmlauthor{Hongsheng Li}{2}
    \icmlauthor{Houqiang Li}{1}
  \end{icmlauthorlist}

  \icmlaffiliation{1}{University of Science and Technology of China}
  \icmlaffiliation{2}{CUHK MMLAB}
  \icmlaffiliation{3}{Zhejiang University}
  \icmlaffiliation{4}{Independent Researcher}
    \icmlcorrespondingauthor{Houqiang Li}{lihq@ustc.edu.cn}
  \icmlcorrespondingauthor{Wengang Zhou}{zhwg@ustc.edu.cn}

  \icmlkeywords{Machine Learning, ICML}

  \vskip 0.3in
]

\printAffiliationsAndNotice{}  

\begin{abstract}

Dataset condensation aims to construct compact datasets from real data via synthesis or selection. However, existing approaches are ill-suited for diffusion model training: synthetic data generation often yields low-fidelity samples unsuitable for authentic modeling, while real subset selection typically fails to preserve the distributional geometry required by diffusion likelihood objectives. To address this, we propose to reformulate real subset selection as a geometry-aware distribution alignment problem. By incorporating one-sided partial optimal transport, our method selectively aligns a compact subset with the full data distribution while allowing unmatched mass in low-density regions, ensuring the preserved geometric structure necessary for effective diffusion model training. To further ensure distributional fidelity, we complement geometric alignment with lightweight feature-statistics and semantic consistency regularization.
An efficient two-stage discrete optimization strategy is proposed to achieve this alignment objective. Extensive experiments across diffusion variants, subset sizes, image resolutions, and training rounds show that our method achieves superior fidelity and distributional coverage in diffusion model training. Codes are available at \url{https://github.com/2018cx/GADC}.
\end{abstract} 

\section{Introduction}
\label{sec:intro}

Deep learning has achieved remarkable success across discriminative and generative paradigms, largely driven by massive annotated datasets~\cite{yan2025scaling,cui2021heredity,qin2026incentivizing}. However, reliance on large-scale data incurs substantial storage and computational costs, especially in resource-constrained settings~\cite{survey,survey2024}.
Dataset condensation addresses this challenge by constructing more compact datasets through \textit{data synthesis} or \textit{data selection}, while preserving the essential knowledge of the full dataset~\cite{survey4,survey5}.
For training diffusion models,
such condensation approaches are particularly important as 
the modeling of high-quality samples by diffusion models often requires extremely expensive or even prohibitive training in terms of both data and compute.
It is therefore practically important to enable diffusion generative modeling on condensed datasets, which remains underexplored.

\begin{figure}
    \centering
    \includegraphics[width=\linewidth]{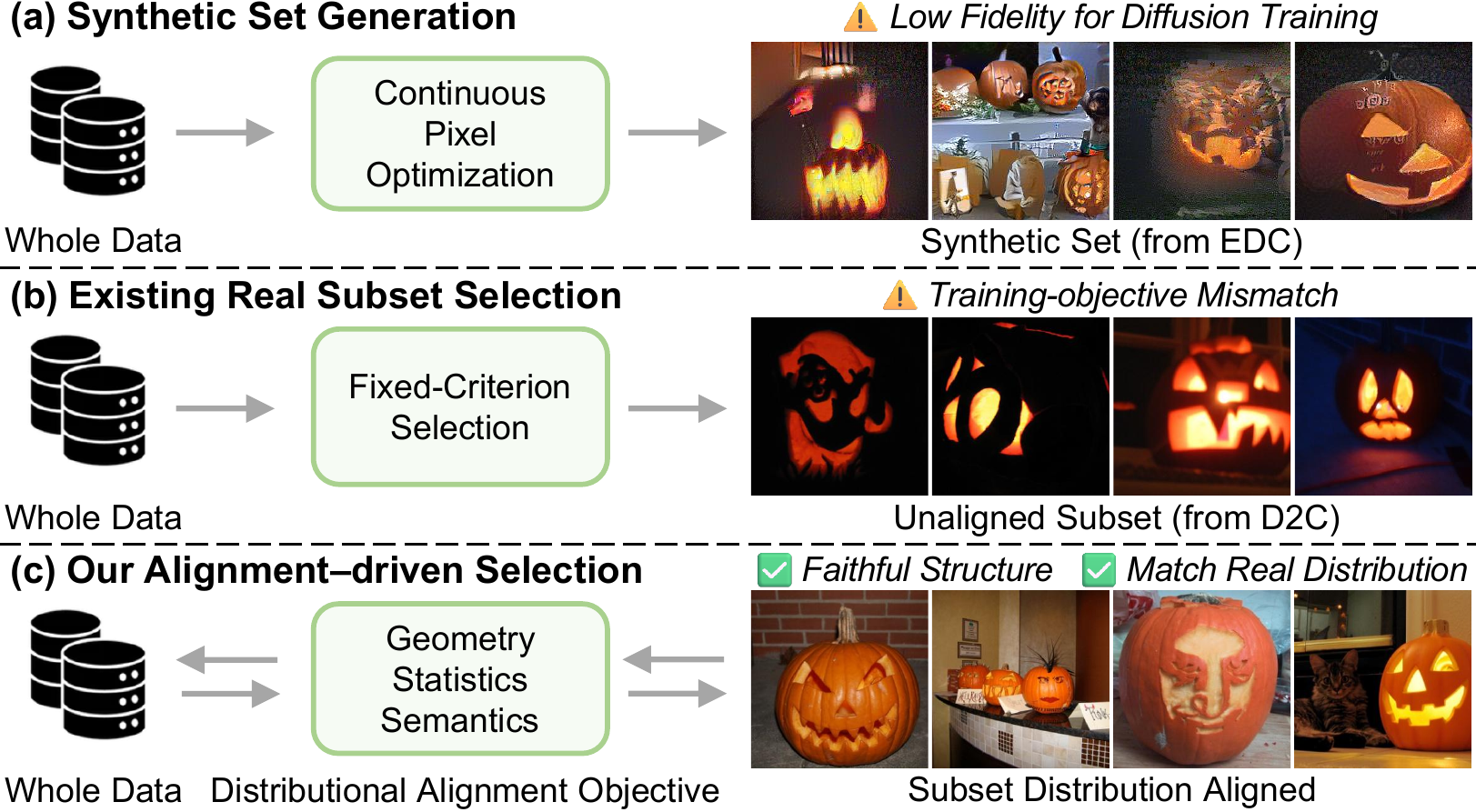}
    \caption{Comparison of dataset condensation methods.
(a) Synthetic set generation methods create synthetic samples through continuous pixel optimization, often resulting in low-fidelity samples that are unsuitable for diffusion training.
(b) Existing real subset selection methods use fixed criteria to select real samples but do not align the subset with the full data distribution.
(c) Our alignment-driven selection optimizes for distributional alignment, preserving structure and matching the real data distribution.}
    \label{figure1}
\end{figure}

Most existing condensation methods are developed for recognition tasks such as classification~\cite{edc,ncf,dream,cvdd}, and their optimization objectives are not aligned with diffusion training, which fundamentally targets modeling the data distribution rather than decision boundaries.
In addition, a large portion of these methods instantiate condensation via
\textit{data synthesis},
where
pixel-level
synthesis is performed with gradient-based optimization.
Such continuous synthesis, while beneficial for discriminative tasks, is prone to high variance and introduces 
noise 
that distorts fine-grained structures and distributional characteristics.
As a result, synthetic
methods are fundamentally ill-suited for diffusion model training, as diffusion models are highly sensitive to noise and structural distortions under extreme data budgets.

Alternatively, a small number of works~\cite{ccs,dq,dqas} consider \textit{data selection}
for dataset condensation.
From a data perspective, such approaches are inherently more promising for diffusion model training, as they retain the high-fidelity structure of real samples.
However, as illustrated in Fig.~\ref{figure1}, these methods select real 
subsets according to fixed or heuristic criteria, rather than optimizing a task-aligned objective.
In particular, the only prior work that explicitly targets diffusion model training, D$^2$C~\cite{ddc}, ranks images within each class by their diffusion difficulty and selects samples at equal intervals along this one-dimensional difficulty axis to construct the condensed subset. While effective to some extent, such a fixed-criterion strategy suffers from two fundamental limitations.
First, many existing selection strategies are not derived from an explicit, training-aligned optimization objective, but instead rely on heuristic or proxy criteria for subset selection.
Even when an objective is specified, it is often only loosely related to diffusion training, which is fundamentally driven by likelihood maximization and therefore calls for distributional rather than scalar alignment.
Second, even under the assumption of an ideal, training-aligned objective, the one-pass greedy selection strategies adopted by most existing methods generally lack the capacity to reliably optimize it in the discrete combinatorial space of subset selection. As a result, they are unable to perform principled distributional alignment, 
leading to subsets that can be poorly aligned with the full data distribution.

To overcome these limitations, we cast dataset condensation for diffusion training as a geometry-aware distribution alignment problem 
subject to
a discrete selection constraint.
Here, ``geometry'' refers to the distributional support geometry in representation space.
In the context of diffusion-based generative modeling, we aim to answer the following research questions:
\textit{(i) How can we achieve principled distribution matching in the feature space?}
We utilize optimal transport (OT) to offer a principled mechanism for matching the selected subset to the full data distribution in feature space,
capturing both global coverage and local geometric structure.
\textit{(ii) How can we mitigate alignment degradation caused by severe capacity mismatches?}
The subset size budget is drastically smaller than that of the full dataset,
leading to a severe capacity mismatch.
Under this regime,
enforcing balanced matching of the entire set is uninformative during OT
as transport mass is inevitably spread across peripheral, low-density regions.
We propose the one-sided partial OT (POT) with target-side capacity constraints
to allow unmatched mass and sharpen alignment toward high-density, geometrically stable, dominant regions.
\textit{(iii) To what extent do auxiliary constraints enhance OT-based geometry shaping?}
We propose lightweight statistical and semantic regularization to complement OT-based geometry matching by promoting global distributional fidelity and semantic consistency of the selected subset.
Together, these three components 
form a unified geometry-aware distribution alignment objective.

Optimizing this alignment objective over subset selection is inherently combinatorial, since the decision variable is a fixed-size subset rather than continuous parameters. To make the problem tractable at scale, we introduce a two-stage discrete optimization strategy that mirrors the needs of diffusion-oriented alignment: a geometry-guided greedy construction that quickly establishes broad manifold coverage under the POT-driven objective, followed by a swap-based refinement that corrects early myopic choices through targeted exchanges between selected and unselected samples. This solver directly optimizes the same alignment objective defined above, yielding compact subsets that remain faithful to both the geometric structure and the distributional profile required for effective diffusion model training.
For efficiency, we compute the POT cost via entropic Sinkhorn iterations in a batch form, enabling scalable GPU
computing through parallel scheduling and memory balancing.

In summary, our contributions are:
(1) Based on the diffusion training objective, we reformulate dataset condensation for diffusion models as a distribution alignment problem,
where the condensed subset is selected by optimizing a one-sided POT objective with statistical regularization.
(2) We propose a two-stage discrete optimization framework that efficiently solves this alignment problem, moving beyond fixed-criterion or ranking-based sampling.
(3) Extensive experiments show that our method consistently outperforms prior approaches across various diffusion variants, subset sizes, training steps, and evaluation protocols, achieving superior fidelity and efficiency in diffusion training.

\section{Related Work}
\noindent\textbf{Dataset Condensation.}
\label{subsec:dc}
Existing dataset condensation methods 
fall into two optimization paradigms~\cite{wu2025dc3}: synthetic set generation and real subset selection.

\noindent\textit{Synthetic Set Generation. }
These methods directly optimize a compact synthetic dataset in a continuous space to approximate the learning behavior of the full dataset. 
Distribution-matching~\cite{dm,ncf,dm2025,optical}, trajectory-matching~\cite{mtt,mttnew,wang2025edf,ltdd}, gradient-matching~\cite{dc,idc,dream,dream2}, and model-inversion-based methods~\cite{sre2l,cvdd,delt,focusdd,cui2026rethinking} all rely on differentiable optimization of pixel values to minimize surrogate objectives. 
However, such optimization, unconstrained by the real data manifold, 
often yields low-fidelity and mode-collapsed samples, making these methods unsuitable for 
generative training tasks that require high-fidelity 
data.
Generative-model-based methods~\cite{igd,moser2025ld3m,wu2025dc3,cui2026optimizing} 
rely on pretrained generators to produce images. 
Training these generators is computationally expensive, and their learned priors are often difficult to adapt across domains. Moreover, the generated samples inevitably deviate from real image distributions, making them less effective than directly selecting real images for diffusion model training.

 \noindent\textit{Real Subset Selection.}
Discrete condensation methods select informative real samples instead of synthesizing new ones. 
The K-center method~\cite{kcenter1,kcenter2}, with its farthest-point greedy criterion, tends to push selected samples toward the embedding-space boundary, overemphasizing extreme and low-density regions, while herding~\cite{welling2009herding} overfits to the global mean and lacks sufficient coverage of mid- and long-tail variations. CCS~\cite{ccs} further strengthens coverage, yet it still operates as a one-shot, criterion-driven selection scheme.
Dataset quantization methods~\cite{dq,dqas,adq} discretize data into bins via submodular surrogates, but this binning collapses intra-bin geometry into coarse prototypes and introduces irreversible quantization bias.
Pruning then Reweighting~\cite{li2025pruning} first selects a subset using feature-based importance scores and then applies class-wise distributionally robust optimization (DRO) to reweight samples during diffusion training.
Beyond pure selection, methods such as RDED~\cite{rded} and OD3~\cite{od3} further adapt distilled datasets to specific tasks through sample cropping and instance placement.
D$^2$C~\cite{ddc} is the first work to adapt dataset condensation for diffusion 
model training by proposing difficulty-guided selection. However, this one-dimensional ranking collapses the inherently multi-mode data distribution and ignores the 
manifold structure underlying the data distribution.
In contrast, our method formulates real subset selection as a geometry-aware distribution alignment problem in the representation space. 
By incorporating one-sided POT and statistical regularization within a two-stage optimization framework, our approach preserves both geometric and distributional structures of real data while remaining fully discrete and computationally efficient.

\noindent\textbf{Efficient Diffusion Model Training. }
Recent studies have explored various strategies to improve the efficiency of diffusion model training from the model side. 
Optimization-based approaches refine the noise schedule~\cite{ddpm,ddim,edm}, 
employ adaptive loss weighting~\cite{improved_ddpm,loss_weighting}, 
introduce OT-based training objectives~\cite{kim2024aot},
or distill the denoising trajectory into fewer steps~\cite{progressive_distill,consistency_models} to accelerate convergence. 
Architectural improvements, including lightweight U-Net variants~\cite{sd,imagen} and transformer-based diffusion backbones~\cite{dit,ma2024sit}, further enhance scalability and parallelism. 
More recently, representation alignment methods such as REPA~\cite{repa} and REPA-E~\cite{repae} introduce auxiliary supervision that aligns intermediate diffusion features with pretrained visual encoders, 
improving convergence speed 
and stability. 
Although these approaches effectively reduce computational cost, they all operate on the model side and still assume access to large, diverse datasets.
In contrast, we focus on dataset condensation, a complementary data-centric approach that optimizes the training data to improve efficiency. By constructing compact yet geometry-aligned subsets of 
real images, we 
reduce storage cost and accelerate diffusion model training, providing an orthogonal path to model-side methods.

\section{Methods}
\subsection{Problem Statement} 

We study the problem of dataset condensation for diffusion model training via real subset selection.
Given a large dataset $\mathcal{T}=\bigcup_{c=1}^{C}\mathcal{T}_c$, 
where $\mathcal{T}_c$ denotes class $c$ samples, 
the goal is to \textbf{select} a compact,
\textbf{real} subset 
$\mathcal{S}=\bigcup_{c=1}^{C}\mathcal{S}_c$, with $\mathcal{S}_c\subset\mathcal{T}_c$ 
 and $|\mathcal{S}_c|\ll|\mathcal{T}_c|$, 
using equal subset size across classes.
A diffusion model trained on $\mathcal{S}$
 should achieve competitive generative performance with limited training iterations.
\vspace{-0.05em}

\begin{figure*}[t]
    \centering
    \begin{subfigure}[t]{0.685\linewidth}
        \centering
        \includegraphics[width=\linewidth]{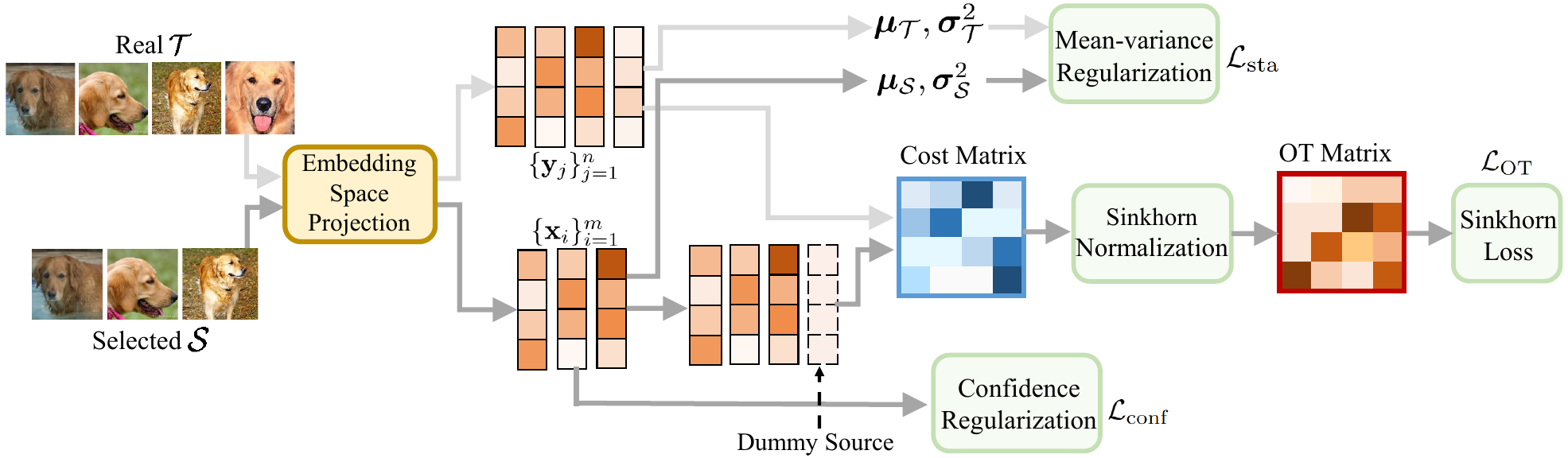}
        \caption{Distribution alignment objective defined by one-sided POT and statistical regularization.}
        \label{frame1}
    \end{subfigure}
    \hfill
    \begin{subfigure}[t]{0.29\linewidth}
        \centering
        \includegraphics[width=\linewidth]{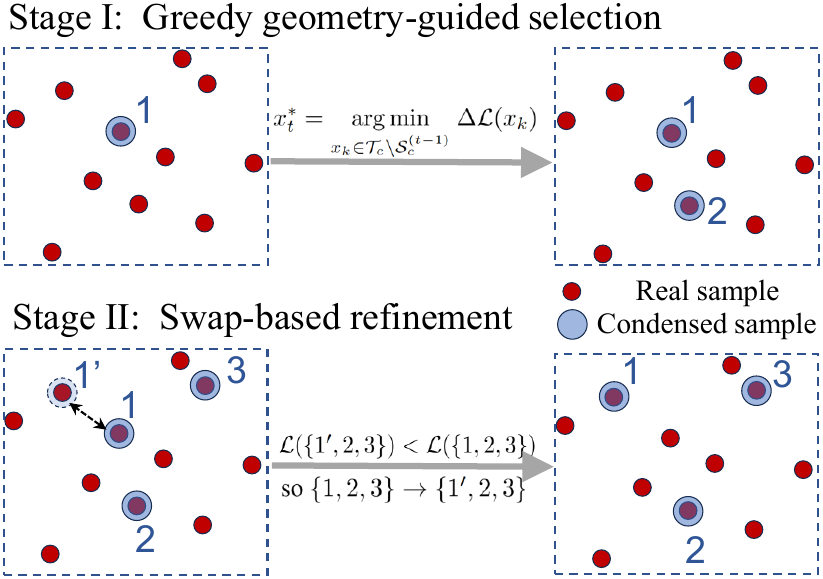}
        \caption{Two-stage optimization strategy.}
        \label{frame2}
    \end{subfigure}
    \caption{Illustration of the proposed pipeline. 
    Mean–variance regularization aligns real and selected embeddings, and confidence regularization 
    ensures semantic reliability. A dummy-source-augmented cost matrix is normalized via Sinkhorn iterations to obtain the one-sided POT alignment. The total objective, composed of these three components, is optimized using a two-stage strategy.
    }
    \label{frame}
\end{figure*}

\subsection{From Ranking-based to Geometry-aware Selection}

Diffusion models are trained by maximizing data likelihood through a variational objective, where optimizing the ELBO enforces distribution-level matching between the learned generative process and the underlying data distribution.
Unlike discriminative objectives that emphasize decision boundaries, 
diffusion training is inherently sensitive to distortions of the data support itself, as likelihood estimation depends on preserving relative relationships and local structure rather than reducing samples to scalar scores.

When training data are condensed, the selected subset must therefore remain well aligned with the original data distribution, as misalignment directly translates into a biased likelihood objective.
This requirement exposes a fundamental limitation of ranking-based selection strategies.
The only existing method tailored for diffusion training, D$^2$C~\cite{ddc}, projects samples onto a scalar difficulty axis derived from diffusion loss and performs selection by uniformly spacing along this one-dimensional ranking.
While such a score provides a coarse proxy for training difficulty, it collapses rich high-dimensional distributional information into a single scalar and does not explicitly optimize distribution alignment.
As a result, samples from distinct modes may receive similar scores, while structurally related samples can be separated along the ranking, leading to subsets that are suboptimal for 
diffusion training.

These observations suggest that diffusion-oriented dataset condensation should be formulated as an explicit distribution alignment problem in the representation space.
Specifically, two requirements naturally arise from the ELBO objective: 
(1) \emph{geometric alignment}, as ELBO-based diffusion training is sensitive to structural bias in the learned data support, and 
(2) \emph{distributional fidelity}, since likelihood maximization requires preserving the global diversity and semantic balance of the data distribution.
To jointly satisfy these requirements, we formulate dataset condensation as a geometry-aware transport alignment problem between the selected subset and the full dataset in the learned feature space.
By leveraging one-sided 
POT with statistical regularization, our approach provides a principled and task-aligned mechanism for selecting representative real subsets. 
Notably, our framework is agnostic to both the feature encoder and classifier, as detailed in Appendix G.3.

We propose an alignment-driven discrete optimization framework. 
In the distribution alignment module (Fig.~\ref{frame1}), one-sided POT and statistical regularization define the objective, which is optimized via a two-stage discrete strategy (Fig.~\ref{frame2}); the following subsections detail each component.

\subsection{One-sided Partial Optimal Transport 
}
Following previous works~\cite{sre2l,ddc,dq},
we perform 
condensation independently for each 
class to ensure balanced coverage and 
efficiency.
Let $\mathcal{S}_{c(e)}=\{\mathbf{x}_i\}_{i=1}^{m}$ and $\mathcal{T}_{c(e)}=\{\mathbf{y}_j\}_{j=1}^{n}$ denote the feature embeddings of the selected subset and the full real set for the class c.
The pairwise transport cost is defined as the squared Euclidean distance:
\begin{equation}
\mathbf{C}_{ij} = \|\mathbf{x}_i - \mathbf{y}_j\|_2^2.
\end{equation}
Classical balanced 
OT enforces full mass alignment by:
\begin{equation}
\min_{\boldsymbol{\pi} \ge 0} \langle \mathbf{C}, \boldsymbol{\pi} \rangle
\quad \text{s.t.} \quad \boldsymbol{\pi} \mathbf{1}_n = \boldsymbol{\mu},\ 
\boldsymbol{\pi}^\top \mathbf{1}_m = \boldsymbol{\nu},
\end{equation} 
where $\boldsymbol{\pi} \in \mathbb{R}_{+}^{m\times n}$ denotes the \textit{transport plan} 
that specifies the mass transported from each source sample $\mathbf{x}_i$ 
to each target sample $\mathbf{y}_j$.
The cost matrix $\mathbf{C}$ encodes the pairwise distances between source and target embeddings,
and $\boldsymbol{\mu}$ and $\boldsymbol{\nu}$ are the source and target marginals, 
typically uniform distributions.
However, this rigid constraint is misaligned with diffusion-oriented subset selection,
as it enforces symmetric full mass matching between real and selected samples.
In diffusion training, the selected set serves as a training distribution optimized for likelihood approximation rather than a reconstruction target,
making full alignment unnecessary and biasing the selected set away from the core manifold.
Alignment should therefore be relaxed,
allowing transport mass to concentrate on the core manifold.

To enable selective matching, we employ a one-sided partial optimal transport 
formulation, where the source mass is still fully transported while the target side is relaxed to be satisfied under 
a capacity constraint:
\begin{equation}
\min_{\boldsymbol{\pi} \ge 0}\ \langle \mathbf{C}, \boldsymbol{\pi} \rangle
\quad \text{s.t.}\quad 
\boldsymbol{\pi}\mathbf{1}_n = \boldsymbol{\mu},\quad
\boldsymbol{\pi}^\top\mathbf{1}_m \le \kappa\,\bar{\boldsymbol{\nu}}, 
\label{eq:pot}
\end{equation}
where $\boldsymbol{\mu}=\tfrac{1}{m}\mathbf{1}$, $\bar{\boldsymbol{\nu}}=\tfrac{1}{n}\mathbf{1}$ is the reference marginal on the target side, and $\kappa$ is a scalar capacity scaling factor. 
When $\kappa=1$, Eq.~\eqref{eq:pot} reduces to balanced OT; for $\kappa>1$, 
the inequality 
allows part of the target mass to remain unmatched, 
concentrating transport on informative regions. 

\myparagraph{Dummy-source reformulation} 
To solve Eq.~\eqref{eq:pot} efficiently, we reformulate it into a balanced 
optimal transport 
problem 
via the \emph{dummy-source trick}~\cite{chapel2020partial}, as detailed in Appendix B. 
We first augment the cost matrix as
\begin{equation}
\mathbf{C}_{\mathrm{aug}}
=
\left[\mathbf{C}\;;\;\delta\,\mathbf{1}_n^\top\right],
\qquad \delta > 0 .
\end{equation}
where $\delta$ denotes a dummy-source cost introduced to ensure stable and differentiable computation, \([\,\cdot\,;\,\cdot\,]\) denotes vertical concatenation.
In practice we set $\delta=\mathrm{median}(\mathbf{C})\cdot\gamma$.
We then introduce an augmented transport plan $\boldsymbol{\Pi}\in\mathbb{R}_+^{(m+1)\times n}$ with an additional dummy row supplying $s=\kappa-1$ mass, which absorbs unused target capacity.
We further apply entropic regularization to provide a smooth and convex approximation to the OT objective:
\begin{equation}
\min_{\boldsymbol{\Pi}\ge 0}\ \langle \mathbf{C}_{\mathrm{aug}}, \boldsymbol{\Pi} \rangle - \varepsilon H(\boldsymbol{\Pi})
\quad \text{s.t.}\
\boldsymbol{\Pi}\mathbf{1}=[\boldsymbol{\mu};\,s],\
\boldsymbol{\Pi}^\top\mathbf{1}=\kappa\,\bar{\boldsymbol{\nu}}.
\label{eq:aug}
\end{equation}
Here, the entropic regularization term $H(\boldsymbol{\Pi})$, weighted by $\varepsilon$, 
enables efficient optimization via Sinkhorn iterations.

\myparagraph{Sinkhorn solver}
We define the Gibbs kernel $\mathbf{K}=\exp(-\mathbf{C}_{\mathrm{aug}}/\varepsilon)\in\mathbb{R}^{(m+1)\times n}$, which encodes pairwise affinities derived from the augmented transport cost and serves as the foundation for entropic regularization in the Sinkhorn iterations~\cite{sinkhornbase}. 
Starting from all-ones scalings $\mathbf{u}\in\mathbb{R}^{m+1}$ and $\mathbf{v}\in\mathbb{R}^{n}$, we alternate between:
\begin{equation}
\mathbf{u}^{(r+1)} = \frac{[\boldsymbol{\mu};\,s]}{\mathbf{K}\,\mathbf{v}^{(r)}}, \quad
\mathbf{v}^{(r+1)} = \frac{\kappa\,\bar{\boldsymbol{\nu}}}{\mathbf{K}^\top \mathbf{u}^{(r+1)}},
\label{eq:sink-v}
\end{equation}
to satisfy the augmented marginals at each iteration $r$. 
After $T$ iterations, the transport plan is recovered as
\begin{equation}
\boldsymbol{\Pi}^{(T)}=\mathrm{Diag}\!\big(\mathbf{u}^{(T)}\big)\,\mathbf{K}\,\mathrm{Diag}\!\big(\mathbf{v}^{(T)}\big).
\label{eq:plan}
\end{equation}
We discard the dummy row and retain the first $m$ rows of $\boldsymbol{\Pi}^{(T)}$, denoted as $\mathbf{T}_{\mathrm{real}}$, which satisfies 
$\mathbf{T}_{\mathrm{real}}\mathbf{1}_n=\boldsymbol{\mu}$ and $\mathbf{T}_{\mathrm{real}}^\top\mathbf{1}_m \le \kappa\,\bar{\boldsymbol{\nu}}$.
The Sinkhorn updates in Eq.~\eqref{eq:sink-v}–\eqref{eq:plan} are efficiently computed in parallel over candidate subsets using batched matrix–vector operations.
Finally, we compute the transport loss using only the real (non-dummy) mass:
\begin{equation}
\mathcal{L}_{\mathrm{OT}}=\langle \mathbf{C},\,\mathbf{T}_{\mathrm{real}}\rangle
=\sum_{i=1}^{m}\sum_{j=1}^{n}\mathbf{C}_{ij}\,\mathbf{T}_{\mathrm{real},ij},
\end{equation}
where the entropy term in Eq.~\eqref{eq:aug} serves internally to stabilize the Sinkhorn updates. This formulation enables flexible mass allocation, 
allowing selected samples to concentrate on representative regions while maintaining adequate coverage of the real distribution required for diffusion training.

\subsection{Statistical Regularization}
Although the one-sided partial OT cost effectively captures geometric alignment, it does not explicitly preserve distributional statistics or semantic clarity.
We therefore introduce lightweight regularization terms 
that stabilize the condensation process and enhance distributional fidelity.

\myparagraph{Mean–variance regularization}
We align the first- and second-order statistics (mean and variance) of feature representations between the selected subset and the full dataset:
\begin{equation}
\mathcal{L}_{\mathrm{sta}}
= \|\boldsymbol{\mu}_\mathcal{S}-\boldsymbol{\mu}_\mathcal{T}\|_2^2
+ \|\boldsymbol{\sigma}_\mathcal{S}-\boldsymbol{\sigma}_\mathcal{T}\|_2^2,
\end{equation}
where $(\boldsymbol{\mu}_\mathcal{S}, \boldsymbol{\sigma}_\mathcal{S}^2)$ and $(\boldsymbol{\mu}_\mathcal{T}, \boldsymbol{\sigma}_\mathcal{T}^2)$ denote the mean and variance of features computed from $\mathcal{S}_{c(e)}$ and $\mathcal{T}_{c(e)}$, respectively. 
This regularization encourages the subset to approximate the global statistics of the real data distribution.

\myparagraph{Confidence regularization}
To ensure semantic consistency and avoid selecting samples that would act as unreliable geometric anchors,
we introduce a lightweight confidence regularization term based on the predicted class probability:
\begin{equation}
\mathcal{L}_{\mathrm{conf}} = 
\frac{1}{m}\sum_{i=1}^m 
-\log p(c|\mathbf{x}_i),
\end{equation}
where $p(c|\mathbf{x}_i)$ denotes the predicted probability 
for the nominal class $c$.
This term serves as a semantic reliability constraint during selection,
discouraging low-confidence or off-class samples whose embeddings provide unreliable geometric evidence and would otherwise compromise the quality of geometry-preserving and distributional alignment.

The overall objective for each subset $\mathcal{S}_c$ 
is defined as:
\begin{equation}
\mathcal{L} =
\mathcal{L}_{\mathrm{OT}}
+ \alpha\mathcal{L}_{\mathrm{sta}}
+ \beta\mathcal{L}_{\mathrm{conf}}.
\label{eq:final_loss}
\end{equation}
The weights $\alpha$ and $\beta$ balance
the three loss terms.

\subsection{Two-stage Discrete Optimization Strategy}
Directly optimizing Eq.~\eqref{eq:final_loss} over the discrete selection space 
\[
\mathcal{S}_c \subset \mathcal{T}_c, \quad |\mathcal{S}_c| = m \ll |\mathcal{T}_c|
\]
is combinatorially intractable due to the exponential search space $\binom{|\mathcal{T}_c|}{m}$. 
We therefore adopt a two-stage optimization scheme 
integrating
progressive coverage and 
refinement. 

\myparagraph{Stage I: Greedy geometry-guided selection}
We construct the subset $\mathcal{S}_c$ in an incremental manner for initialization. 
Starting from an empty set $\mathcal{S}_c^{(0)}=\emptyset$, we evaluate the marginal gain at each step $t$ for adding each unselected candidate image 
${x}_k\in \mathcal{T}_c\setminus \mathcal{S}_c^{(t-1)}$:
\begin{equation}
\Delta \mathcal{L}({x}_k) 
= 
\mathcal{L}\big(\mathcal{S}_c^{(t-1)} \cup \{{x}_k\},\mathcal{T}_c\big)
- 
\mathcal{L}\big(\mathcal{S}_c^{(t-1)},\mathcal{T}_c\big),
\end{equation}
where $\mathcal{L}$ denotes the composite objective in Eq.~\ref{eq:final_loss}. 
The sample that minimizes the incremental cost is selected:
\begin{equation}
{x}_{t}^{*}
=
\underset{{x}_k\in \mathcal{T}_c\setminus \mathcal{S}_c^{(t-1)}}{\arg\min}
\ \Delta \mathcal{L}({x}_k),
\ \
\mathcal{S}_c^{(t)} = \mathcal{S}_c^{(t-1)} \cup \{{x}_t^*\}.
\end{equation}
This greedy process iterates until $|\mathcal{S}_c^{(t)}|=t=m$.
The objective $\mathcal{L}$ is efficiently computed in a batched manner.

\myparagraph{Stage II: Swap-based refinement}
While the greedy process provides a strong initialization, it may 
yield
suboptimal early selections.
We further refine the subset through pairwise swaps between selected and unselected samples. 
For each ${x}_i\in \mathcal{S}_c$ (iterated in fixed numerical order) and ${x}_j\in \mathcal{T}_c\setminus \mathcal{S}_c$ (evaluated in batched parallel form), 
we form a temporary subset
$\mathcal{S}_c' = (\mathcal{S}_c\setminus\{{x}_i\})\cup\{{x}_j\}$,
and compute the relative improvement $\Delta_{i\rightarrow j} $:
\begin{equation}
\Delta_{i\rightarrow j} 
= 
\mathcal{L}(\mathcal{S}_c',\mathcal{T}_c) - \mathcal{L}(\mathcal{S}_c,\mathcal{T}_c).
\end{equation}
If $\Delta_{i\rightarrow j}<0$, the swap is accepted before moving to next $i$ (if multiple candidates 
${x}_j$
 satisfy the condition, we select the one that provides the largest improvement):
\begin{equation}
\mathcal{S}_c \leftarrow \mathcal{S}_c',
\end{equation}
and the refinement proceeds for a fixed number of rounds or until no further improvement is observed, i.e.,
\begin{equation}
\forall i,j,\quad \Delta_{i\rightarrow j}\ge0.
\end{equation}
This refinement alleviates greedy bias and 
emphasizes sample-wise contribution to the global geometric alignment and consistency.
Together, the two stages provide an efficient discrete optimization procedure for our task, yielding compact subsets that remain faithful to the real
distribution. Convergence analyses 
are provided in Appendix C. The pseudocode of the full method is provided in Appendix I.

\section{Experiments}
\subsection{Experimental Settings}

\myparagraph{Datasets}
We conduct experiments on ImageNet-1K~\cite{imagenet}, training diffusion models using subsets of 10K, 50K, and 100K images, corresponding to about 0.8\%, 4\%, and 8\% of the full dataset, respectively. 
All images are center-cropped and resized to 256$\times$256 and 512$\times$512 resolutions following the ADM preprocessing pipeline~\cite{adm}.

\myparagraph{Networks}
Following D$^2$C~\cite{ddc}, we adopt two 
diffusion variants: DiT-L/2~\cite{dit} and SiT-L/2~\cite{ma2024sit}. Both models are trained from scratch on the selected subsets.

\myparagraph{Baselines}
We compare our method against a wide range of real subset selection
baselines, including 
Herding~\cite{welling2009herding}, K-Center~\cite{kcenter1,kcenter2}, 
Random Sampling, Random$^\dagger$ (with the attach phase), CCS~\cite{ccs},
DQ~\cite{dq}, and D$^2$C~\cite{ddc}. 
For fairness, Random$^\dagger$, CCS, DQ, D$^2$C, and our method all employ the same attach phase implementation as D$^2$C~\cite{ddc}, where each selected image is enriched with textual and visual embeddings during diffusion model training. Synthetic set generation baselines are compared in Appendix G.2.

\begin{table}[tb]
    \centering
    \caption{
Comparison of FID-50K across data budgets using DiT-L/2 on ImageNet 256$\times$256 (100k iters).
Baselines: Random, K-Center~\cite{kcenter1}, Herding~\cite{welling2009herding}, and D$^2$C~\cite{ddc}. 
Baseline results are taken from D$^2$C.}
\setlength{\tabcolsep}{8pt}  
    \resizebox{0.82\linewidth}{!}{
    \begin{tabular}{c|ccccc}
    \toprule
     Data Budget & Random & K-Center & Herding & D$^2$C & Ours \\
     \midrule
     0.8\% (10K) & 35.86 & 50.77 & 40.75 & 4.20 & \bf3.43\\
     4.0\% (50K) & 36.78 & 69.86 & 32.38 & 14.81& \bf11.01\\
     8.0\% (100K) & 41.02 & 71.31 & 36.37& 22.55& \bf17.09\\
    \bottomrule
    \end{tabular}}
    \label{fid1}

\end{table}

\begin{table}[tb]
\centering
    \caption{Comparison across different methods using DiT-L/2~\cite{dit} on ImageNet 256$\times$256 with 100K iterations and 10K data budget. We use 50K generated samples for evaluation. Random$^\dagger$ adopts the attach phase from D$^2$C~\cite{ddc}.}
    \resizebox{0.96\linewidth}{!}{
    \begin{tabular}{c|c|cccc}
\toprule
Method &Data Budget & FID$\downarrow$ & IS$\uparrow$ & Precision$\uparrow$ & Recall$\uparrow$ \\
\midrule
Random$^\dagger$ &0.8\% (10K) &4.63&263.1&0.70&0.26 \\
CCS~\cite{ccs} &0.8\% (10K) &5.45&364.9&0.77&0.21 \\
DQ~\cite{dq} &0.8\% (10K) & 4.56& 267.8 & 0.72 & 0.25\\
D$^2$C~\cite{ddc} &0.8\% (10K) &4.20&283.6&0.72&0.24\\
Ours &0.8\% (10K) &\bf3.43&\bf414.3&\bf0.78& 0.28\\
\bottomrule
\end{tabular}}
\label{256}
\end{table}

\myparagraph{Evaluation Metrics}
Following prior works~\cite{ddc,dit}, we evaluate the generative performance using Fréchet Inception Distance (FID)~\cite{fid}, Inception Score (IS)~\cite{is}, Precision, and Recall metrics. 
FID measures image fidelity and diversity, IS reflects semantic clarity, while Precision and Recall jointly assess generation fidelity and coverage.

The implementation details are provided in  Appendix E.

\subsection{Results and Discussions}

\myparagraph{Overall Comparison on ImageNet 256$\times$256}
We first evaluate our method on ImageNet~\cite{imagenet} $256\times256$ within the DiT-L/2~\cite{dit} setup.
As shown in Tables~\ref{fid1},~\ref{256}, and~\ref{256_more}, our method consistently achieves the lowest FID and highest IS across all data budgets, 
outperforming classical selection strategies such as Random, K-Center~\cite{kcenter2}, and Herding~\cite{welling2009herding}, as well as recent selection 
approaches like CCS~\cite{ccs}, DQ~\cite{dq} and D$^2$C~\cite{ddc}.
The improvement arises from two complementary mechanisms:
(1) OT establishes explicit geometric alignment between the full and selected sets, while its one-sided partial form adaptively reallocates transport mass to emphasize informative regions; and
(2) statistical 
alignment regularization enforces both global moment consistency (mean and variance) and implicit semantic agreement between the selected subset and the full dataset, stabilizing condensation and preserving balanced class semantics.
Together, these mechanisms yield subsets that are both geometrically faithful and statistically representative, enabling diffusion models to achieve high generative fidelity using only a small fraction of the original data. 

Here we fix the total number of training iterations at 100K to ensure a fair comparison between different condensation methods under the same training conditions. The purpose here IS not to compare different data budgets, but to evaluate which condensation method performs best in terms of FID and other metrics within the same number of iterations. By fixing the iteration count, we ensured consistency across methods, making the comparison between different condensation methods clear and direct. Data budget refers to the size of the subset that is selected for efficient training. The observed trend is a result of fixed-iteration convergence lag. With a constant batch size and iteration count (100K), a larger data budget (e.g., 100K vs 10K images) drastically reduces the number of "epochs" or updates per sample. Specifically, for a 10K subset, each sample is seen ~1280 times, whereas for a 100K subset, it is seen only ~128 times. Although larger budgets provide a higher performance ceiling, they require significantly more iterations to reach it. In the early stages of training, with the same number of iterations, smaller data budgets can perform better if the selection is well-optimized, as each sample is updated more frequently. This confirms our method’s core advantage: accelerating convergence by focusing the training process on a geometry-consistent core manifold.

\begin{table}[tb]
\centering
    \caption{Comparison across different methods and data budgets
using DiT-L/2~\cite{dit} on ImageNet 256$\times$256 with 100K iterations. We use 50K generated samples for evaluation.  
}
    \resizebox{0.96\linewidth}{!}{
    \begin{tabular}{c|c|cccc}
\toprule
Method &Data Budget & FID$\downarrow$ & IS$\uparrow$ & Precision$\uparrow$ & Recall$\uparrow$ \\
\midrule
Random$^\dagger$ &4.0\% (50K) &14.00&120.1&0.67&0.53 \\
CCS~\cite{ccs} &4.0\% (50K) &15.21&150.4&0.67&0.46 \\
DQ~\cite{dq} &4.0\% (50K) &13.56&123.3&0.67&0.52\\
D$^2$C~\cite{ddc} &4.0\% (50K) &14.81&131.8&\bf0.68&0.50\\
Ours &4.0\% (50K) &\bf11.01&\bf174.2&\bf0.68& \bf0.57\\
\midrule
Random$^\dagger$ &8.0\% (100K) &19.80&90.6&0.66&0.54 \\
CCS~\cite{ccs} &8.0\% (100K) &22.36&101.5&0.66&0.48 \\
DQ~\cite{dq} &8.0\% (100K) &20.16 & 84.5 & 0.64 &0.54\\
D$^2$C~\cite{ddc} &8.0\% (100K) &22.55&96.7&0.66&0.53\\
Ours &8.0\% (100K) &\bf17.09&\bf116.3&\bf0.67& \bf0.59\\
\bottomrule
\end{tabular}}
\label{256_more}
\end{table}

\begin{table}[tb]
\centering
    \caption{Comparison across 
    different methods using DiT-L/2~\cite{dit} on ImageNet~\cite{imagenet} 512$\times$512 with 100K iterations and 10K data budget. 
    Baseline results are taken from D$^2$C~\cite{ddc}.}
    \resizebox{0.96\linewidth}{!}{
    \begin{tabular}{c|c|cccc}
\toprule
Method &Data Budget & FID$\downarrow$ & IS$\uparrow$ & Precision$\uparrow$ & Recall$\uparrow$ \\
\midrule
Random &0.8\% (10K) &24.8&74.3&0.65&0.42 \\
D$^2$C~\cite{ddc} &0.8\% (10K) &14.8&109.2&0.63&0.52\\
Ours &0.8\% (10K) &\bf6.17&\bf451.0&\bf0.81& \bf0.67\\
\bottomrule
\end{tabular}}
\label{512}
\end{table}

\myparagraph{Results on Higher Image Resolution}
We further evaluate the advantages of our selected set at $512\times512$ resolution.
As shown in Table~\ref{512}, our method continues to outperform both random and D$^2$C selections, achieving substantial improvements in FID and IS.
This demonstrates that our 
strategy generalizes robustly across resolutions: even when the input resolution increases, the same selected subset still provides representative coverage of high-frequency details and global structures.
Such resolution robustness suggests that our method captures intrinsic, scale-consistent data semantics rather than 
memorizing low-resolution appearance patterns.
More results are provided in Appendix G.1.

\begin{table}[tb]
    \centering
    \caption{
Comparison of FID-50K across data budgets using SiT-L/2 on ImageNet 256$\times$256 (100k iters).
Baselines: Random, K-Center~\cite{kcenter1}, Herding~\cite{welling2009herding}, and D$^2$C~\cite{ddc}. 
Baseline results are taken from D$^2$C.}
\setlength{\tabcolsep}{8pt}  
    \resizebox{0.82\linewidth}{!}{
    \begin{tabular}{c|ccccc}
    \toprule
     Data Budget & Random & K-Center & Herding & D$^2$C & Ours \\
     \midrule
     0.8\% (10K) & 4.35 & 14.77 & 22.96 & 3.98 & \bf 3.56\\
     4.0\% (50K) & 31.13 & 61.66 & 29.11 & 11.21& \bf7.26\\
     8.0\% (100K) & 36.64 & 66.96 & 32.30 & 15.01& \bf8.83\\
    \bottomrule
    \end{tabular}}
    \label{fid2}
\end{table}

\myparagraph{Robustness across Diffusion Variants}
We further evaluate our condensation strategy on SiT-L/2~\cite{ma2024sit}, an interpolant-based diffusion variant of DiT-L/2~\cite{dit} that replaces the standard noise-prediction objective with a velocity-based interpolant formulation bridging diffusion and flow dynamics.
As shown in Table~\ref{fid2} and Table~\ref{256_sit}, our method continues to achieve consistent improvements over all baselines under the SiT setting, complementing the DiT results reported earlier. 
This demonstrates that the selected subsets preserve transferable data-level statistics that generalize across different diffusion formulations, showing that our condensation framework provides a reliable data substrate for diffusion-style generative training.

\begin{table}[tb]
\centering
    \caption{Comparison across different methods using SiT-L/2~\cite{ma2024sit} on ImageNet 256$\times$256 (100K training iterations and 100K data budget). We use 50K generated samples for evaluation. Random$^\dagger$ adopts the attach phase from D$^2$C~\cite{ddc}.}
    \resizebox{0.98\linewidth}{!}{
    \begin{tabular}{c|c|cccc}
\toprule
Method &Data Budget & FID$\downarrow$ & IS$\uparrow$ & Precision$\uparrow$ & Recall$\uparrow$ \\
\midrule
Random$^\dagger$ &8.0\% (100K) &15.58&125.9&0.73&0.50 \\
CCS~\cite{ccs} &8.0\% (100K) &15.41&142.9&\bf0.74&0.43 \\
DQ~\cite{dq} &8.0\% (100K) &15.24 & 121.9 & 0.73 & 0.50\\
D$^2$C~\cite{ddc} &8.0\% (100K) &15.01&127.6&0.73&0.48\\
Ours &8.0\% (100K) &\bf8.83&\bf167.5&\bf0.74& \bf0.53\\
\bottomrule
\end{tabular}}
\label{256_sit}
\end{table}

\begin{figure}
    \centering
    \includegraphics[width=0.49\linewidth]{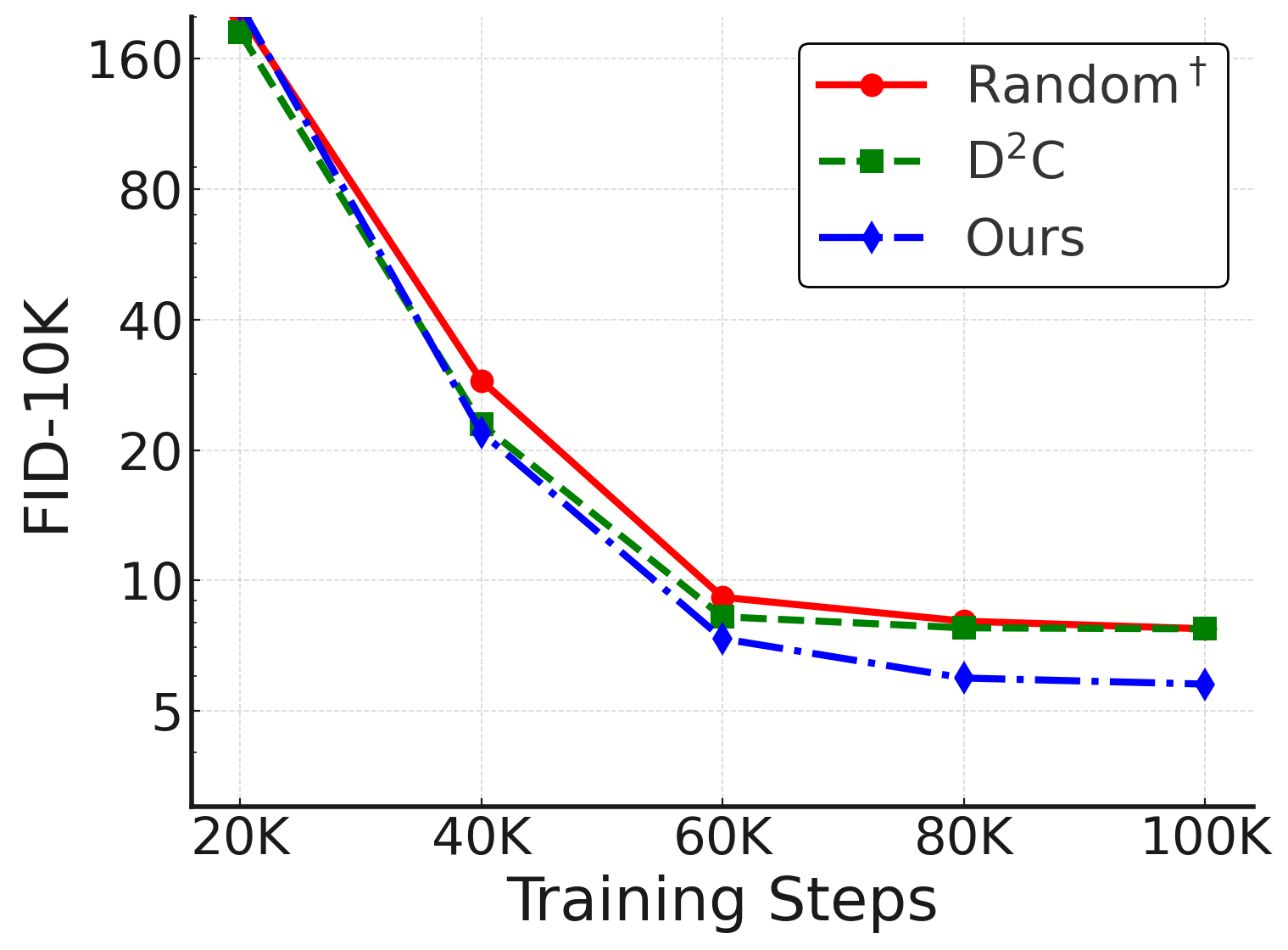}
    \includegraphics[width=0.49\linewidth]{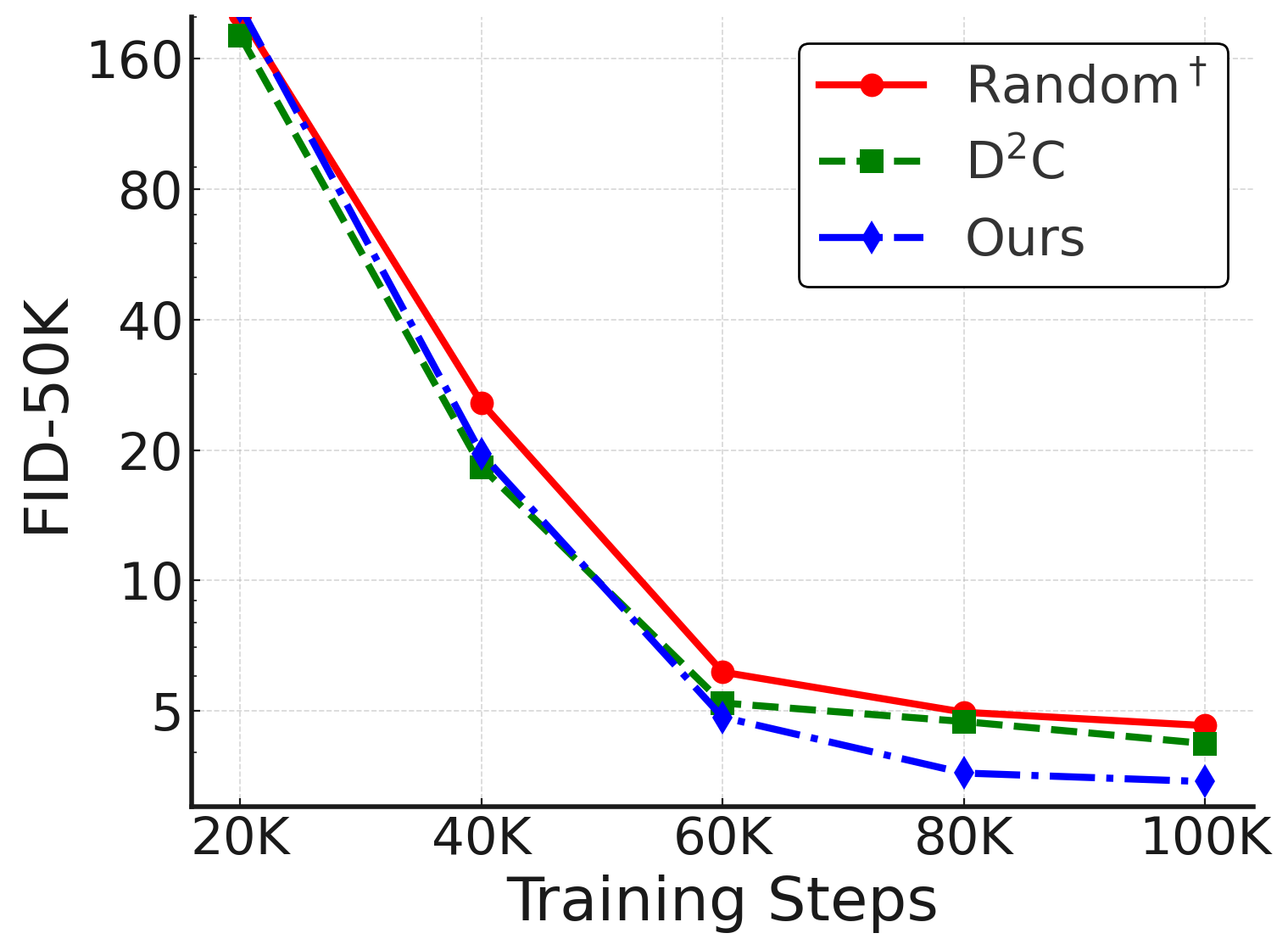}
    \caption{FID comparison between Random$^\dagger$, D$^2$C~\cite{ddc}, and Ours across different training steps, where the left and right panels respectively show FID-10K and FID-50K. 
    }
    \label{fidfigure}
\end{figure}

\begin{table}[tb]
\centering
    \caption{Comparison across different methods and data budgets
using DiT-L/2~\cite{dit} on ImageNet 256$\times$256 with 100K iterations. We use 10K generated samples for evaluation.  Random$^\dagger$ adopts the attach phase from D$^2$C~\cite{ddc}.}
    \resizebox{0.94\linewidth}{!}{
    \begin{tabular}{c|c|cccc}
\toprule
Method &Data Budget & FID$\downarrow$ & IS$\uparrow$ & Precision$\uparrow$ & Recall$\uparrow$ \\
\midrule
Random$^\dagger$ &0.8\% (10K) &7.73&263.1&0.69&0.68 \\
CCS~\cite{ccs} &0.8\% (10K)&7.81&373.3&0.77&0.58 \\
DQ~\cite{dq} &0.8\% (10K) &7.55 & 271.0 & 0.73 &\bf0.69\\
D$^2$C~\cite{ddc} &0.8\% (10K) &7.73 & 279.8 & 0.72 & \bf0.69\\
Ours &0.8\% (10K) &\bf5.76&\bf430.2&\bf0.78& \bf0.69\\
\midrule
Random$^\dagger$ &4.0\% (50K) &17.04&120.8&0.67&0.65 \\
CCS~\cite{ccs} &4.0\% (50K)&17.21&147.9&0.67&0.60 \\
DQ~\cite{dq} &4.0\% (50K) &16.83&122.8&0.67&0.65\\
D$^2$C~\cite{ddc} &4.0\% (50K) &16.28&131.1&\bf0.68&0.63\\
Ours &4.0\% (50K) &\bf13.73&\bf173.5&\bf0.68& \bf0.69\\
\bottomrule
\end{tabular}}
\label{256_10kfid}
\end{table}

\myparagraph{Evaluation under Different Protocols}
We further validate our approach under multiple evaluation protocols.
Table~\ref{256_10kfid} reports quantitative results using 10K generated samples to complement the standard 50K-sample evaluation in Tables~\ref{256} and \ref{256_more}.
In addition, Table~\ref{256_30kiter} investigates the effect of extended training iterations (up to 300K).
Across all evaluation settings, our method consistently achieves the best results, indicating that the improvements are intrinsic and not tied to a specific evaluation configuration.
Moreover, Figure~\ref{fidfigure} shows that our method reaches a substantially lower FID plateau compared with prior approaches, demonstrating that the selected subsets lead to more stable and faithful generative training.
Overall, these results confirm that our 
condensation 
strategy produces robust and high-quality subsets whose advantages persist across evaluation scales.

\begin{table}[tb]
\centering
    \caption{Comparison across different methods using DiT-L/2~\cite{dit} on ImageNet 256$\times$256 with 300K iterations and 50K data budget. We use 50K generated samples for evaluation. Random$^\dagger$ adopts the attach phase from D$^2$C~\cite{ddc}.}
    \resizebox{0.98\linewidth}{!}{
    \begin{tabular}{c|c|cccc}
\toprule
Method &Data Budget & FID$\downarrow$ & IS$\uparrow$ & Precision$\uparrow$ & Recall$\uparrow$ \\
\midrule
Random$^\dagger$ &4.0\% (50K) &6.02&188.5&0.69&0.58 \\
CCS~\cite{ccs} &4.0\% (50K) &5.97&259.3&0.70&0.51 \\
DQ~\cite{dq} &4.0\% (50K) &4.33&214.2&0.70&0.58\\
D$^2$C~\cite{ddc} &4.0\% (50K) &4.76&207.0&0.70&0.57\\
Ours &4.0\% (50K) &\bf3.34&\bf327.7&\bf0.71&\bf0.61\\
\bottomrule
\end{tabular}}
\label{256_30kiter}
\end{table}

\begin{table}[tb]
    \centering
     \caption{Ablation study of key components in our dataset condensation framework, conducted on ImageNet at 256$\times$256 resolution for 100K training iterations with a 10K-sample data budget.}
    \resizebox{0.89\linewidth}{!}{
    \begin{tabular}{c|cccc}
    \toprule
       Method  & FID$\downarrow$ & IS$\uparrow$ & Precision$\uparrow$ & Recall$\uparrow$  \\
       \midrule
      w/o $\mathcal{L}_{\mathrm{OT}}$ &3.82 & 414.1 & 0.72 & 0.26 \\
      w/o $\mathcal{L}_{\mathrm{sta}}$ &4.62&451.6&0.81&0.26\\
 w/o $\mathcal{L}_{\mathrm{conf}}$   &3.55&337.0&0.79&0.27 \\
 balanced OT (w/o partial) &3.54&413.9&0.75&0.28 \\
  Ours (full) &3.43&414.3&0.78& 0.28\\ 
  \bottomrule
    \end{tabular}}
    \label{tab:ablation1}
\end{table}

\begin{table}[tb]
    \centering
    \caption{Ablation study on Stage II optimization (swap-based refinement) on ImageNet 256$\times$256 for 100K iterations.}
    \resizebox{0.92\linewidth}{!}{
    \begin{tabular}{c|c|cccc}
\toprule
Method &Data Budget & FID$\downarrow$ & IS$\uparrow$ & Precision$\uparrow$ & Recall$\uparrow$ \\
\midrule
     w/o Stage II  &0.8\% (10K)&3.82&\bf417.6&0.74&0.25 \\
     w Stage II &0.8\% (10K)&\bf3.43&414.3&\bf0.78& \bf0.28 \\
     \midrule
          w/o Stage II &4.0\% (50K)&12.87&172.4&0.61&\bf0.57 \\
     w Stage II &4.0\% (50K)&\bf11.01&\bf174.2&\bf0.68& \bf0.57 \\
    \bottomrule
    \end{tabular}}
    \label{ablation2}
\end{table}

\begin{table}[tb]
\caption{Impact of $\kappa$ (capacity scaling) and $\gamma$ (dummy-source) on ImageNet 256$\times$256 (10K data budget, 100K iterations).}
    \centering
    \resizebox{0.65\linewidth}{!}{
    \begin{tabular}{c|ccccc}
    \toprule
     $\kappa$ &1.01&1.03&1.05&1.07&1.10  \\
    \midrule
     FID-50K  &3.49 &3.45&3.43&3.46&3.58 \\
    \midrule
    $\gamma$& 0.3 & 0.5 & 0.7 & 1.0 & 2.0 \\
    \midrule
    FID-50K&3.43 &3.43&3.43&3.46&3.47 \\
    \bottomrule
    \end{tabular}}
    \label{ablation3}
\end{table}

\begin{figure}[tb]
    \centering
    \includegraphics[width=\linewidth]{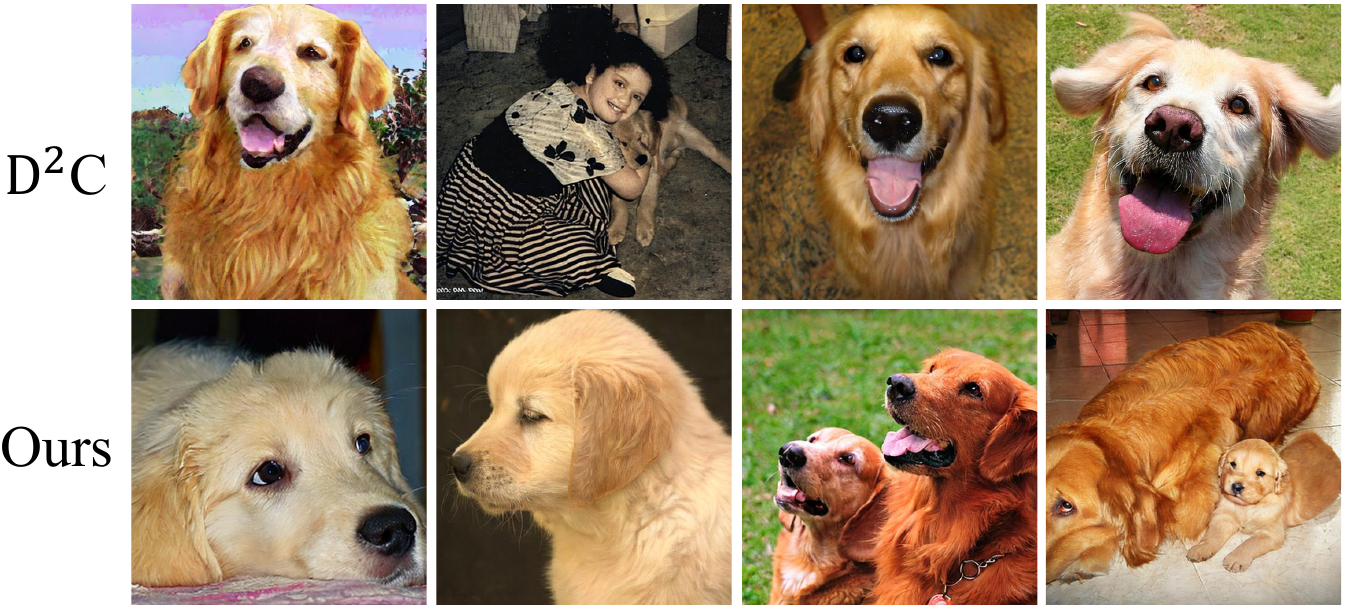}
    \caption{Visualization of generated images (golden retriever).}
    \label{fig:compare}
\end{figure}

\begin{table}[tb]
    \centering
    \caption{Real subset selection time on ImageNet with a 10K data budget, measured on NVIDIA RTX~3090 GPUs.}
    \setlength{\tabcolsep}{3pt}  
    \resizebox{1\linewidth}{!}{
    \begin{tabular}{c|ccc}
    \toprule
       Number of GPUs  &DQ~\cite{dq} &D$^2$C~\cite{ddc}&Ours \\
        \midrule
       1  & 30.4 hours&41.9 hours &\bf 5.5 hours \\
        8 &238 mins &314 mins &\bf 96 mins\\
        \bottomrule
    \end{tabular}}
    \label{tab:time}
\end{table}

\myparagraph{Ablation Studies}
We conduct ablation studies to evaluate the contribution of each component.
As shown in Table~\ref{tab:ablation1}, removing  $\mathcal{L}_{\mathrm{OT}}$ degrades FID and reduces both precision and recall, indicating that the geometric constraint is essential for promoting coverage and faithful geometric 
alignment between the condensed and real distributions.
Replacing the one-sided partial OT
with the classical balanced OT degrades performance, indicating that 
relaxing alignment on peripheral samples yields more coherent and geometry-consistent subsets under small budgets.
Further analysis of partial transport 
behavior is provided in Appendix~G.4.
Eliminating $\mathcal{L}_{\mathrm{sta}}$ greatly worsens FID, verifying that enforcing global distribution prevents mode imbalance and stabilizes training.
Removing $\mathcal{L}_{\mathrm{conf}}$ primarily decreases IS, demonstrating that the semantic constraint improves class-level clarity by filtering uncertain or low-confidence samples. More discussion on 
confidence regularization can be found in Appendix~G.6.
Table~\ref{ablation2} further evaluates Stage II 
refinement.
Introducing this stage consistently improves FID, precision, and recall compared with its absence, confirming that iterative exchange refines subset geometry, strengthens manifold coverage, and preserves compactness.

\myparagraph{Sensitivity Analysis}
Table~\ref{ablation3} shows the effect of varying the capacity scaling factor $\kappa$ and the dummy-source parameter $\gamma$, which determines the dummy-source cost $\delta$.
The results indicate that FID remains stable across a wide range of settings, demonstrating low sensitivity and robust optimization behavior. 
However, an excessively large $\kappa$ may degrade performance, as discarding too many samples in partial OT weakens the retained information content.
Analyses on 
$\alpha, \beta, \varepsilon, T$
are provided in Appendix F.

\myparagraph{Visualizations} 
Figures~\ref{figure1} and~\ref{fig:compare} respectively show the selected sets and the images generated by DiT models trained on them. The subset produced by D$^2$C~\cite{ddc} sometimes contains 
visually homogeneous exemplars with 
consistent style, and in other cases includes semantically ambiguous selections that do not clearly represent the class. As a result, 
DiT trained on this subset exhibits reduced coverage of intra-class variation and 
may produce samples that either collapse to a few modes or drift semantically. In contrast, our selected set provides broader manifold coverage and clearer 
exemplars, enabling the DiT to synthesize images with richer diversity and faithful class characteristics. See Appendix G.4 and H for more visualizations.

\myparagraph{Runtime Analysis}
We measure the selection runtime on ImageNet~\cite{imagenet} with a 10K data budget.
As summarized in Table~\ref{tab:time}, our method substantially reduces wall-clock time compared with DQ~\cite{dq} and D$^2$C~\cite{ddc}.
The detailed runtime breakdown in Table~\ref{tab:timesplit} shows that data loading and preprocessing dominates computation (about 65\%), while greedy initialization and swap refinement together
account for the remaining time.
This demonstrates that the discrete optimization stages are lightweight and practical in use.

\begin{table}[htbp]
    \centering
    \caption{Runtime breakdown of our pipeline on ImageNet with a 10K data budget, measured on 8$\times$RTX~3090 GPUs. 
    }
    \setlength{\tabcolsep}{4pt}  
    \label{tab:timesplit}
    \resizebox{0.99\linewidth}{!}{
        \begin{tabular}{l|ccc}
            \toprule
            Component & Data 
            \& Feature 
            & Stage~I (Greedy Init) & Stage~II (Swap Refine) \\
            \midrule
            Percentage & 65.1\% & 9.0\% & 25.9\% \\
            \bottomrule
        \end{tabular}
    }
\end{table}

\section{Conclusion}
This work presents a geometry-aware alignment-driven framework for dataset condensation in diffusion model training. We formulate the problem as a constrained discrete optimization task, where a one-sided 
POT objective selectively aligns informative and stable regions of the data manifold. By regularizing feature moments and class consistency, we further improve statistical and semantic coherence. The proposed two-stage optimization, consisting of geometry-guided greedy selection and swap-based refinement, efficiently approximates the global optimum with low computational cost. Experiments across different diffusion variants, subset sizes, training steps, and evaluation settings demonstrate consistent improvements in diffusion fidelity.

\section*{Impact Statement}
This paper presents work whose goal is to improve the data efficiency of diffusion model training through dataset condensation. By reducing the amount of training data required, the proposed approach may help lower computational cost and energy consumption, supporting more efficient use of large-scale generative models.
As with many data reduction and generative modeling techniques, applying dataset condensation may alter the underlying data distribution and requires careful consideration in practice. Potential risks related to bias, privacy, and misuse are not unique to this work and can be addressed through established practices such as responsible dataset curation, evaluation, and transparent reporting.
Overall, this work advances research in data-efficient generative modeling. We do not foresee any novel or uniquely harmful societal impacts beyond those already associated with diffusion models and dataset condensation methods more broadly. A more detailed discussion of broader impacts is provided in Appendix J.

\bibliography{egbib}
\bibliographystyle{icml2026}

\newpage
\appendix
\onecolumn
\vspace{2em}
\begin{center}
{\LARGE\bfseries Appendix of Geometry-Aware Dataset Condensation for Diffusion Model Training}
\end{center}

\paragraph{Overview}
This appendix provides extended technical details and supplementary analyses to complement the main text. 
We begin with additional related works on optimal transport in \textbf{Section A}, followed by a rigorous dummy-source reformulation together with entropic Sinkhorn validation in \textbf{Section B}. 
\textbf{Section C} presents the theoretical convergence analysis of our two-stage selection framework, while \textbf{Section D} compiles all mathematical symbols used throughout the paper. 
Implementation details and additional hyperparameter sensitivity studies are provided in \textbf{Section E} and \textbf{Section F}, respectively. 
Further experimental analyses are collected in \textbf{Section G}, including complementary $512\times512$ experiments in \textbf{Section G.1}, comparisons with additional baselines in \textbf{Section G.2}, ablations on the feature encoder and classifier in \textbf{Section G.3}, an in-depth examination of one-sided partial optimal transport in \textbf{Section G.4}, the adaptation of existing distribution-matching objectives within our framework in \textbf{Section G.5}, clarification for the meaning of ``geometry'' in \textbf{Section G.6}, and more discussion on confidence regularization in \textbf{Section G.7}. 
We then present additional visualization results in \textbf{Section H},
provide the full pseudocode of our method in \textbf{Section I},
which includes the end-to-end condensation pipeline and its core subroutines,
and conclude with a discussion of broader impacts in \textbf{Section J}.

\begin{tcolorbox}[
    colback=gray!10,
    colframe=gray!50!black,
    title={Key Highlights},
    fontupper=\small,
    label={box:reasoning_instruction}
]
1. Our method consistently outperforms existing real subset selection methods
and synthetic set generation approaches
in dataset condensation for diffusion model training. (see Section~G.2). 

2. Our method is feature-encoder and classifier agnostic (see Section~G.3). 

3. We provide an analysis of the two-stage discrete optimization scheme, including upper-bound characterizations under mild assumptions (see Appendix~C).

4. Our framework remains stable across broad hyperparameter ranges, greatly simplifying tuning (see Section~F).

5. Our one-sided partial OT suppresses peripheral low-density samples that would otherwise distort distance geometry and pull the condensed set away from the core manifold, preventing biased global and inconsistent local alignment (see Section~G.4).

6. Our alignment-driven selection framework is compatible with some existing distribution-matching-based synthetic set generation objectives and allows them to be seamlessly adapted to discrete real subset selection (see Section~G.5).

\end{tcolorbox}

\section{Related Works on Optimal Transport}
\label{rel}

Optimal Transport (OT) provides a principled geometric framework for comparing probability distributions by computing the minimal cost required to transform one distribution into another. 
Compared to divergences such as KL and Jensen–Shannon, OT remains meaningful even when supports do not overlap and thus yields a more faithful notion of distributional discrepancy~\cite{wd,tnnls3}. 
The resulting Wasserstein distance has been widely adopted in image or mesh generation~\cite{wgan1,wgan2,meshflow}, causal discovery~\cite{causal,causal1}, unsupervised learning~\cite{luo2025neighbor,un2,un4}, and reinforcement learning~\cite{re1,re2,re3,re4}, but its exact computation is prohibitive in high dimensions. 
Entropy-regularized OT, or Sinkhorn distance~\cite{sinkhornbase}, alleviates this limitation and has enabled scalable OT-based methods in domain adaptation~\cite{otda,zeng2024hierarchical,da2}, classification~\cite{nguyen2025lightspeed,jin2025semi}, and knowledge distillation~\cite{sink,sink2,sinkd}.

OT has also been exploited to improve diffusion models. 
AOT~\cite{kim2024aot} reduces the curvature of the ODE trajectories and truncation errors during sampling via Approximated OT, leading to fewer sampling steps and improved stability. 
DPM-OT~\cite{li2023dpm} instead applies OT at the sampling stage, learning a direct mapping from noise to data that substantially shortens the reverse diffusion trajectory. 
Both AOT and DPM-OT apply OT on the model side to reshape the generative dynamics, which primarily affects the sampling process. In contrast, our approach adopts a data-centric perspective and uses OT to decide which real samples should be included during training.

Partial optimal transport (POT) extends classical OT by transporting only a fixed amount of mass, inducing an active region with a free boundary. 
Caffarelli and McCann~\cite{Caffarelli2010Free} and Figalli~\cite{Figalli2010Partial} established existence, uniqueness, and regularity for this problem, while S{\'e}journ{\'e} et al.~\cite{Sejourne2023Unbalanced} positioned POT within the broader theory of generalized and unbalanced OT. 
Algorithmically, Chapel et al.~\cite{chapel2020partial} leveraged POT for robust positive–unlabeled learning, and Bai et al. developed scalable variants via Sliced Optimal Partial Transport~\cite{Bai2023SlicedOPT} and Linear Optimal Partial Transport Embedding~\cite{Bai2023LOPT}. 
POT has further supported representation alignment in universal domain adaptation~\cite{Yang2023PPOT} and open-set OOD detection~\cite{Ren2024POTOSSLOOD}. 

Most existing POT formulations relax mass constraints on \emph{both} sides to allow bidirectional partial matching. 
In contrast, our task imposes an inherently asymmetric structure: the condensed set must fully participate in matching, while only a subset of the full dataset should be used. 
We therefore adopt a \emph{one-sided} partial OT formulation tailored to this geometry, and design a discrete optimization scheme that balances convergence stability with practical selection efficiency.

\section{Dummy-source Reformulation and Validation}
\label{subsec:dummy-source-derivation}

Recall Eq.~(3) in the main text defines a one-sided partial OT where the source mass is fully transported while the target side is relaxed by a capacity factor~$\kappa \ge 1$:
\begin{equation}
\min_{\boldsymbol{\pi} \ge 0} \ \langle \mathbf{C}, \boldsymbol{\pi} \rangle
\quad \text{s.t.} \quad 
\boldsymbol{\pi} \mathbf{1}_n = \boldsymbol{\mu}, \ 
\boldsymbol{\pi}^\top \mathbf{1}_m \le \kappa \bar{\boldsymbol{\nu}}.
\label{3}
\end{equation}
Here $\mathbf{C} \in \mathbb{R}^{m \times n}$ is the pairwise cost matrix, and $\boldsymbol{\mu} = \tfrac{1}{m}\mathbf{1}_m$, $\boldsymbol{\nu} = \tfrac{1}{n}\mathbf{1}_n$ denote the reference marginals on the source and target sides, respectively. The relaxation factor $\kappa$ allows a portion of the target mass to remain unmatched.

\paragraph{Step 1: Introduce slack on the target side.}
Define the unused target capacity as a nonnegative slack vector:
\begin{equation}
\mathbf{r} \triangleq \kappa \bar{\boldsymbol{\nu}} - \boldsymbol{\pi}^\top \mathbf{1}_m \in \mathbb{R}^n_{\!+}.
\label{eq:slack}
\end{equation}
Left-multiplying $\mathbf{1}_n^\top$ and using $\boldsymbol{\pi}\mathbf{1}_n = \boldsymbol{\mu}$ yields
\begin{equation}
\mathbf{1}_n^\top \mathbf{r}
= \kappa\, \mathbf{1}_n^\top \bar{\boldsymbol{\nu}} - \mathbf{1}_m^\top \boldsymbol{\pi} \mathbf{1}_n
= \kappa - 1
\triangleq s.
\label{eq:total-slack}
\end{equation}
Hence, the total slack corresponds to a dummy supply $s = \kappa - 1$.

\paragraph{Step 2: Augment the transport plan and marginals.}
Stack $\mathbf{r}^\top$ under $\boldsymbol{\pi}$ to form an augmented plan:
\begin{equation}
\boldsymbol{\Pi} =
\begin{bmatrix}
\boldsymbol{\pi}\\
\mathbf{r}^\top
\end{bmatrix}
\in \mathbb{R}^{(m+1)\times n}_{\!+}, \qquad
\tilde{\boldsymbol{\mu}} =
\begin{bmatrix}
\boldsymbol{\mu}\\ s
\end{bmatrix}, \qquad
\tilde{\boldsymbol{\nu}} = \kappa \bar{\boldsymbol{\nu}}.
\label{eq:aug-plan}
\end{equation}
Then the balanced marginal constraints become
\begin{equation}
\boldsymbol{\Pi} \mathbf{1}_n = \tilde{\boldsymbol{\mu}}, \qquad
\boldsymbol{\Pi}^\top \mathbf{1}_{m+1} = \tilde{\boldsymbol{\nu}},
\label{eq:balanced-marginals}
\end{equation}
which are algebraically equivalent to those in Eq.~(3): the second equality implies
$\boldsymbol{\pi}^\top \mathbf{1}_m = \kappa \bar{\boldsymbol{\nu}} - \mathbf{r} \le \kappa \bar{\boldsymbol{\nu}}$.

\paragraph{Step 3: Augment the cost matrix.}
To absorb unmatched target mass, append a dummy-source row with constant cost~$\delta>0$:
\begin{equation}
\mathbf{C}_{\mathrm{aug}} =
\begin{bmatrix}
\mathbf{C}\\
\delta\,\mathbf{1}_n^\top
\end{bmatrix}.
\label{eq:aug-cost}
\end{equation}
The objective becomes
\begin{equation}
\langle \mathbf{C}_{\mathrm{aug}}, \boldsymbol{\Pi} \rangle
= \langle \mathbf{C}, \boldsymbol{\pi} \rangle
+ \delta\, \mathbf{1}_n^\top \mathbf{r}
= \langle \mathbf{C}, \boldsymbol{\pi} \rangle + \delta s,
\label{eq:aug-objective}
\end{equation}
where the additional term $\delta s$ is constant w.r.t.~$\boldsymbol{\pi}$. Thus, minimizing Eq.~(3) is equivalent to minimizing:
\begin{equation}
\min_{\boldsymbol{\Pi} \ge 0} \ \langle \mathbf{C}_{\mathrm{aug}}, \boldsymbol{\Pi} \rangle
\quad \text{s.t.} \quad
\boldsymbol{\Pi}\mathbf{1}_n = \tilde{\boldsymbol{\mu}}, \ 
\boldsymbol{\Pi}^\top \mathbf{1}_{m+1} = \tilde{\boldsymbol{\nu}}.
\label{eq:balanced-OT}
\end{equation}
Eqs.~(\ref{eq:slack})–(\ref{eq:balanced-OT}) show that the feasible sets of Eq.~\eqref{3} and Eq.~(\ref{eq:balanced-OT}) are in one-to-one correspondence, and the objectives differ by a constant $\delta s$. Hence both problems share identical minimizers for the real transport plan $\boldsymbol{\pi}$.

\paragraph{Step 4: Entropic regularization and Sinkhorn formulation.}
For differentiability and computational efficiency, we add an entropy term
$H(\boldsymbol{\Pi}) = -\sum_{ij}\boldsymbol{\Pi}_{ij}(\log \boldsymbol{\Pi}_{ij}-1)$
with weight $\varepsilon > 0$, yielding the entropic OT:
\begin{equation}
\min_{\boldsymbol{\Pi} \ge 0} \ 
\langle \mathbf{C}_{\mathrm{aug}}, \boldsymbol{\Pi} \rangle
- \varepsilon H(\boldsymbol{\Pi})
\quad \text{s.t.} \quad
\boldsymbol{\Pi}\mathbf{1}_n = \tilde{\boldsymbol{\mu}}, \ 
\boldsymbol{\Pi}^\top \mathbf{1}_{m+1} = \tilde{\boldsymbol{\nu}},
\label{5}
\end{equation}
which corresponds exactly to Eq.~(5) in the main text.
The optimal solution admits the Gibbs–Sinkhorn form:
\begin{equation}
\boldsymbol{\Pi}^\varepsilon
= \mathrm{diag}(\mathbf{u})\,
\mathbf{K}\,
\mathrm{diag}(\mathbf{v}),
\qquad
\mathbf{K} = \exp\!\left(-\tfrac{\mathbf{C}_{\mathrm{aug}}}{\varepsilon}\right),
\label{eq:sinkhorn}
\end{equation}
where $(\mathbf{u}, \mathbf{v})$ are the scaling vectors enforcing
$\tilde{\boldsymbol{\mu}}, \tilde{\boldsymbol{\nu}}$. As $\varepsilon \!\to\! 0$, the entropic solution
$\boldsymbol{\Pi}^\varepsilon$ converges to the linear programming optimum of
Eq.~(\ref{eq:balanced-OT}), thus recovering the original partial OT formulation.

\smallskip
\noindent
\textit{On the role and effect of $\delta$.}
Even with $\delta=0$, the dummy row can only carry a fixed total mass $s=\kappa-1$.
However, when $\varepsilon$ is small and $\mathbf C$ spans a wide range,
$K_{m+1,j}=e^{-\delta/\varepsilon}$ with $\delta=0$ may be much larger than many real entries,
leading to ill-conditioned scaling (extreme $u,v$).  An excessively large $\delta$ makes the dummy row numerically inactive
($K_{m+1,j}\!\approx\!0$) yet still required to carry a fixed total mass $s$,
forcing the algorithm to balance an unrealistically high-cost assignment
and potentially destabilizing the Sinkhorn updates.
Hence $\delta$ should remain within the typical scale of the transport costs.

\paragraph{Remarks.}
(i) When $\kappa=1$, the dummy source vanishes ($s=0$) and Eq.~\eqref{5} reduces to balanced OT.  
(ii) The dummy-source cost $\delta$ is a positive constant ensuring numerical stability; in practice, $\delta = \gamma \cdot \mathrm{median}(\mathbf{C})$.  
(iii) The last row of $\boldsymbol{\Pi}$ corresponds solely to the dummy source, while the first $m$ rows represent the real transport plan used in our loss computation.

\section{Convergence Analysis}
\label{subsec:convergence}

\paragraph{Setup and notation.}
Fix a class $c$ with ground set $\mathcal{T}_c$ (all candidate real samples) and target subset size $m \ll |\mathcal{T}_c|$.
Let $S_c^{(t)} \subset \mathcal{T}_c$ denote the subset after $t$ greedy additions, with $|S_c^{(t)}|=t$ and $S_c^{(0)}=\varnothing$.
Our composite loss is $\mathcal{L}(S_c,\mathcal{T}_c)$ and we define the utility
\begin{equation}
F(S_c) \triangleq -\,\mathcal{L}(S_c,\mathcal{T}_c).
\end{equation}
Larger $F$ means better performance. Let $S_c^\star \in \arg\max_{|S|\le m} F(S)$.

For any $A \subseteq B \subseteq \mathcal{T}_c$ and $X \subseteq \mathcal{T}_c \setminus B$, the \emph{weak submodularity ratio}~\cite{das2011submodular,elenberg2018restricted} is
\begin{equation}
\gamma_{\mathrm{wm}}
\ :=\ \inf_{A\subseteq B,\; X\cap B=\varnothing}
\frac{\sum_{x\in X} \big[F(B\cup\{x\})-F(B)\big]}{F(B\cup X)-F(B)} \ \in\ (0,1] .
\label{eq:def-gammawm}
\end{equation}

\subsection{Stage I (Greedy Geometry-guided Selection).}
At iteration $t$, add $x_t^\star \in \mathcal{T}_c \setminus S_c^{(t-1)}$ maximizing $F(S_c^{(t-1)} \cup \{x\}) - F(S_c^{(t-1)})$.
Under weak submodularity~\cite{das2011submodular, elenberg2018restricted}, for $t=1,\dots,m$,
\begin{equation}
F(S_c^{(t)}) - F(S_c^{(t-1)}) 
\ \ge\ \frac{\gamma_{\mathrm{wm}}}{m}\,\big(F(S_c^\star) - F(S_c^{(t-1)})\big).
\label{eq:stage1_one_step}
\end{equation}
Let $\Delta_t := F(S_c^\star)-F(S_c^{(t)})$ be the optimality gap. Then
\begin{equation}
\Delta_t \ \le\ \Big(1-\frac{\gamma_{\mathrm{wm}}}{m}\Big)\Delta_{t-1},\qquad t=1,\dots,m.
\label{eq:stage1_gap_contract}
\end{equation}

\paragraph{Gap bound relative to $S_c^{(1)}$.}
Unrolling \eqref{eq:stage1_gap_contract} from $t=2$ gives, for $t\ge2$,
\begin{equation}
\Delta_t \ \le\ \Big(1-\frac{\gamma_{\mathrm{wm}}}{m}\Big)^{t-1}\,\Delta_{1}.
\label{eq:stage1_gap_from_S1}
\end{equation}
Equivalently,
\begin{equation}
F(S_c^{(t)}) 
\ \ge\ 
F(S_c^\star) \;-\; \Big(1-\frac{\gamma_{\mathrm{wm}}}{m}\Big)^{t-1}\!\Big(F(S_c^\star)-F(S_c^{(1)})\Big),
\quad t=2,\dots,m ,
\label{eq:stage1_F_from_S1}
\end{equation}
or in loss form,
\begin{equation}
\mathcal{L}(S_c^{(t)},\mathcal{T}_c) - \mathcal{L}(S_c^\star,\mathcal{T}_c)
\ \le\
\Big(1-\frac{\gamma_{\mathrm{wm}}}{m}\Big)^{t-1}
\Big[\mathcal{L}(S_c^{(1)},\mathcal{T}_c)-\mathcal{L}(S_c^\star,\mathcal{T}_c)\Big],
\quad t=2,\dots,m .
\label{eq:stage1_L_from_S1}
\end{equation}
At $t=m$,
\begin{equation}
\boxed{
\mathcal{L}(S_c^{(m)},\mathcal{T}_c) - \mathcal{L}(S_c^\star,\mathcal{T}_c)
\ \le\
\Big(1-\frac{\gamma_{\mathrm{wm}}}{m}\Big)^{m-1}
\Big[\mathcal{L}(S_c^{(1)},\mathcal{T}_c)-\mathcal{L}(S_c^\star,\mathcal{T}_c)\Big].
}
\label{eq:stage1_m_from_S1}
\end{equation}
This quantifies the contraction of the residual gap after the first greedy addition.

\subsection{Stage II (Swap-based Refinement)}
\label{subsec:stage2_curvature}

\paragraph{Assumptions for Stage II.}
Throughout Stage II we assume a \emph{restricted monotonicity} (RM) condition in size-$m$ neighborhoods.
Let $\mathcal{F}_m := \{ S \subseteq \mathcal{T}_c : |S|=m \}$. 

\paragraph{Restricted monotonicity (RM) at size-$m$ contexts.}
We say $F$ satisfies RM if for any $U\in\mathcal{F}_m$ and any $X\subseteq \mathcal{T}_c\setminus U$ with $|X|\le m$,
\begin{equation}
F(U\cup X)\ \ge\ F(U).
\label{eq:RM}
\end{equation}
This is weaker than global monotonicity and is the only monotonicity-type condition required in Lemma~\ref{lem:weak-submod}.

\emph{Restricted block--singleton curvature.}
We say $F$ has curvature~\cite{IyerJegelkaBilmes2013CurvatureNIPS,ConfortiCornuejols1984GreedyCurvature,Vondrak2010SubmodularityCurvature} $\kappa_b\in[0,1)$ on $\mathcal{F}_m$ if
\begin{equation}
\sum_{v\in V}\!\big(F(U)-F(U\setminus\{v\})\big)
\ \le\
\frac{1}{1-\kappa_b}\,\big(F(U)-F(U\setminus V)\big),
\qquad \forall\,U\in\mathcal{F}_m,\ \forall\,V\subseteq U .
\label{eq:def-kappa-b}
\end{equation}
This depends only on differences of $F$ and is invariant under constant shifts.

\emph{Restricted approximate diminishing-returns (DR) ratio.}
We say $F$ admits $\alpha_{\mathrm{DR}}\in(0,1]$ at size-$m$ contexts if
\begin{equation}
F(T\cup\{x\})-F(T)\ \ge\ \alpha_{\mathrm{DR}}\big(F(U\cup\{x\})-F(U)\big),
\quad \forall\,U\in\mathcal{F}_m,\ \forall\,T\subseteq U,\ \forall\,x\notin U .
\label{eq:def-alpha-DR}
\end{equation}
When $\alpha_{\mathrm{DR}}=1$, this is exact DR (submodularity).

\paragraph{Stage II setup.}
Let $S_g=S_c^{(m)}$ be the greedy baseline, and let Stage II terminate at a $1$-swap stable set $\widehat S_c$:
\begin{equation}
F\big((\widehat S_c\setminus\{i\})\cup\{j\}\big)\ \le\ F(\widehat S_c),
\qquad \forall\,i\in\widehat S_c,\ \forall\,j\in\mathcal{T}_c\setminus\widehat S_c .
\label{eq:1swap-stable}
\end{equation}
Define $A=S_c^\star\setminus\widehat S_c$ and $B=\widehat S_c\setminus S_c^\star$ with $|A|=|B|=r$, and fix any bijection $\phi:A\to B$.
Let the \emph{local} weak submodularity ratio at base $\widehat S_c$ and block $A$ be denoted $\gamma_{\mathrm{wm}}(\widehat S_c,A)$; clearly $\gamma_{\mathrm{wm}} \le \gamma_{\mathrm{wm}}(\widehat S_c,A)$ where $\gamma_{\mathrm{wm}}$ is the global infimum \eqref{eq:def-gammawm}.

\begin{lemma}[Add vs.\ remove under $1$-swap stability with approximate DR]\label{lem:add-remove-approx}
If $F$ satisfies \eqref{eq:def-alpha-DR}, then for every $x\in A$ and $y=\phi(x)$,
\begin{equation}
F(\widehat S_c\cup\{x\})-F(\widehat S_c)
\ \le\
\frac{1}{\alpha_{\mathrm{DR}}}\,\Big(F(\widehat S_c)-F(\widehat S_c\setminus\{y\})\Big).
\label{eq:add-le-remove-approx}
\end{equation}
\end{lemma}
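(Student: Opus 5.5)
The plan is to combine the $1$-swap stability condition \eqref{eq:1swap-stable} with the approximate DR inequality \eqref{eq:def-alpha-DR}, applied at the base $T=\widehat S_c\setminus\{y\}$ with context $U=\widehat S_c\in\mathcal{F}_m$. The key observation is that the swap $(\widehat S_c\setminus\{y\})\cup\{x\}$ differs from $\widehat S_c$ by exactly one exchange. Moreover $x\in A$ lies outside $\widehat S_c$ and $y=\phi(x)\in B\subseteq\widehat S_c$, so the swap is admissible in \eqref{eq:1swap-stable}.

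\textbf{Step 1 (stability).} Taking $i=y$ and $j=x$ in \eqref{eq:1swap-stable} gives
\begin{equation*}
F\big((\widehat S_c\setminus\{y\})\cup\{x\}\big)\le F(\widehat S_c).
\end{equation*}
Subtracting $F(\widehat S_c\setminus\{y\})$ from both sides yields
\begin{equation*}
F\big((\widehat S_c\setminus\{y\})\cup\{x\}\big)-F(\widehat S_c\setminus\{y\})\le F(\widehat S_c)-F(\widehat S_c\setminus\{y\}).
\end{equation*}
Thus the marginal gain of $x$ at the smaller base is bounded by the removal loss of $y$.

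\textbf{Step 2 (approximate DR).} Apply \eqref{eq:def-alpha-DR} with $U=\widehat S_c$, which has size $m$ and so lies in $\mathcal{F}_m$. Take $T=\widehat S_c\setminus\{y\}\subseteq U$ and the element $x\notin U$. This gives
\begin{equation*}
F(T\cup\{x\})-F(T)\ge\alpha_{\mathrm{DR}}\big(F(\widehat S_c\cup\{x\})-F(\widehat S_c)\big).
\end{equation*}
Chaining this with Step 1 and dividing by $\alpha_{\mathrm{DR}}>0$ gives \eqref{eq:add-le-remove-approx}.

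\textbf{Main obstacle.} The only delicate point is bookkeeping. One must verify that $y\in\widehat S_c$ and $x\notin\widehat S_c$, which hold by the definitions of $A$ and $B$. One must also check that the DR context $U$ has size exactly $m$, as \eqref{eq:def-alpha-DR} requires; this holds because Stage II preserves $|\widehat S_c|=m$. No monotonicity or curvature assumption is needed, and the degenerate case $A=\varnothing$ is vacuous.
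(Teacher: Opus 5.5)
Your proof is correct and follows the paper's own argument: apply the approximate DR inequality with $U=\widehat S_c$ and $T=\widehat S_c\setminus\{y\}$, then use $1$-swap stability with $i=y$, $j=x$, and divide by $\alpha_{\mathrm{DR}}>0$. You also check explicitly that $x\notin\widehat S_c$, $y\in\widehat S_c$, and $|\widehat S_c|=m$, which the paper leaves implicit.
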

\begin{proof}
Apply \eqref{eq:def-alpha-DR} with $U=\widehat S_c$, $T=\widehat S_c\setminus\{y\}$ and $x\notin \widehat S_c$:
\begin{equation}
F((\widehat S_c\setminus\{y\})\cup\{x\})-F(\widehat S_c\setminus\{y\})\ \ge\ \alpha_{\mathrm{DR}}\big(F(\widehat S_c\cup\{x\})-F(\widehat S_c)\big).
\end{equation}
By $1$-swap stability \eqref{eq:1swap-stable}, $F((\widehat S_c\setminus\{y\})\cup\{x\})\le F(\widehat S_c)$, hence
$F(\widehat S_c)-F(\widehat S_c\setminus\{y\})\ \ge\ \alpha_{\mathrm{DR}}\big(F(\widehat S_c\cup\{x\})-F(\widehat S_c)\big)$,
which rearranges to \eqref{eq:add-le-remove-approx}.
\end{proof}

\begin{lemma}[Weak-submodularity lower bound]\label{lem:weak-submod}
\begin{equation}
\sum_{x\in A}\!\big(F(\widehat S_c\cup\{x\})-F(\widehat S_c)\big)
\ \ge\
\gamma_{\mathrm{wm}}(\widehat S_c,A)\,\big(F(S_c^\star)-F(\widehat S_c)\big).
\label{eq:weak-lb-curv}
\end{equation}
\end{lemma}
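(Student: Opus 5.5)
The bound follows from the definition of the local weak submodularity ratio, applied at base $B=\widehat S_c$ with block $X=A$, together with the restricted monotonicity condition \eqref{eq:RM}.

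\emph{Step 1: instantiate the ratio.} Take $A'=B=\widehat S_c$ in \eqref{eq:def-gammawm} and $X=A=S_c^\star\setminus\widehat S_c$, which is disjoint from $\widehat S_c$. The local ratio $\gamma_{\mathrm{wm}}(\widehat S_c,A)$ is the value of the ratio in \eqref{eq:def-gammawm} at this specific pair, so by definition
\[
\sum_{x\in A}\big(F(\widehat S_c\cup\{x\})-F(\widehat S_c)\big)\ \ge\ \gamma_{\mathrm{wm}}(\widehat S_c,A)\,\big(F(\widehat S_c\cup A)-F(\widehat S_c)\big),
\]
whenever the denominator $F(\widehat S_c\cup A)-F(\widehat S_c)$ is positive. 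If the denominator is $\le 0$, I handle it in Step 3.

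\emph{Step 2: pass from $\widehat S_c\cup A$ to $S_c^\star$.} Note that $\widehat S_c\cup A=\widehat S_c\cup S_c^\star\supseteq S_c^\star$. Since $|A|=r\le m$, the set $\widehat S_c\cup A$ has the form $U\cup X'$ with $U=S_c^\star\in\mathcal F_m$ and $X'=B=\widehat S_c\setminus S_c^\star$, where $|X'|\le m$. Applying \eqref{eq:RM} with this choice gives $F(\widehat S_c\cup A)\ge F(S_c^\star)$. Substituting into the right-hand side of Step 1 yields \eqref{eq:weak-lb-curv}, using $\gamma_{\mathrm{wm}}(\widehat S_c,A)\ge0$.

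\emph{Step 3: degenerate cases.} If $F(S_c^\star)-F(\widehat S_c)\le 0$, the right-hand side is nonpositive. The left-hand side is nonnegative by \eqref{eq:RM} applied to $U=\widehat S_c$ and singletons $X=\{x\}$. So the inequality holds trivially, and we may assume $F(S_c^\star)>F(\widehat S_c)$. Step 2 then gives $F(\widehat S_c\cup A)-F(\widehat S_c)\ge F(S_c^\star)-F(\widehat S_c)>0$, so the ratio in Step 1 is well defined. This also covers $A=\varnothing$, where $S_c^\star=\widehat S_c$.

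The main obstacle is Step 2. The RM condition is stated only for size-$m$ bases, so the argument has to route through $U=S_c^\star$ rather than through $\widehat S_c$. It also relies on the convention that the local ratio is the ratio evaluated at the specific pair $(\widehat S_c,A)$, which the paper implicitly adopts.
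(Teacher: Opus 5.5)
Your proof is correct and matches the paper's argument: you apply the local-ratio inequality at base $\widehat S_c$ and block $A$, then use $\widehat S_c\cup A=S_c^\star\cup B$ and RM with $U=S_c^\star$ to conclude $F(\widehat S_c\cup A)\ge F(S_c^\star)$. Taking $U=S_c^\star$ is legitimate because the setup's requirement $|A|=|B|$ forces $|S_c^\star|=m$, and your Step 3, which uses RM on singletons to rule out a vanishing denominator, is a harmless extra that the paper leaves implicit.
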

\begin{proof}
By the definition of the local ratio at base $\widehat S_c$ and block $A$,
\begin{equation}
\sum_{x\in A}\!\big(F(\widehat S_c\cup\{x\})-F(\widehat S_c)\big)\ge \gamma_{\mathrm{wm}}(\widehat S_c,A)\big(F(\widehat S_c\cup A)-F(\widehat S_c)\big).
\end{equation}
Since $\widehat S_c\cup A=S_c^\star\cup B$ and $|B|=r$, by RM in \eqref{eq:RM} we have $F(S_c^\star\cup B)\ge F(S_c^\star)$, i.e., $F(\widehat S_c\cup A)\ge F(S_c^\star)$.
Therefore,
\begin{equation}
F(\widehat S_c\cup A)-F(\widehat S_c)\ \ge\ F(S_c^\star)-F(\widehat S_c),
\end{equation}
which together with the local weak submodularity ratio definition completes the proof.
\end{proof}

\begin{lemma}[Curvature upper bound on singleton removals]\label{lem:curv-upper}
If $F$ satisfies~\eqref{eq:def-kappa-b}, then for $U=\widehat S_c$ and $V=B$,
\begin{equation}
\sum_{y\in B}\!\big(F(\widehat S_c)-F(\widehat S_c\setminus\{y\})\big)
\ \le\
\frac{1}{1-\kappa_b}\,\Big(F(\widehat S_c)-F(\widehat S_c\setminus B)\Big).
\label{eq:curv-upper-first}
\end{equation}
\end{lemma}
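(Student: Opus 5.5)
The bound is a direct instantiation of the restricted block--singleton curvature assumption \eqref{eq:def-kappa-b}. That assumption holds for every $U\in\mathcal{F}_m$ and every $V\subseteq U$, so the plan is simply to check that $U=\widehat S_c$ and $V=B$ are admissible choices, and then read off the inequality.

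\textbf{Membership of $U$.} Stage II performs only $1$-swaps, replacing one selected element by one unselected element. Each swap preserves cardinality. The greedy baseline $S_g=S_c^{(m)}$ has exactly $m$ elements, so every iterate has size $m$. In particular the terminal $1$-swap stable set satisfies $|\widehat S_c|=m$, that is, $\widehat S_c\in\mathcal{F}_m$.

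\textbf{Membership of $V$.} By definition $B=\widehat S_c\setminus S_c^\star$, so trivially $B\subseteq \widehat S_c$.

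\textbf{Substitution.} Putting $U=\widehat S_c$ and $V=B$ into \eqref{eq:def-kappa-b} turns the left-hand side into $\sum_{y\in B}\big(F(\widehat S_c)-F(\widehat S_c\setminus\{y\})\big)$. The right-hand side becomes $\frac{1}{1-\kappa_b}\big(F(\widehat S_c)-F(\widehat S_c\setminus B)\big)$. This is exactly \eqref{eq:curv-upper-first}. Since $\kappa_b\in[0,1)$, the factor $1/(1-\kappa_b)$ is finite and positive, so no sign issues arise.

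\textbf{Edge case.} If $r=0$, then $B=\varnothing$ and both sides vanish, so the inequality holds with equality.

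\textbf{Main obstacle.} The only real point is confirming $|\widehat S_c|=m$, which requires the cardinality-preservation argument above. This matters because the curvature condition is assumed only on size-$m$ contexts, not on all subsets.
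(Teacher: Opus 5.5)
Your proposal is correct and matches the paper, which gives no separate proof because the lemma is exactly the curvature condition \eqref{eq:def-kappa-b} instantiated at $U=\widehat S_c$, $V=B$. Your explicit checks that $\widehat S_c\in\mathcal{F}_m$ (since swaps preserve cardinality) and that $B\subseteq\widehat S_c$ are precisely the admissibility conditions the paper leaves implicit.
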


\begin{theorem}[Curvature-augmented Stage~II under approximate DR (anchored form)]\label{thm:curv-stage2-approx-anchored}
Suppose $F$ satisfies RM \eqref{eq:RM}, admits curvature \eqref{eq:def-kappa-b} with $\kappa_b<1$, and satisfies \eqref{eq:def-alpha-DR} with $\alpha_{\mathrm{DR}}>0$.
Let $\gamma_{\mathrm{wm}}\in(0,1]$ denote the global weak submodularity ratio \eqref{eq:def-gammawm}.
Assume additionally that for $B=\widehat S_c\setminus S_c^\star$ we have
\begin{equation}
F(\widehat S_c\setminus B)\ \ge\ F(S_g).
\label{eq:BD}
\end{equation}
Then any $1$-swap stable $m$-set $\widehat S_c$ returned by Stage~II obeys
\begin{equation}
F(\widehat S_c)\ \ge\ F(S_g)\ +\ \theta_\alpha\cdot\big(F(S_c^\star)-F(S_g)\big),
\qquad
\theta_\alpha\ :=\ \frac{(1-\kappa_b)\,\alpha_{\mathrm{DR}}\,\gamma_{\mathrm{wm}}}{1+(1-\kappa_b)\,\alpha_{\mathrm{DR}}\,\gamma_{\mathrm{wm}}}\  .
\label{eq:theta-anchored-alpha-final}
\end{equation}
\end{theorem}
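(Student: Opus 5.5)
The plan is to chain the three preceding lemmas into one inequality linking the optimality gap $F(S_c^\star)-F(\widehat S_c)$ to the improvement $F(\widehat S_c)-F(S_g)$, and then rearrange. Write $c:=(1-\kappa_b)\,\alpha_{\mathrm{DR}}\,\gamma_{\mathrm{wm}}$, so that $\theta_\alpha=c/(1+c)$.

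First I dispose of the trivial cases. Since $S_c^\star$ maximizes $F$ over sets of size at most $m$, we have $F(S_c^\star)\ge F(S_g)$. If $F(\widehat S_c)\ge F(S_c^\star)$, then
\[
F(\widehat S_c)-F(S_g)\ \ge\ F(S_c^\star)-F(S_g)\ \ge\ \theta_\alpha\big(F(S_c^\star)-F(S_g)\big),
\]
because $\theta_\alpha\le 1$, and the claim holds. The same covers $r=0$, where $\widehat S_c=S_c^\star$. So assume $G:=F(S_c^\star)-F(\widehat S_c)>0$ and $r\ge1$.

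Next I run the chain.
\begin{itemize}
\item Lemma~\ref{lem:weak-submod}, together with $\gamma_{\mathrm{wm}}(\widehat S_c,A)\ge\gamma_{\mathrm{wm}}$ and $G>0$, gives $\sum_{x\in A}\big(F(\widehat S_c\cup\{x\})-F(\widehat S_c)\big)\ge\gamma_{\mathrm{wm}}\,G$.
\item Summing Lemma~\ref{lem:add-remove-approx} over $x\in A$ is legitimate because $\phi$ is a bijection onto $B$. It bounds the same sum above by $\alpha_{\mathrm{DR}}^{-1}\sum_{y\in B}\big(F(\widehat S_c)-F(\widehat S_c\setminus\{y\})\big)$.
\item Lemma~\ref{lem:curv-upper} is just \eqref{eq:def-kappa-b} applied with $U=\widehat S_c\in\mathcal F_m$ and $V=B$. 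It bounds this by $(1-\kappa_b)^{-1}\alpha_{\mathrm{DR}}^{-1}\big(F(\widehat S_c)-F(\widehat S_c\setminus B)\big)$.
\item Assumption \eqref{eq:BD} replaces $F(\widehat S_c\setminus B)$ by the smaller $F(S_g)$.
\end{itemize}
Combining these yields
\[
c\,\big(F(S_c^\star)-F(\widehat S_c)\big)\ \le\ F(\widehat S_c)-F(S_g).
\]

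Finally I rearrange. The last display gives $(1+c)F(\widehat S_c)\ge F(S_g)+cF(S_c^\star)$. Subtracting $(1+c)F(S_g)$ and dividing by $1+c>0$ gives \eqref{eq:theta-anchored-alpha-final}.

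I expect the main obstacle to be sign bookkeeping rather than any deep step. Passing from the local to the global weak submodularity ratio requires $G\ge0$, which is why the case split comes first. The direction of each inequality must also be checked: the removal gains could in principle be negative, but the chain only uses upper bounds on them, so no positivity is needed there. The remaining steps are direct substitutions of the lemmas.
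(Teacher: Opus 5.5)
Your proposal is correct and follows the paper's proof essentially step for step. The shared chain is: sum Lemma~\ref{lem:add-remove-approx} over the bijection $\phi$, lower-bound the left side via Lemma~\ref{lem:weak-submod} with $\gamma_{\mathrm{wm}}(\widehat S_c,A)\ge\gamma_{\mathrm{wm}}$, upper-bound the right side via Lemma~\ref{lem:curv-upper}, apply \eqref{eq:BD}, and rearrange (the paper's shift $\widehat F=F-F(S_g)$ is only notation). Your preliminary case split is a harmless tightening the paper omits; $G\ge 0$ already holds automatically because $\widehat S_c$ is an $m$-set and $S_c^\star$ is optimal over sets of size at most $m$, so the split really only guards the degenerate cases $G=0$ and $r=0$.
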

\begin{proof}
Summing \eqref{eq:add-le-remove-approx} over $x\in A$ (paired by $y=\phi(x)$) gives
\begin{equation}
\sum_{x\in A}\big(F(\widehat S_c\cup\{x\})-F(\widehat S_c)\big)
\ \le\
\frac{1}{\alpha_{\mathrm{DR}}}\sum_{y\in B}\big(F(\widehat S_c)-F(\widehat S_c\setminus\{y\})\big).
\end{equation}
Lower-bound the LHS by Lemma~\ref{lem:weak-submod} with $\gamma_{\mathrm{wm}}(\widehat S_c,A)\ge \gamma_{\mathrm{wm}}$, and upper-bound the RHS by Lemma~\ref{lem:curv-upper}:
\begin{equation}
\gamma_{\mathrm{wm}}\big(F(S_c^\star)-F(\widehat S_c)\big)
\ \le\
\frac{1}{\alpha_{\mathrm{DR}}(1-\kappa_b)}\,\Big(F(\widehat S_c)-F(\widehat S_c\setminus B)\Big).
\end{equation}
Anchor at the greedy baseline via $\widehat F(S):=F(S)-F(S_g)$ (all previous steps are difference-based and thus translation-invariant).
Using the additional anchored dominance condition \eqref{eq:BD}, we have
\begin{equation}
F(\widehat S_c)-F(\widehat S_c\setminus B)
\ \le\
F(\widehat S_c)-F(S_g)
\ =\
\widehat F(\widehat S_c).
\label{eq:anchored_key}
\end{equation}
Hence
\begin{equation}
\gamma_{\mathrm{wm}}\big(\widehat F(S_c^\star)-\widehat F(\widehat S_c)\big)
\ \le\
\frac{1}{\alpha_{\mathrm{DR}}(1-\kappa_b)}\,\widehat F(\widehat S_c).
\end{equation}
Rearranging gives
\(
\widehat F(\widehat S_c)\ \ge\ \dfrac{(1-\kappa_b)\alpha_{\mathrm{DR}}\gamma_{\mathrm{wm}}}{1+(1-\kappa_b)\alpha_{\mathrm{DR}}\gamma_{\mathrm{wm}}}\ \widehat F(S_c^\star)
\),
which is \eqref{eq:theta-anchored-alpha-final}.
\end{proof}

\paragraph{Remark (anchored dominance).}
Condition \eqref{eq:BD} is a technical \emph{anchored} requirement used only to relate 
$F(\widehat S_c)-F(\widehat S_c\setminus B)$ to the baseline gap $F(\widehat S_c)-F(S_g)$.
Although it depends on the unknown optimal set $S_c^\star$, it serves as a sufficient condition for the anchored guarantee and is empirically observed to hold in our experiments.

\begin{corollary}[Anchored loss form of Theorem~\ref{thm:curv-stage2-approx-anchored}]
With the same assumptions as Theorem~\ref{thm:curv-stage2-approx-anchored} and $F=-\mathcal L$, the terminal $1$-swap stable set $\widehat S_c$ satisfies
\begin{equation}
\boxed{\mathcal{L}(\widehat S_c,\mathcal{T}_c)\ \le\
\mathcal{L}(S_g,\mathcal{T}_c)\ -\
\theta_\alpha\Big(\mathcal{L}(S_g,\mathcal{T}_c)-\mathcal{L}(S_c^\star,\mathcal{T}_c)\Big),
\qquad
\theta_\alpha=\frac{(1-\kappa_b)\alpha_{\mathrm{DR}}\gamma_{\mathrm{wm}}}{1+(1-\kappa_b)\alpha_{\mathrm{DR}}\gamma_{\mathrm{wm}}}.}
\label{eq:theta-loss-anchored}
\end{equation}
In words, Stage~II closes a constant fraction $\theta_\alpha$ of the greedy--optimal \emph{loss gap}.
\end{corollary}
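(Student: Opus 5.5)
The corollary is a restatement of Theorem~\ref{thm:curv-stage2-approx-anchored} in loss units, so the plan is to substitute $F=-\mathcal{L}$ into \eqref{eq:theta-anchored-alpha-final} and rearrange. First I would confirm that the hypotheses carry over: RM \eqref{eq:RM}, curvature \eqref{eq:def-kappa-b}, approximate DR \eqref{eq:def-alpha-DR}, and the anchored dominance \eqref{eq:BD} are all assumed directly for $F=-\mathcal{L}$. The constants $\kappa_b$, $\alpha_{\mathrm{DR}}$ and $\gamma_{\mathrm{wm}}$ are also defined through $F$, so nothing has to be re-derived. The optimal set $S_c^\star\in\arg\max F$ coincides with $\arg\min_{|S|\le m}\mathcal{L}(S,\mathcal{T}_c)$.

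Next, I would write the theorem's conclusion as
\[
-\mathcal{L}(\widehat S_c,\mathcal{T}_c)\ \ge\ -\mathcal{L}(S_g,\mathcal{T}_c)+\theta_\alpha\big(-\mathcal{L}(S_c^\star,\mathcal{T}_c)+\mathcal{L}(S_g,\mathcal{T}_c)\big).
\]
Multiplying by $-1$ and flipping the inequality gives
\[
\mathcal{L}(\widehat S_c,\mathcal{T}_c)\le \mathcal{L}(S_g,\mathcal{T}_c)-\theta_\alpha\big(\mathcal{L}(S_g,\mathcal{T}_c)-\mathcal{L}(S_c^\star,\mathcal{T}_c)\big),
\]
which is exactly \eqref{eq:theta-loss-anchored}. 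The value of $\theta_\alpha$ is unchanged because it depends only on the three constants.

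For the interpretive sentence, I would note that $\mathcal{L}(S_g,\mathcal{T}_c)-\mathcal{L}(S_c^\star,\mathcal{T}_c)\ge 0$ by optimality of $S_c^\star$. Rearranging then gives
\[
\mathcal{L}(\widehat S_c)-\mathcal{L}(S_c^\star)\le (1-\theta_\alpha)\big(\mathcal{L}(S_g)-\mathcal{L}(S_c^\star)\big).
\]
Since $\kappa_b<1$, $\alpha_{\mathrm{DR}}>0$ and $\gamma_{\mathrm{wm}}>0$, the product $(1-\kappa_b)\alpha_{\mathrm{DR}}\gamma_{\mathrm{wm}}$ is positive, so $\theta_\alpha\in(0,1)$. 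Hence a constant fraction $\theta_\alpha$ of the greedy-to-optimal gap is closed.

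There is no real obstacle here. The only step requiring care is tracking the sign flip and confirming that $S_c^\star$ means the same set under both conventions.
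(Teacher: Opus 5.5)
Your proposal is correct and matches the paper's route: the paper gives no separate proof, and the corollary follows by substituting $F=-\mathcal{L}$ into the conclusion of Theorem~\ref{thm:curv-stage2-approx-anchored} and multiplying through by $-1$, with all hypotheses and constants carried over unchanged. Your extra checks, that $S_c^\star$ is the loss minimizer over $|S|\le m$ and that $\theta_\alpha\in(0,1)$, correctly justify the ``constant fraction'' reading, which the paper states without elaboration.
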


\begin{corollary}[Exact DR (submodular) case]
When $\alpha_{\mathrm{DR}}=\gamma_{\mathrm{wm}}=1$, \eqref{eq:theta-anchored-alpha-final} simplifies to
\begin{equation}    
F(\widehat S_c)\ \ge\ F(S_g)\ +\ \frac{1-\kappa_b}{2-\kappa_b}\,\big(F(S_c^\star)-F(S_g)\big).
\end{equation}
\end{corollary}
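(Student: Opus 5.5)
This corollary is a direct specialization of Theorem~\ref{thm:curv-stage2-approx-anchored}, so the plan is substitution plus one algebraic simplification. The exact DR case fits the theorem's hypotheses: $\alpha_{\mathrm{DR}}=1>0$ satisfies \eqref{eq:def-alpha-DR}, and $\gamma_{\mathrm{wm}}=1\in(0,1]$ is an admissible global weak submodularity ratio. We keep RM \eqref{eq:RM}, the curvature condition \eqref{eq:def-kappa-b} with $\kappa_b<1$, and the anchored dominance \eqref{eq:BD} as standing assumptions inherited from the theorem. The corollary only changes the values of $\alpha_{\mathrm{DR}}$ and $\gamma_{\mathrm{wm}}$.

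Under these hypotheses the theorem gives
\[
F(\widehat S_c)\ge F(S_g)+\theta_\alpha\big(F(S_c^\star)-F(S_g)\big),
\]
with $\theta_\alpha$ as in \eqref{eq:theta-anchored-alpha-final}. Setting $\alpha_{\mathrm{DR}}=\gamma_{\mathrm{wm}}=1$ turns the product $(1-\kappa_b)\alpha_{\mathrm{DR}}\gamma_{\mathrm{wm}}$ into $1-\kappa_b$. Hence
\[
\theta_\alpha=\frac{1-\kappa_b}{1+(1-\kappa_b)}=\frac{1-\kappa_b}{2-\kappa_b},
\]
which is exactly the stated constant. Since $\kappa_b\in[0,1)$, the denominator $2-\kappa_b$ lies in $(1,2]$, so the expression is well defined and $\theta_\alpha\in(0,1/2]$.

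There is no real obstacle. The only point worth checking is the logical direction of the specialization. The theorem's proof uses $\gamma_{\mathrm{wm}}$ only through the lower bound $\gamma_{\mathrm{wm}}(\widehat S_c,A)\ge\gamma_{\mathrm{wm}}$, and uses $\alpha_{\mathrm{DR}}$ only through Lemma~\ref{lem:add-remove-approx}. Both are valid with the value $1$ whenever the exact inequalities hold; with $\alpha_{\mathrm{DR}}=1$, for instance, Lemma~\ref{lem:add-remove-approx} becomes the standard submodular add--remove exchange inequality. So the derivation goes through verbatim, and no new argument is needed beyond the substitution.
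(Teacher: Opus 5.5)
Your proposal is correct and follows the paper's approach. The corollary is simply the anchored Stage~II theorem with $\alpha_{\mathrm{DR}}=\gamma_{\mathrm{wm}}=1$ substituted into $\theta_\alpha$, which gives $\theta_\alpha=\frac{1-\kappa_b}{1+(1-\kappa_b)}=\frac{1-\kappa_b}{2-\kappa_b}$, while RM, the curvature bound with $\kappa_b<1$, and anchored dominance remain standing hypotheses.
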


\paragraph{Normalized global form.}
For a global (nonnegative) normalization, define
\begin{equation}
F_{\min}\ :=\ \min_{S\subseteq \mathcal{T}_c} F(S), 
\qquad
\widetilde F(S)\ :=\ F(S)-F_{\min}\ \ge\ 0.
\label{eq:def-Ftilde}
\end{equation}
Since all inequalities above depend only on differences of $F$, replacing $F$ by $\widetilde F$ leaves $\gamma_{\mathrm{wm}},\kappa_b,\alpha_{\mathrm{DR}}$ unchanged.
Moreover, $\widetilde F(\widehat S_c)-\widetilde F(\widehat S_c\setminus B)\le \widetilde F(\widehat S_c)$ holds trivially because $\widetilde F(\widehat S_c\setminus B)\ge 0$.
Thus the same derivation yields
\begin{equation}
\widetilde F(\widehat S_c)\ \ge\ \frac{(1-\kappa_b)\,\alpha_{\mathrm{DR}}\,\gamma_{\mathrm{wm}}}{1+(1-\kappa_b)\,\alpha_{\mathrm{DR}}\,\gamma_{\mathrm{wm}}}\,\widetilde F(S_c^\star).
\end{equation}
We emphasize that the anchored form \eqref{eq:theta-anchored-alpha-final} is translation-invariant and preferred in practice.

\paragraph{Discussion.}
(i) $\kappa_b$ quantifies the overlap between singleton removals and their joint removal and remains well-defined even when $F(\varnothing)$ is undefined.
(ii) At $\kappa_b=0$ (no over-counting; e.g., modular objectives), Theorem~\ref{thm:curv-stage2-approx-anchored} yields the classical $1/2$ factor when $\gamma_{\mathrm{wm}}{=}\alpha_{\mathrm{DR}}{=}1$.

\paragraph{Summary.}
Stage~I contracts the residual optimality gap at a rate $\big(1-\tfrac{\gamma_{\mathrm{wm}}}{m}\big)$ per step, yielding an $e^{-\gamma_{\mathrm{wm}}}$-type reduction after $m$ selections.
Stage~II performs $1$-swap refinement. Under the stated RM/curvature/approximate-DR assumptions (and \eqref{eq:BD} for the anchored form), the terminal set $\widehat S_c$ closes a constant fraction $\theta_\alpha$ of the greedy–optimal utility gap (Eq.~\eqref{eq:theta-loss-anchored}).

\section{Symbol Description}
\label{sym}
To enhance clarity, a detailed description of mathematic symbols in the present study is provided in Table \ref{tab:symbols_description}.

\begin{table}[tbp]
    \centering
    \caption{Descriptions of symbols used in the main text.}
    \begin{tabular}{c|l}
    \toprule
    Symbol & Definition \\
    \midrule
    $\mathcal{T}$ & Full real dataset \\
    $\mathcal{S}$ & Condensed (selected) subset \\
    $\mathcal{T}_c$ & Real subset for class $c$ \\
    $\mathcal{S}_c$ & Condensed subset for class $c$ \\
    $\mathcal{T}_{c(e)}$ & Feature embeddings of $\mathcal{T}_c$ \\
    $\mathcal{S}_{c(e)}$ & Feature embeddings of $\mathcal{S}_c$ \\
    $\mathbf{x}_i$ & Embedding of condensed sample $i$ \\
    $\mathbf{y}_j$ & Embedding of real sample $j$ \\
    $m$ & Number of samples in $\mathcal{S}_c$ \\
    $n$ & Number of samples in $\mathcal{T}_c$ \\
    $\mathbf{C}$ & Cost matrix \\
    $\mathbf{C}_{ij}$ & Pairwise cost between $\mathbf{x}_i$ and $\mathbf{y}_j$ \\
    $\boldsymbol{\pi}$ & Balanced OT transport plan \\
    $\mathbf{C}_{\text{aug}}$ & Augmented Cost matrix \\
    $\boldsymbol{\Pi}$ & Augmented transport plan with dummy source \\
    $\boldsymbol{\mu}$ & Source marginal distribution \\
    $\boldsymbol{\nu}$ & Target marginal distribution \\
    $\bar{\boldsymbol{\nu}}$ & Reference target marginal \\
    $\kappa$ & Capacity scaling factor for one-sided partial OT \\
    $\delta$ & Dummy-source cost \\
    $\gamma$ & Scaling coefficient for $\delta$ \\
    $\varepsilon$ & Entropic regularization weight \\
    $\mathbf{K}$ & Gibbs kernel in Sinkhorn iterations \\
    $\mathbf{u}$ & Sinkhorn scaling vector on source side \\
    $\mathbf{v}$ & Sinkhorn scaling vector on target side \\
    $\mathbf{T}_{\text{real}}$ & Real transport matrix after removing dummy mass \\
    $\mathcal{L}_{\mathrm{OT}}$ & One-sided partial OT loss \\
    $\mathcal{L}_{\mathrm{sta}}$ & Mean–variance regularization loss \\
    $\mathcal{L}_{\mathrm{conf}}$ & Confidence regularization loss \\
    $\mathcal{L}$ & Overall objective function \\
    $\alpha$ & Weight for $\mathcal{L}_{\mathrm{sta}}$ \\
    $\beta$ & Weight for $\mathcal{L}_{\mathrm{conf}}$ \\
    $\boldsymbol{\mu}_S$ & Mean of features in $\mathcal{S}_c$ \\
    $\boldsymbol{\sigma}_S$ & Variance of features in $\mathcal{S}_c$ \\
    $\boldsymbol{\mu}_T$ & Mean of features in $\mathcal{T}_c$ \\
    $\boldsymbol{\sigma}_T$ & Variance of features in $\mathcal{T}_c$ \\
    $p(c|\mathbf{x}_i)$ & Predicted probability of class $c$ for $\mathbf{x}_i$ \\
    $x_k$ & Candidate image \\
    $\Delta\mathcal{L}({x}_k)$ & Marginal gain when adding candidate $\mathbf{x}_k$ \\
    ${x}_t^\star$ & Greedy-selected sample at iteration $t$ \\
    $\mathcal{S}_c'$ & Temporary subset after swapping one element \\
    $\Delta_{i\to j}$ & Objective change for swapping ${x}_i$ with ${x}_j$ \\
    $s$ & Dummy mass absorbing unmatched targets \\
    $r$ & Sinkhorn iteration index \\
    $T$ & Total number of Sinkhorn iterations \\
    $\mathbf{1}_n$ & All-one vector of length $n$ \\
    $\mathrm{Diag}(\cdot)$ & Diagonal matrix operator \\
    \bottomrule
    \end{tabular}
    \label{tab:symbols_description}
\end{table}

\section{Implementation Details}
\label{imp}
\paragraph{Image Selection Settings.}
We use InceptionV3~\cite{inception} as both the feature encoder and classifier (as discussed in~\ref{fea}, the framework is encoder- and classifier-agnostic). 
The hyperparameters are set as $\kappa{=}1.05$, $\gamma{=}0.05$, $T{=}20$, $\varepsilon{=}10$, $\alpha{=}5$, and $\beta{=}1000$. 
The value of $\beta$ depends on the encoder type: $1000$ for InceptionV3, $100$ for ResNet-50~\cite{resnet}, and $10$ for ResNet-34 and GoogLeNet~\cite{googlenet}. 
Following D$^2$C~\cite{ddc}, each subset is constructed in a class-wise manner, selecting $10$, $50$, and $100$ samples per class, respectively. The input resolutions follow the standard ImageNet settings, with $299{\times}299$ for InceptionV3~\cite{inception} and $224{\times}224$ for ResNet-34, ResNet-50~\cite{resnet}, and GoogLeNet~\cite{googlenet}. These resolutions are used solely for feature extraction during subset selection and are independent of the resolution used in diffusion model training.
\paragraph{Training Settings.}
Following D$^2$C~\cite{ddc}, we use the Adam optimizer with a fixed learning rate of $1\times10^{-4}$ and $(\beta_1,\beta_2)=(0.9,0.999)$, without applying weight decay. Mixed-precision (fp16) training with gradient clipping is employed for stability and efficiency. Latent representations are pre-computed using the Stable Diffusion VAE~\cite{kingma2013auto} and decoded via its native decoder. The batch size is set to 128 for all experiments. For a fair comparison, we also adopt the Attach phase from D$^2$C, which enriches each selected image with semantic embeddings extracted from a T5 text encoder~\cite{ni-etal-2022-sentence} and visual features obtained from a DINOv2 vision encoder~\cite{oquab2023dinov2}. These dual conditional signals provide class-level semantics and instance-level visual priors, thereby enhancing the efficiency and generation quality of diffusion model training.

\section{Sensitivity Analysis of Additional Hyperparameters}
\label{sen}

\begin{table}[tb]
\centering
\caption{Impact of loss weights $\alpha$ and $\beta$ on ImageNet~\cite{imagenet} 256$\times$256 (10K data budget, 100K iterations).}
\begin{minipage}{0.36\linewidth}
\centering
\resizebox{\linewidth}{!}{
\begin{tabular}{c|ccccc}
\toprule
$\alpha$ &3&5&6&8&10 \\
\midrule
FID &3.47 &3.43&3.45&3.43&3.48 \\
IS &418.1&414.3&413.7&411.4&408.6 \\
Precision &0.80&0.78&0.78&0.77&0.76 \\
Recall &0.27&0.28&0.28&0.28&0.29 \\
\bottomrule
\end{tabular}}
\end{minipage}
\hspace{2em}
\begin{minipage}{0.38\linewidth}
\centering
\resizebox{\linewidth}{!}{
\begin{tabular}{c|ccccc}
\toprule
$\beta$ &100&500&1000&2000&5000 \\
\midrule
FID &3.50 &3.45&3.43&3.48&3.56 \\
IS &391.2&405.5&414.3&421.5&427.2 \\
Precision &0.80&0.78&0.78&0.77&0.75 \\
Recall &0.27&0.28&0.28&0.28&0.29 \\
\bottomrule
\end{tabular}}
\end{minipage}
\label{ablation_app1}
\end{table}

\begin{table}[tb]
\caption{Impact of $\varepsilon$ (entropic regularization) and $T$ (number of Sinkhorn iterations) on ImageNet~\cite{imagenet} 256$\times$256 (10K data budget, 100K iterations).}
    \centering
    \resizebox{0.38\linewidth}{!}{
    \begin{tabular}{c|ccccc}
    \toprule
     $\varepsilon$ &5&8&10&12&15  \\
    \midrule
     FID-50K  &3.53 &3.45&3.43&3.45&3.48 \\
    \midrule
    $T$& 5 & 10 & 20 & 30 & 50 \\
    \midrule
    FID-50K&3.55 &3.47&3.43&3.42&3.43 \\
    \bottomrule
    \end{tabular}}
    \label{ablation_app2}
\end{table}

We further investigate the sensitivity of the proposed framework to auxiliary hyperparameters. As shown in Table~\ref{ablation_app1}, varying the loss weights $\alpha$ and $\beta$ indicates that the method remains stable across a broad range of values. The statistical regularization term $\mathcal{L}_{\mathrm{sta}}$ maintains global feature statistics and stabilizes the overall data distribution.
When its weight is too small, the condensed samples exhibit weaker distributional coherence and degraded generative fidelity.
Conversely, an excessively large $\alpha$ overemphasizes global alignment, suppressing the OT-driven geometric matching and leading to slight loss of local structural detail. Similarly, the confidence regularization $\mathcal{L}_{\mathrm{conf}}$ enhances semantic consistency by discouraging ambiguous or low-confidence samples. Reducing its weight weakens this semantic constraint, while overly increasing it may bias the subset toward high-confidence regions, leading to a slight reduction in diversity.

We also analyze the effect of the entropic regularization coefficient $\varepsilon$ and the number of Sinkhorn iterations $T$, as summarized in Table~\ref{ablation_app2}. The results demonstrate that the proposed partial OT solver is numerically stable and largely insensitive to these parameters. Smaller regularization or too few iterations may lead to unstable or under-smoothed updates, slightly perturbing the transport plan, whereas larger values produce smoother and more consistent convergence. However, excessively large $T$ yields diminishing returns, providing no further improvement in alignment or fidelity. These observations collectively confirm that our framework exhibits strong robustness to auxiliary hyperparameters, ensuring stable optimization and consistent generative performance across a wide range of configurations.

\section{Further Experimental Analyses}
\label{exp}

\begin{table}[tb]
\centering
    \caption{Comparison of dataset condensation methods on ImageNet~\cite{imagenet} 512×512 with DiT-L/2 and SiT-L/2 trained for 100K iterations using a 10K data budget. Baseline results are from D$^2$C~\cite{ddc}.}
    \resizebox{0.7\linewidth}{!}{
    \begin{tabular}{cccccccc}
\toprule
Model & Method &Data Budget & Iteration& FID$\downarrow$ & IS$\uparrow$ & Precision$\uparrow$ & Recall$\uparrow$ \\
\midrule
DiT-L/2 & Random &0.8\% (10K) & 100K&24.8&74.3&0.65&0.42 \\
DiT-L/2 &D$^2$C~\cite{ddc} &0.8\% (10K) &100K&14.8&109.2&0.63&0.52\\
DiT-L/2 &Ours &0.8\% (10K) &100K&\bf6.2&\bf451.0&\bf0.81& \bf0.67\\
\midrule
SiT-L/2 &Random &0.8\% (10K) &100K&13.3&197.1&0.69&0.68 \\
SiT-L/2 &D$^2$C~\cite{ddc} &0.8\% (10K) &100K&9.1&261.7&0.72&0.34\\
SiT-L/2 &Ours &0.8\% (10K) &100K&\bf5.8&\bf461.6&\bf0.81& \bf0.66\\
\bottomrule
\end{tabular}}
\label{512_more}
\end{table}

\begin{table}[tb]
\centering
    \caption{Comparison of different dataset condensation methods using DiT-L/2~\cite{dit} on ImageNet~\cite{imagenet} at 256$\times$256 resolution. 
All methods are trained for 100K iterations with a 0.8\% (10K) data budget and evaluated under two settings using 10K and 50K generated samples for FID and IS computation. 
For fairness, all baseline methods adopt the same attach phase as D$^2$C~\cite{ddc}. 
Best results are shown in \textbf{bold}.}
    \resizebox{0.67\linewidth}{!}{
    \begin{tabular}{c|c|c|cccc}
\toprule
Method &Data Budget &Eval. Samples& FID$\downarrow$ & IS$\uparrow$ & Precision$\uparrow$ & Recall$\uparrow$ \\
\midrule
CCS~\cite{ccs} &0.8\% (10K)&10K &7.81&373.3&0.77&0.58 \\
EDC~\cite{edc} &0.8\% (10K)&10K&31.7&161.5&0.52&0.63 \\
RDED~\cite{rded} &0.8\% (10K)&10K & 24.1 &\bf491.0&0.80 & 0.25 \\
IGD~\cite{igd} &0.8\% (10K) &10K&15.92&433.9&\bf0.83&0.41\\
Ours &0.8\% (10K) &10K&\bf5.76&430.2&0.78& \bf0.69\\
\midrule
CCS~\cite{ccs} &0.8\% (10K)&50K &5.45&364.9&0.77&0.21 \\
EDC~\cite{edc} &0.8\% (10K)&50K&29.4&161.7&0.52&0.20 \\
RDED~\cite{rded} &0.8\% (10K)&50K &20.90 &\bf482.7&0.82 & 0.05 \\
IGD~\cite{igd} &0.8\% (10K) &50K&12.67&412.5&\bf0.85&0.10\\
Ours &0.8\% (10K) &50K&\bf3.43&414.3&0.78& \bf0.28\\
\bottomrule
\end{tabular}}
\label{appendix_gen}
\end{table}

\begin{table}[tb]
\centering
    \caption{Comparison of different dataset condensation methods using SiT-L/2~\cite{ma2024sit} on ImageNet~\cite{imagenet} at 256$\times$256 resolution. 
All methods are trained for 100K iterations with a 0.8\% (10K) data budget and evaluated under two settings using 10K and 50K generated samples for FID and IS computation. 
For fairness, all baseline methods adopt the same attach phase as D$^2$C~\cite{ddc}. 
Best results are shown in \textbf{bold}.}
    \resizebox{0.67\linewidth}{!}{
    \begin{tabular}{c|c|c|cccc}
\toprule
Method &Data Budget &Eval. Samples& FID$\downarrow$ & IS$\uparrow$ & Precision$\uparrow$ & Recall$\uparrow$ \\
\midrule
CCS~\cite{ccs} &0.8\% (10K)&10K &9.52&377.0&0.78&0.58 \\
EDC~\cite{edc} &0.8\% (10K)&10K&29.3&171.8&0.55&0.64 \\
RDED~\cite{rded} &0.8\% (10K)&10K & 26.0 &\bf490.3&0.82 & 0.26 \\
IGD~\cite{igd} &0.8\% (10K) &10K&16.1&427.5&\bf0.86&0.43\\
Ours &0.8\% (10K) &10K&\bf5.91&423.2&0.78& \bf0.70\\
\midrule
CCS~\cite{ccs} &0.8\% (10K)&50K &6.54&373.1&0.78&0.19 \\
EDC~\cite{edc} &0.8\% (10K)&50K&26.7&169.9&0.54&0.22 \\
RDED~\cite{rded} &0.8\% (10K)&50K & 22.7 &\bf483.1&0.82 & 0.06 \\
IGD~\cite{igd} &0.8\% (10K) &50K&13.2&427.0&\bf0.86&0.12\\
Ours &0.8\% (10K) &50K&\bf3.56&415.4&0.78& \bf0.27\\
\bottomrule
\end{tabular}}
\label{appendix_gen2}
\end{table}

\subsection{Additional 512$\times$512 Results}
\label{subsec:512_supp}

Beyond the 512$\times$512 DiT-L/2 results at 100K iterations reported in the main text,
we further include \emph{SiT-L/2 at 100K} to complement the high-resolution study, as summarized in Table~\ref{512_more}.
All models are trained from scratch on ImageNet~\cite{imagenet} using a 10K data budget and evaluated with 10K generated samples.
Baseline numbers are taken from D$^2$C~\cite{ddc}, and both diffusion variants (DiT-L/2~\cite{dit}, SiT-L/2~\cite{ma2024sit}) follow the same protocol as in the main text.
Under this additional 512$\times$512 setting, our geometry-aware selection consistently surpasses random sampling and D$^2$C across FID, IS, precision, and recall,
confirming the robustness of the proposed selection strategy across different diffusion architectures at high resolution.

\subsection{Comparison with More Baselines}
\label{gen}
We further compare our geometry-aware dataset condensation with a broader set of representative baselines for diffusion model training. 
We begin by categorizing existing synthetic data generation approaches into two major families. 
(i) \emph{Small-scale synthetic-set methods}, which include distribution-matching approaches~\cite{dm,ncf,dm2025,optical,geodm}, gradient-matching approaches~\cite{dc,idc,dream,dream2}, and trajectory-matching approaches~\cite{mtt,mttnew,wang2025edf,ltdd}. 
These methods typically operate at low resolution (e.g., CIFAR-level) due to their substantial memory and computational overhead. 
(ii) \emph{Large-scale synthetic-set methods} designed for ImageNet-level settings, consisting of uni-level or model-inversion-based approaches~\cite{sre2l,cvdd,rded,edc} and generative-model-based approaches~\cite{minimax,d3m,d4m,igd}. Most real subset selection baselines with publicly available implementations have already been compared in the main text.
As a complement, we additionally consider selection with additional image-level processing (RDED~\cite{rded})
and traditional coreset selection methods equipped with the attachment phase from D$^2$C~\cite{ddc}.

\paragraph{Why small-scale synthetic set generation methods are omitted here.}
Classical distribution-matching, gradient-matching, and trajectory-matching approaches construct a synthetic set by iteratively refining pixel values of a small set of images. 
They typically treat synthetic samples as fixed entities that must be stored and optimized throughout training, 
require exhaustive pixel-level updates coupled with backpropagation through the training trajectory, 
and repeatedly access the full real dataset for each refinement step. 
These three factors lead to substantial time and memory overhead, 
which in practice confines such methods to small-scale benchmarks (e.g., CIFAR, Tiny-ImageNet) at low resolution. 
Directly scaling these synthetic-set generation pipelines to ImageNet-1K at 256$\times$256 or 512$\times$512 would require prohibitive GPU memory and wall-clock time, 
and existing works do not report competitive performance in such regimes. 
For this reason, we do not include these small-scale synthetic baselines in our large-scale evaluation and instead focus on methods that have been explicitly designed or adapted for ImageNet-level settings. However, some optimization objectives from distribution-matching-based methods
are amenable to integration into our alignment-driven selection framework,
and the corresponding results are presented later in Appendix~\ref{adaptation}.

\paragraph{More Real subset selection baselines.}
Tables~\ref{appendix_gen} and~\ref{appendix_gen2} summarize results on ImageNet-1K at 256$\times$256 using DiT-L/2~\cite{dit} and SiT-L/2~\cite{ma2024sit}, respectively, under a 0.8\% (10K) data budget and 100K training iterations.

For our task, RDED~\cite{rded} exhibits a similar behavior in that it performs sample selection solely based on confidence scores.
This strategy yields a very high Inception Score, but the absence of any distributional alignment leads to poor FID, 
and the lack of geometric constraints results in inadequate coverage, reflected by its very low Recall. 

For dataset quantization methods, we compare DQ~\cite{dq} in the main text.
As DQAS~\cite{dqas} and ADQ~\cite{adq} do not release their code publicly,
we are unable to conduct a direct comparison with them.

As a complementary direction, we also consider real subset selection methods originating from data pruning. 
We 
include CCS~\cite{ccs} as a representative pruning-based method. 
More recent pruning approaches such as TDDS~\cite{tdds} could also be relevant, but their ImageNet-1K implementations are not publicly available, preventing a fair comparison in our setting.
CCS evaluates per-sample importance based on classification-oriented criteria such as correctness, forgetting events, prediction margin, and EL2N scores, aiming to retain examples that are most influential for supervised discriminative learning. 
However, these indicators are tailored to classification dynamics and do not capture the geometric, statistical, or manifold-level properties required for diffusion training; consequently, CCS-selected subsets fail to preserve distributional structure, exhibit limited coverage, and yield suboptimal FID, IS, precision, and recall compared to our geometry-aware formulation.

Apart from comparisons with direct coreset selection methods in Figures~\ref{fid1} and~\ref{fid2},
we further compare against coreset selection methods equipped with the attachment phase from D$^2$C~\cite{ddc}
and include stronger baselines in Table~\ref{coresetattach}. The K-center method~\cite{kcenter1,kcenter2},
with its farthest-point greedy criterion, tends to push selected samples toward the boundary of the embedding space,
thereby overemphasizing extreme and low-density regions.
While this strategy improves geometric coverage, it often sacrifices representativeness with respect to the underlying data distribution.
In contrast, K-Medoids~\cite{rdusseeun1987clustering} selects representative samples by minimizing the average distance between data points and their assigned medoids.
Although this objective favors central and high-density regions, it can lead to insufficient coverage of the data manifold,
particularly in the presence of multi-modal distributions.
As a result, K-Medoids may overlook structurally important but less populated regions, yielding subsets that are biased toward the most dominant mode.

\paragraph{Large-scale synthetic set generation baselines.}
Large-scale synthetic set generation methods can be divided into two types: model-inversion-based approaches and generative-model-based methods.
Model-inversion-based approaches compress the real dataset into a compact model representation, eliminating the need for real data during image refinement. 
We include a representative baseline EDC~\cite{edc} for comparison. 

Generative-model-based approaches leverage pretrained generative models to avoid pixel-level refinements. 
In this category, we include IGD~\cite{igd} as a recent representative method.

\begin{table}[]
    \centering
    \caption{Evaluation on ImageNet~\cite{imagenet} at $256\times256$ using DiT-L/2~\cite{dit}.
K-Centers$^\dagger$ and K-Medoids$^\dagger$ denote classical coreset selection methods conducted in the feature space and augmented with the same attachment phase as D$^2$C~\cite{ddc} to enable fair comparison under diffusion model training. Best results are shown in \textbf{bold}.}
      \label{coresetattach}
 \resizebox{0.72\linewidth}{!}{
    \begin{tabular}{c|c|c|cccc}
\toprule
Method &Data Budget &Eval. Samples& FID$\downarrow$ & IS$\uparrow$ & Precision$\uparrow$ & Recall$\uparrow$ \\
\midrule
   K-Centers$^\dagger$~\cite{kcenter1}  &0.8\% (10K)&50K &39.67&142.1&0.27& \bf0.34 \\
  K-Mediods$^\dagger$~\cite{rdusseeun1987clustering}     & 0.8\% (10K)&50K &4.13&315.7&0.77&0.24\\
   Ours &0.8\% (10K) &50K&\bf3.43&\bf414.3&\bf0.78& 0.28\\
\bottomrule
    \end{tabular}}
\end{table}

EDC~\cite{edc} extends RDED by performing pixel-level optimization to align the condensed set with the stored BatchNorm statistics of the real dataset. 
However, this continuous pixel-wise refinement disrupts the visual characteristics and structural integrity of the images, leading to significantly degraded FID and Inception Score. 
Moreover, because the optimization is unconstrained with respect to the real data manifold, it produces samples that deviate from plausible image space, resulting in poor Precision. 

Because our method jointly incorporates geometric distribution information, statistical alignment, and semantic confidence cues, 
and performs discrete optimization strictly over real images on the true data manifold, 
it avoids these issues and achieves a well-balanced trade-off across fidelity, coverage, and semantic quality.

Generative-model-based condensation inherently suffers from two limitations. 
Training or adapting large generators to new domains is computationally expensive and often yields domain-specific priors that generalize poorly beyond the source distribution. 
Moreover, generative\mbox{-}model\mbox{-}based condensation exhibits a characteristic metric profile because the synthesized set is drawn from the generator’s learned manifold rather than the true data manifold. 
Without an explicit constraint to align these manifolds, a domain gap emerges, which inflates FID. 
At the same time, sampling concentrates probability mass on high\mbox{-}density modes that the generator models confidently. 
This mode concentration yields high Precision and a strong Inception Score, yet simultaneously induces mode collapse, omitting low\mbox{-}density and boundary regions of the real distribution. 
The resulting lack of coverage depresses Recall and further contributes to a higher FID through diversity loss and second\mbox{-}order statistic mismatch. 

In short, local fidelity around a few dominant modes (high Precision/IS) coexists with global misalignment and under\mbox{-}coverage (low Recall, elevated FID) due to the uncorrected domain gap between the generator’s manifold and the real manifold. 
In contrast, our geometry-aware condensation directly selects informative real samples without relying on generator synthesis or retraining. 
This design preserves both visual fidelity and manifold geometry of the original dataset, ensuring balanced coverage and accurate statistical representation. 
Consequently, the condensed subset maintains high precision while significantly improving recall and FID, demonstrating better data efficiency and overall generative performance compared to generative-model-based condensation approaches such as IGD~\cite{igd}.

\paragraph{Summary.}
Overall, the comparisons in Tables~\ref{appendix_gen},~\ref{appendix_gen2} and~\ref{coresetattach} highlight that synthetic set generation methods either fail to scale to ImageNet-1K or exhibit undesirable trade-offs between fidelity and coverage. 
Real subset selection also does not fully address geometric alignment. 
By directly selecting informative real samples under a geometry-aware objective, our method preserves both visual fidelity and manifold structure of the original dataset, 
achieving superior FID and recall while maintaining competitive precision, recall and IS across different diffusion backbones and evaluation protocols. 
These results demonstrate that geometry-aware real subset selection is a more effective and scalable solution for large-scale diffusion training than existing synthetic set generation and real image selection based approaches.

\begin{table}[tb]
\centering
\caption{Ablation on the feature \emph{encoder / classifier} used for selection on ImageNet~\cite{imagenet} at 256$\times$256. 
All rows train the same diffusion model under a 10K (0.8\%) budget for 100K iterations, and are evaluated with 50K generated samples.}
\resizebox{0.77\linewidth}{!}{
\begin{tabular}{c|c|c|cccc}
\toprule
Method & Network & Data Budget & FID$\downarrow$ & IS$\uparrow$ & Precision$\uparrow$ & Recall$\uparrow$ \\
\midrule
D$^2$C~\cite{ddc} & DiT-XL/2~\cite{dit} & 0.8\% (10K) & 4.20 & 283.6 & 0.72 & 0.24 \\  
\midrule
\multirow{4}*{Ours} 
 & ResNet-34~\cite{resnet} & 0.8\% (10K) & 3.42 & 385.4 & 0.78 & 0.27 \\
 & ResNet-50~\cite{resnet} & 0.8\% (10K) & 3.47 & 407.1 & 0.77 & 0.29 \\
 & GoogLeNet~\cite{googlenet} & 0.8\% (10K) & 3.44 & 396.8 & 0.78 & 0.28 \\
 & InceptionV3~\cite{googlenet} & 0.8\% (10K) & 3.43 & 414.3 & 0.78 & 0.28 \\
\bottomrule
\end{tabular}}
\label{feature}
\end{table}

\subsection{Ablation of Feature Encoder and Classifier}
\label{fea}
We perform alignment in a pretrained representation space that is widely used to preserve semantic neighborhoods and perceptual similarity. Under this metric, preserving neighborhood structure provides a practical surrogate for preserving the support geometry relevant to likelihood-based generative training.

To investigate the impact of the feature encoder and its paired classifier on the selection process, we conduct an ablation study using three representative backbones with different architectures and capacities. Specifically, the default feature extractor is replaced with ResNet-34~\cite{resnet}, ResNet-50~\cite{resnet}, GoogLeNet~\cite{googlenet}, and InceptionV3~\cite{inception}, while all other settings remain unchanged.
As shown in Table~\ref{feature}, all variants achieve comparable results in terms of FID, Inception Score, precision, and recall, exhibiting only negligible variations. This consistency demonstrates that the proposed framework is \emph{encoder-agnostic and classifier-agnostic}, since the condensation performance depends primarily on the geometric and statistical structure of the learned embedding space rather than the specific encoder design.
Each feature encoder is paired with its corresponding frozen classifier head, and no additional supervision or fine-tuning is introduced. The classifier serves solely as a semantic probe for computing confidence regularization, ensuring consistent conditions across all configurations. These results confirm that the method generalizes well to different pretrained encoders (classifiers) and maintains stable performance across different backbone designs.

\subsection{Further Analysis of One-sided Partial Optimal Transport}
\label{g4}

\begin{figure}
    \centering
    \includegraphics[width=0.4\linewidth]{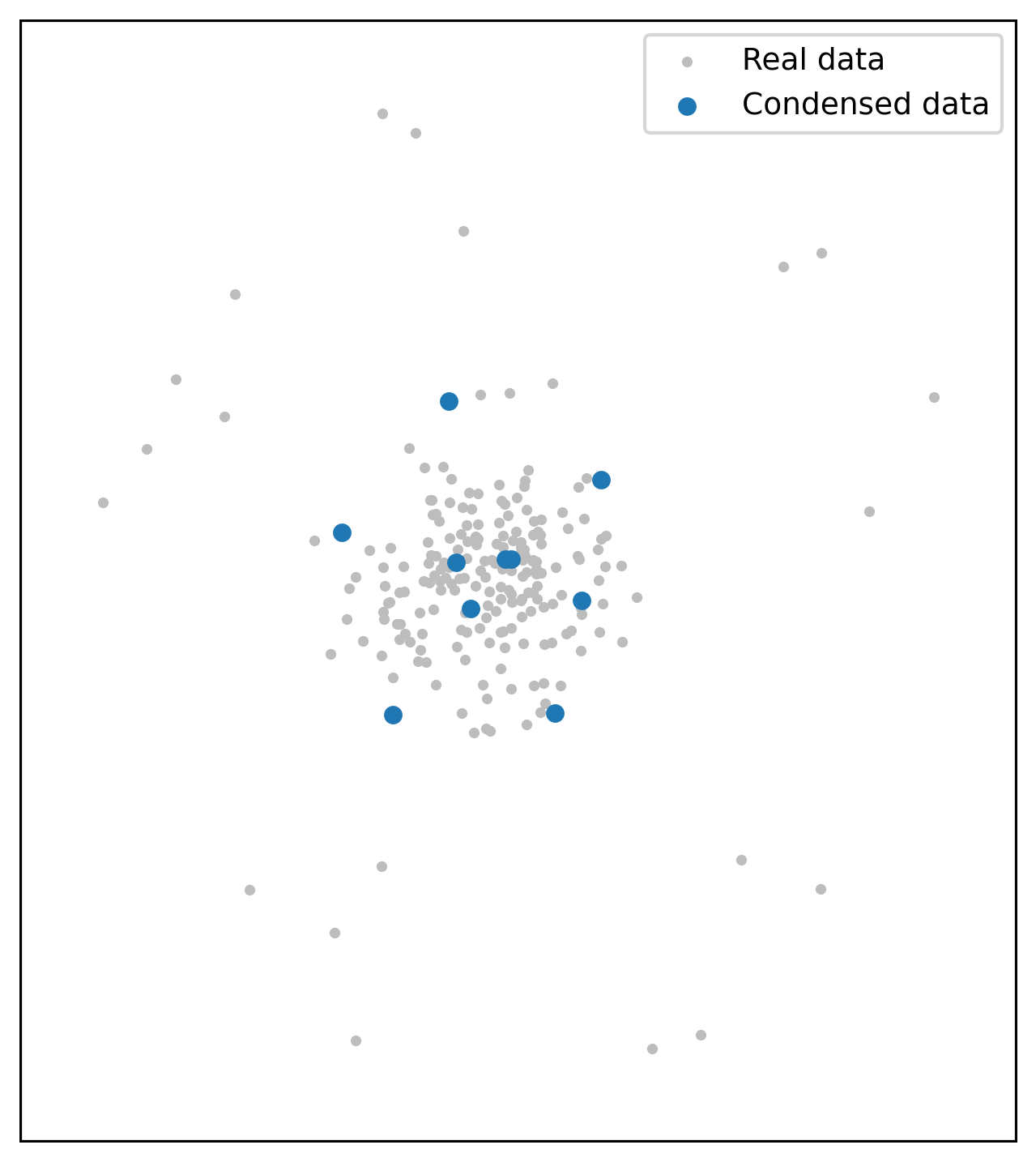}
    \hspace{-0.5em}
      \includegraphics[width=0.4\linewidth]{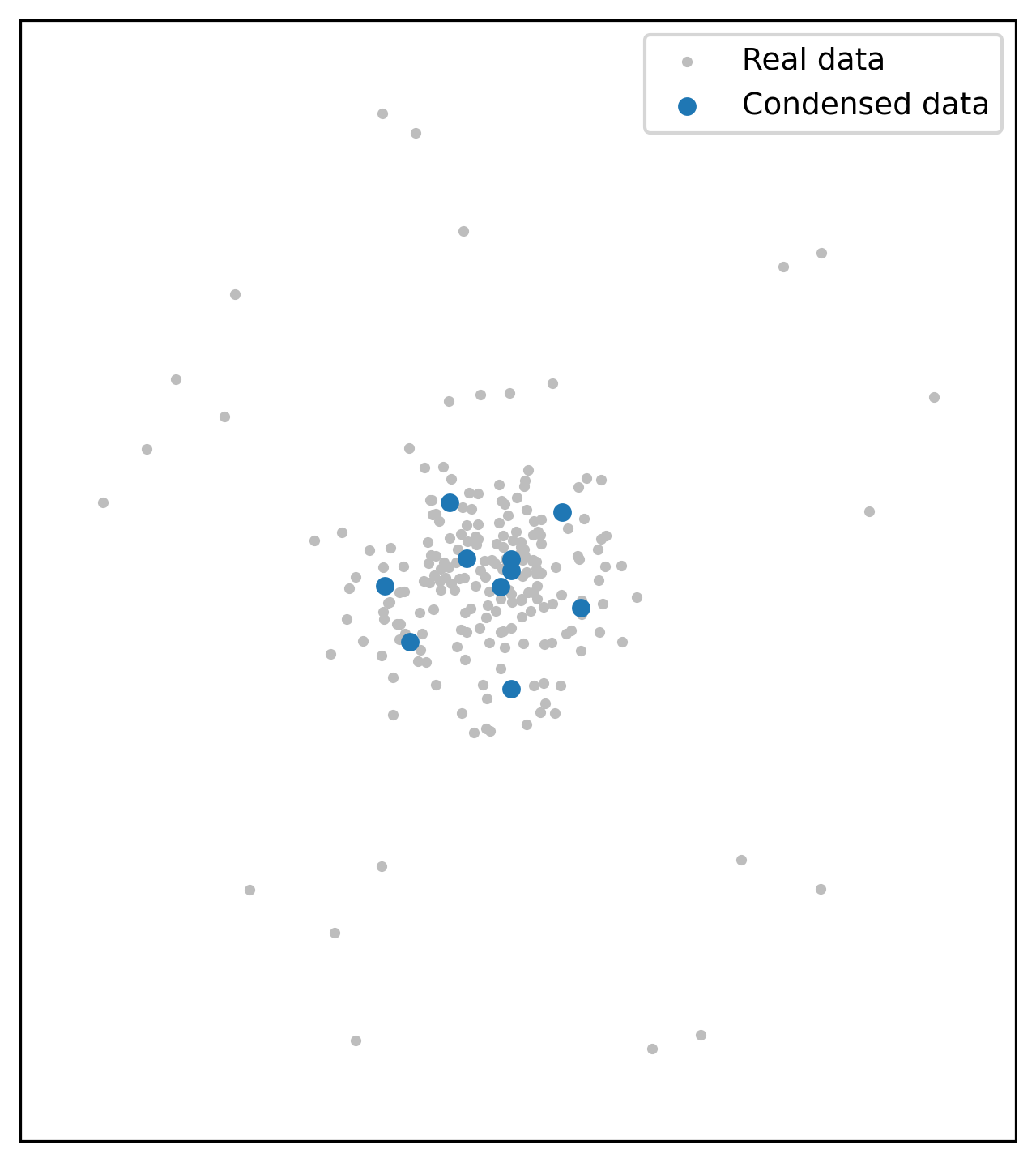}
    \caption{Visualization of the condensed subset selected by classical balanced OT (left) and one-sided partial OT (right) on a 2D toy dataset.
Grey dots denote real samples, and blue dots denote selected condensed samples.}
    \label{draw}
\end{figure}

As illustrated in Fig.~\ref{draw}, we further visualize a 2D toy case to compare the geometric behaviors of classical balanced OT and our one-sided partial OT.
Under the balanced formulation, the transport plan enforces uniform mass matching across all real samples, including those lying in peripheral low-density regions.
Such peripheral samples distort transport distances and mislead the alignment away from the high-density core manifold, resulting in redundant assignments and an over-dispersed condensed subset.
In contrast, partial OT introduces a capacity relaxation that allows unmatched mass in these low-density areas, effectively suppressing spurious couplings and concentrating the condensed subset on the high-density manifold.
This selective relaxation yields a more compact and geometry-consistent representation, highlighting the superiority of one-sided partial transport in mitigating structural imbalance and preserving the intrinsic data geometry.

\begin{table}[t]
\centering
\caption{Density-bin comparison between POT ($\kappa=1.05$) and classical balanced OT for class~0 ($k_\text{NN}=10$, 20 bins).}
\resizebox{0.65\linewidth}{!}{
\begin{tabular}{c|c|c|cc|cc}
\toprule
\textbf{Bin} & \textbf{Density Quantile Range} & \textbf{N} &
\multicolumn{2}{c|}{\textbf{POT ($\kappa=1.05$)}} &
\multicolumn{2}{c}{\textbf{OT ($\kappa=1.0$)}} \\
\cmidrule(lr){4-5}\cmidrule(lr){6-7}
 &  &  & Mass Fraction & Avg.~$C$ & Mass Fraction & Avg.~$C$ \\
\midrule
0  & [0.0429, 0.0741] & 65 & 0.0068 & 204.1019 & 0.0500 & 303.6486 \\
1  & [0.0741, 0.0961] & 65 & 0.0456 & 158.8873 & 0.0500 & 163.7466 \\
2  & [0.0961, 0.1109] & 65 & 0.0526 & 117.6554 & 0.0500 & 117.3249 \\
3  & [0.1109, 0.1233] & 65 & 0.0526 & 94.1423  & 0.0500 & 93.3965  \\
4  & [0.1233, 0.1355] & 65 & 0.0526 & 91.5579  & 0.0500 & 90.9135  \\
5  & [0.1355, 0.1438] & 65 & 0.0527 & 77.3135  & 0.0500 & 76.4463  \\
6  & [0.1438, 0.1517] & 65 & 0.0527 & 77.7354  & 0.0500 & 76.8634  \\
7  & [0.1517, 0.1589] & 65 & 0.0527 & 69.8942  & 0.0500 & 68.8171  \\
8  & [0.1589, 0.1672] & 65 & 0.0527 & 67.3078  & 0.0500 & 66.1017  \\
9  & [0.1672, 0.1724] & 65 & 0.0527 & 64.1577  & 0.0500 & 62.9709  \\
10 & [0.1724, 0.1786] & 65 & 0.0527 & 62.1195  & 0.0500 & 60.9175  \\
11 & [0.1786, 0.1846] & 65 & 0.0527 & 56.5397  & 0.0500 & 55.3196  \\
12 & [0.1846, 0.1907] & 65 & 0.0527 & 53.5391  & 0.0500 & 52.3477  \\
13 & [0.1907, 0.1979] & 65 & 0.0527 & 53.2931  & 0.0500 & 52.0506  \\
14 & [0.1979, 0.2049] & 65 & 0.0527 & 51.6530  & 0.0500 & 50.3091  \\
15 & [0.2049, 0.2132] & 65 & 0.0527 & 48.0480  & 0.0500 & 46.8166  \\
16 & [0.2132, 0.2216] & 65 & 0.0527 & 46.8586  & 0.0500 & 45.5787  \\
17 & [0.2216, 0.2292] & 65 & 0.0527 & 42.4270  & 0.0500 & 41.2335  \\
18 & [0.2292, 0.2468] & 65 & 0.0527 & 39.1556  & 0.0500 & 37.9536  \\
19 & [0.2468, 0.2878] & 65 & 0.0527 & 35.5625  & 0.0500 & 34.4713  \\
\bottomrule
\end{tabular}}
\label{tab:density_bin_class0_cap105}
\end{table}

\begin{table}[t]
\centering
\caption{Density-bin comparison between POT ($\kappa=1.05$) and classical balanced OT  for class~1 ($k_\text{NN}=10$, 20 bins).}
\resizebox{0.65\linewidth}{!}{
\begin{tabular}{c|c|c|cc|cc}
\toprule
\textbf{Bin} & \textbf{Density Quantile Range} & \textbf{N} &
\multicolumn{2}{c|}{\textbf{POT ($\kappa=1.05$)}} &
\multicolumn{2}{c}{\textbf{OT ($\kappa=1.0$)}} \\
\cmidrule(lr){4-5}\cmidrule(lr){6-7}
 &  &  & Mass Fraction & Avg.~$C$ & Mass Fraction & Avg.~$C$ \\
\midrule
0  & [0.0492, 0.0654] & 65 & 0.0094 & 269.4661 & 0.0500 & 330.3231 \\
1  & [0.0654, 0.0763] & 65 & 0.0424 & 234.1719 & 0.0500 & 239.7473 \\
2  & [0.0763, 0.0869] & 65 & 0.0517 & 190.7487 & 0.0500 & 188.7713 \\
3  & [0.0869, 0.0979] & 65 & 0.0527 & 152.7371 & 0.0500 & 149.8369 \\
4  & [0.0979, 0.1066] & 65 & 0.0527 & 135.2751 & 0.0500 & 131.3823 \\
5  & [0.1066, 0.1139] & 65 & 0.0527 & 119.0013 & 0.0500 & 115.9252 \\
6  & [0.1139, 0.1209] & 65 & 0.0527 & 105.5147 & 0.0500 & 102.6385 \\
7  & [0.1209, 0.1281] & 65 & 0.0527 & 96.2394  & 0.0500 & 93.5553  \\
8  & [0.1281, 0.1350] & 65 & 0.0527 & 90.6207  & 0.0500 & 88.0241  \\
9  & [0.1350, 0.1435] & 65 & 0.0527 & 83.5750  & 0.0500 & 81.5046  \\
10 & [0.1435, 0.1505] & 65 & 0.0527 & 71.8745  & 0.0500 & 70.3046  \\
11 & [0.1505, 0.1575] & 65 & 0.0527 & 72.7010  & 0.0500 & 71.1531  \\
12 & [0.1575, 0.1651] & 65 & 0.0527 & 68.6823  & 0.0500 & 67.4286  \\
13 & [0.1651, 0.1706] & 65 & 0.0527 & 65.1971  & 0.0500 & 63.7957  \\
14 & [0.1706, 0.1766] & 65 & 0.0527 & 61.3405  & 0.0500 & 60.1314  \\
15 & [0.1766, 0.1826] & 65 & 0.0527 & 59.9761  & 0.0500 & 58.8863  \\
16 & [0.1826, 0.1900] & 65 & 0.0527 & 58.6127  & 0.0500 & 57.4309  \\
17 & [0.1900, 0.1987] & 65 & 0.0527 & 55.1397  & 0.0500 & 54.0474  \\
18 & [0.1987, 0.2096] & 65 & 0.0527 & 50.5011  & 0.0500 & 49.5143  \\
19 & [0.2096, 0.2382] & 65 & 0.0527 & 43.4175  & 0.0500 & 42.5779  \\
\bottomrule
\end{tabular}}
\label{tab:density_bin_class1_cap105}
\end{table}

As reported in Tables~\ref{tab:density_bin_class0_cap105} and~\ref{tab:density_bin_class1_cap105}, we further quantify the density–mass relationship under one-sided partial OT.
Compared to the balanced counterpart, POT consistently allocates lower mass fraction and smaller average cost $C$ in low-density bins (e.g., bins 0–2), while slightly increasing both in high-density regions.
This pattern indicates that POT adaptively suppresses unreliable transport in sparse areas, thereby emphasizing dense and structurally coherent regions.
Such redistribution aligns with the geometric intuition of partial matching, where the capacity relaxation effectively filters peripheral noise and promotes compact manifold coverage.
Overall, this selective mass reallocation demonstrates that one-sided partial OT achieves a better balance between coverage and compactness, faithfully capturing the intrinsic data geometry while avoiding over-assignment to outliers.

\begin{table}[]
    \centering
    \caption{Compatibility of our alignment-driven discrete subset selection framework with existing synthetic set generation objectives,
including DM~\cite{dm}, M3D~\cite{zhang2024m3d}, and M3D+OPTICAL~\cite{optical}.
These objectives are instantiated as alignment criteria within our discrete optimization framework and evaluated on
ImageNet~\cite{imagenet} at $256\times256$ using DiT-L/2~\cite{dit}.}
      \label{tab:adaptation}
 \resizebox{0.67\linewidth}{!}{
    \begin{tabular}{c|c|c|cccc}
\toprule
Method &Data Budget &Eval. Samples& FID$\downarrow$ & IS$\uparrow$ & Precision$\uparrow$ & Recall$\uparrow$ \\
\midrule
  DM~\cite{dm}  &0.8\% (10K)&50K&4.34&321.9&\bf0.78&0.23  \\
  M3D~\cite{zhang2024m3d}     & 0.8\% (10K)&50K&3.83&317.1&0.75&0.24 \\
   M3D+OPTICAL~\cite{optical}    &0.8\% (10K)&50K&3.72&312.6&0.76&0.26 \\
   Ours &0.8\% (10K) &50K&\bf3.43&\bf414.3&\bf0.78& \bf0.28\\
\bottomrule
    \end{tabular}}
\end{table}

\subsection{Adapting Existing Distribution-Matching Objectives within Our Alignment-Driven Framework}
\label{adaptation}
Most existing synthetic set generation methods are not directly applicable to diffusion model training,
as they rely on continuous sample synthesis and are primarily designed for discriminative or reconstruction-oriented objectives.
Such synthesized samples often exhibit limited fidelity and are ill-suited for likelihood-based generative modeling.
Nevertheless, a subset of these methods is built upon distribution-matching principles,
which are conceptually aligned with the goal of diffusion model training.

In this subsection, we demonstrate that distribution-matching objectives originally proposed for synthetic set generation
can be seamlessly incorporated into our alignment-driven discrete subset selection framework.
Specifically, we retain the proposed discrete optimization pipeline and selection strategy,
while replacing our alignment objective with alternative distribution-matching objectives,
including those from DM~\cite{dm}, M3D~\cite{zhang2024m3d}, and M3D+OPTICAL~\cite{optical}.
All other components of the framework remain unchanged.
This setting allows us to isolate the effect of the objective itself and evaluate its compatibility with discrete real subset selection.

Table~\ref{tab:adaptation} reports the resulting performance on ImageNet.
While these adapted objectives yield reasonable results when instantiated within our framework,
they consistently underperform our proposed formulation.
This suggests that our alignment-driven objective is better suited for discrete subset selection under diffusion model training.
At the same time, the competitive performance of these adapted objectives indicates that our framework
serves as an effective bridge between synthetic set generation objectives and real subset selection,
providing a unified carrier for evaluating and comparing diffusion-oriented alignment criteria.

We further analyze the behavior of different distribution-matching objectives when instantiated within our framework.
DM~\cite{dm} adopts an MMD-based objective that primarily enforces consistency of first-order statistics.
As a result, the selected samples tend to concentrate around the central region of the data distribution,
leading to limited coverage and weaker representativeness,
which in turn degrades generative performance.
M3D~\cite{zhang2024m3d} extends this formulation by computing MMD in the kernel Hilbert space,
thereby encouraging consistency of higher-order statistics beyond the mean.
This modification substantially strengthens distributional alignment
and yields clear improvements over DM in terms of both coverage and generation quality.
Building upon this, M3D+OPTICAL~\cite{optical} further incorporates optimal transport to reallocate sample contributions,
explicitly adjusting the matching between the selected subset and the full distribution.
This transport-based correction enhances set-level alignment and leads to additional performance gains compared to MMD-based objectives alone.
Despite these improvements, these objectives do not explicitly enforce semantic consistency of the selected samples.
Consequently, while distributional alignment is improved, semantic reliability is not guaranteed,
which is reflected in relatively lower inception scores.
Moreover, the use of full matching in these objectives can overemphasize peripheral or low-density regions,
introducing sensitivity to boundary samples that may partially distort the selected set.
In contrast, our method combines geometry-aware alignment with semantic regularization
and adopts a one-sided partial matching formulation,
allowing it to focus on representative regions while preserving semantic fidelity,
thereby achieving consistently superior performance.

\subsection{Clarification for the Meaning of ``Geometry''}
Our method is not a generic metric matching approach but rather a distributional geometry alignment framework specifically designed for the score-matching objective of diffusion models.

In our paper, we use the term ``geometry'' not in the strict differential-geometric sense, e.g., curvature or topology, but to refer to the \textbf{distributional support geometry} of the data in representation space. This concept encompasses the \textbf{local neighborhood structure}, \textbf{mode coverage}, and \textbf{support allocation}. Specifically, this is the type of geometry emphasized in Section~3.2, which is crucial for diffusion models to learn an accurate score function without distorting the underlying manifold. In this context, preserving relative relationships and local structure takes precedence over scalar ranking, as these aspects are directly relevant to the model's ability to capture the true data distribution.

In modern geometric analysis, this view is consistent with the classical understanding that the \(L_2\) Wasserstein distance is not merely a distance penalty, but endows the space of probability measures with a geometric structure. The cost matrix defines the distances between samples, but it is the OT/POT mechanism that realizes the alignment of data distributions in a way that captures the distributional geometry in representation space. One-sided POT is particularly important here. Under extreme condensation budgets, balanced OT enforces symmetric full mass matching between real and selected samples and may bias the selected set away from the core manifold. Our one-sided POT relaxes this requirement and suppresses such spurious couplings. By concentrating mass on the high-density, stable support, POT ensures that the geometry-preserving alignment is directly aligned with the score-matching objective of diffusion models, which requires maintaining local structure and relative relationships.

We directly validate this notion of geometry in the paper, rather than merely reporting final generative metrics. Our analysis includes manifold coverage, mean nearest-neighbor distance, and the effect of POT in avoiding over-assignment to peripheral regions, as shown in Figure~5, Figure~7, Table~22, and Table~23. All of these measures are designed to assess the preservation of neighborhood- and support-level geometry in representation space.

Here we report additional metrics that are designed to quantify how well a selected subset preserves the \textbf{neighborhood geometry} of the full dataset:
\begin{itemize}
    \item \textbf{Mean nearest selected distance} measures the average distance from each full-dataset sample to its nearest selected sample, with lower values indicating that the selected subset stays closer to the original data manifold.
    \item \textbf{Coverage at \(k\)-radius} measures the proportion of samples whose nearest selected sample falls within that sample's own local \(k\)-NN radius, such as the 10-NN radius, so higher values indicate better local coverage of the original distribution.
    \item \textbf{Nearest selected in \(k\)-NN rate} further examines whether the nearest selected sample truly belongs to the sample's original \(k\) nearest neighbors, with higher values reflecting better preservation of authentic local neighbor relationships.
    \item \textbf{Mean rank of nearest selected} computes the average rank position of the nearest selected sample in the full neighbor ordering of each point, where lower values mean that selected samples tend to remain top-ranked local neighbors rather than merely nearby points.
\end{itemize}

These metrics are particularly suitable for quantifying neighborhood geometry preservation because they evaluate complementary aspects of local structure, including distance proximity, local coverage, neighbor identity consistency, and relative rank ordering. Results in Table~\ref{tab:neighborhood_geometry} show that our method preserves local neighborhood geometry much better than D$^2$C.

\begin{table}[htbp]
\centering
\small
\caption{Neighborhood geometry preservation of selected subsets. Lower mean nearest distance and mean nearest selected rank are better, while higher coverage and nearest-selected-in-10NN rate are better.}
\label{tab:neighborhood_geometry}
\begin{tabular}{lcccc}
\hline
\textbf{Subset} 
& \textbf{\begin{tabular}[c]{@{}c@{}}Mean Nearest\\ Distance \(\downarrow\)\end{tabular}}
& \textbf{\begin{tabular}[c]{@{}c@{}}Coverage@10\\ -radius \(\uparrow\)\end{tabular}}
& \textbf{\begin{tabular}[c]{@{}c@{}}Nearest-Selected\\ -in-10NN \(\uparrow\)\end{tabular}}
& \textbf{\begin{tabular}[c]{@{}c@{}}Mean Nearest\\ Selected Rank \(\downarrow\)\end{tabular}} \\
\hline
D\textsuperscript{2}C (Class 0) & 8.9 & 0.057 & 0.051 & 125.59 \\
Ours (Class 0)                  & 6.7 & 0.140 & 0.138 & 103.40 \\
D\textsuperscript{2}C (Class 1) & 9.5 & 0.043 & 0.036 & 204.39 \\
Ours (Class 1)                  & 7.8 & 0.150 & 0.144 & 103.79 \\
\hline
\end{tabular}
\end{table}

\begin{figure}[t]
    \centering
    \setlength{\tabcolsep}{4pt}
    \begin{tabular}{ccc}
        \includegraphics[width=0.24\linewidth]{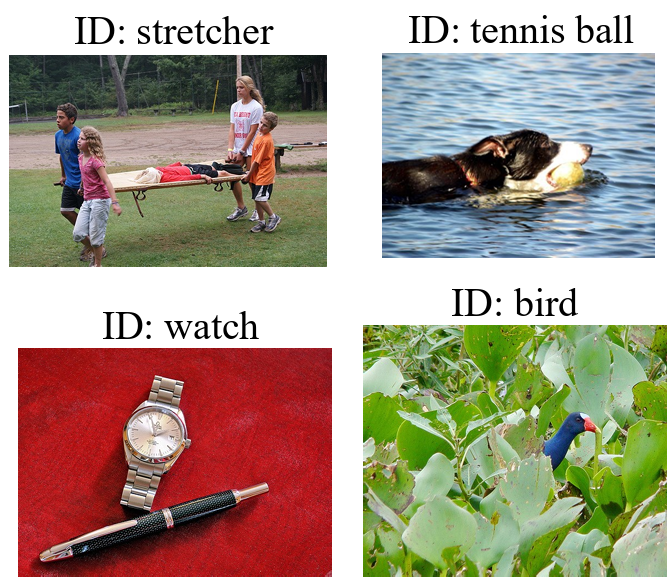} &
        \includegraphics[width=0.42\linewidth]{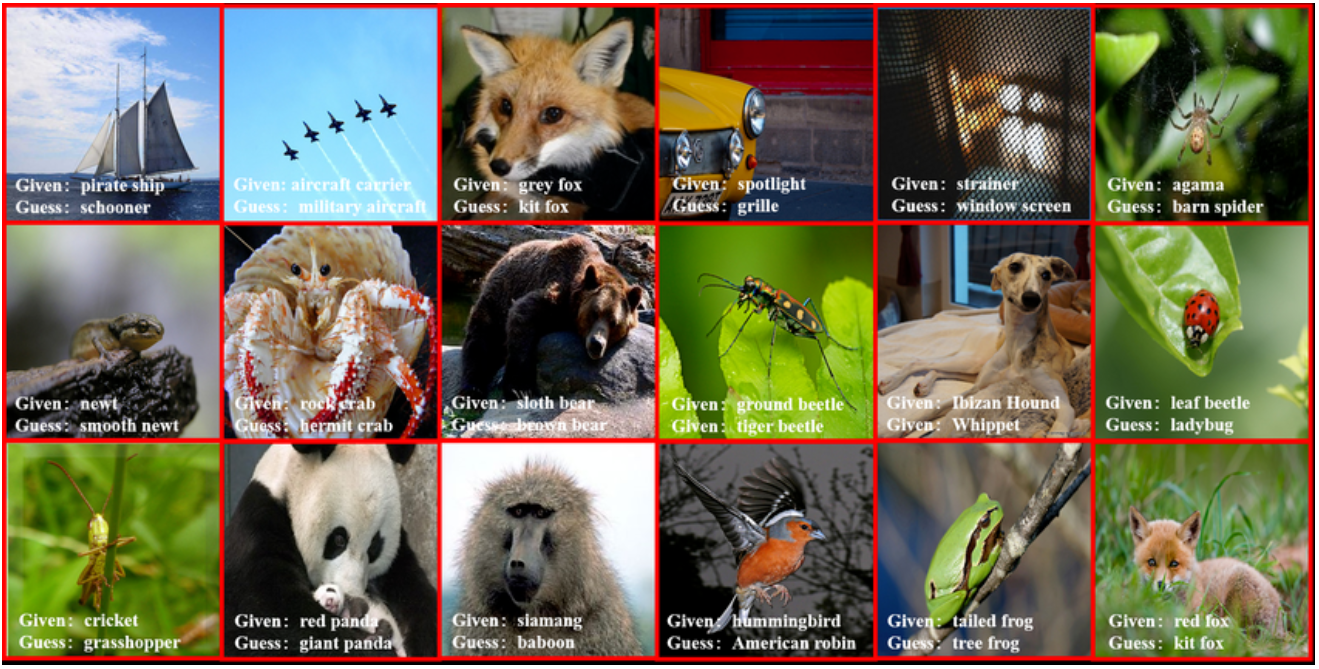} &
        \includegraphics[width=0.24\linewidth]{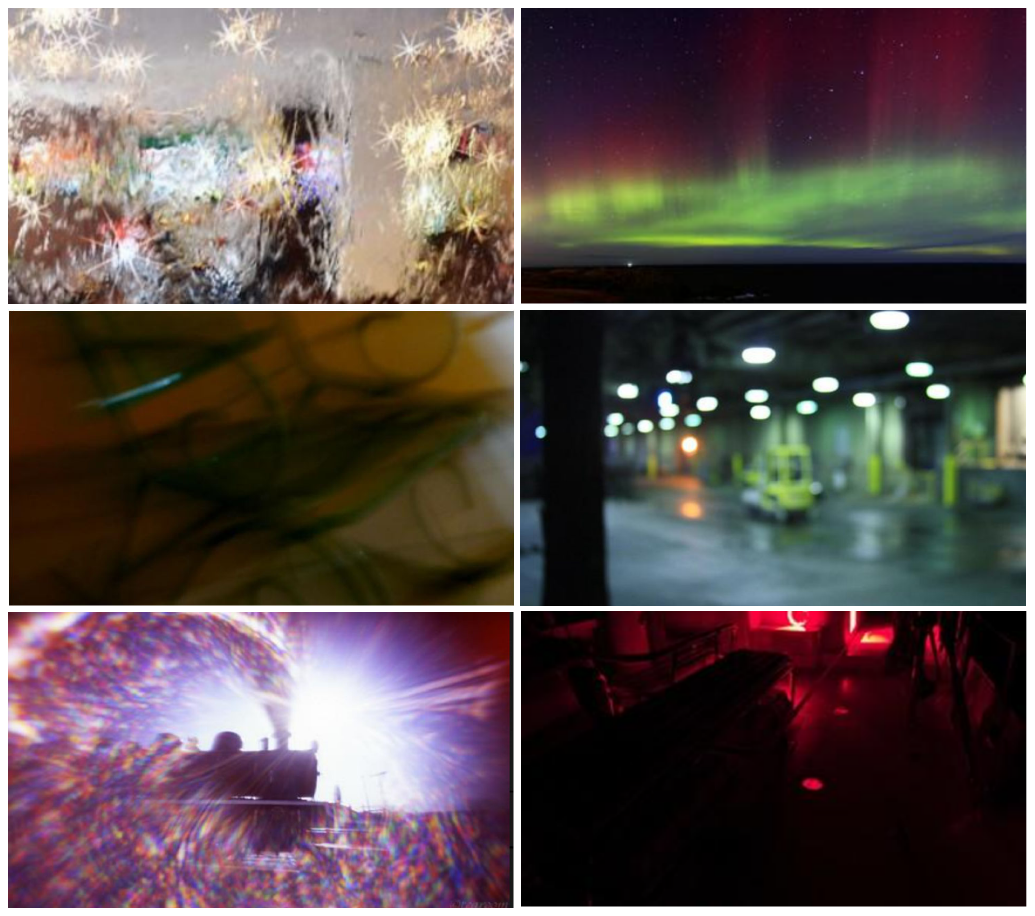} \\
        (a) Multiple salient objects &
        (b) Mislabeled samples &
        (c) Severely degraded samples
    \end{tabular}
    \caption{
    Representative failure cases in large-scale datasets such as ImageNet~\cite{imagenet} that motivate confidence regularization.
    (a) Images containing multiple salient objects, where the labeled category is not visually dominant (images adapted from~\cite{miyai2025gl}).
    (b) Genuinely mislabeled samples, where the assigned category is inconsistent with the actual visual content (images adapted from~\cite{zheng2025cyclic}).
    (c) Severely degraded samples with extreme blur, occlusion, or scale, where the labeled object is barely recognizable.
    While only (b) constitutes true annotation errors, all three cases can introduce unreliable geometric anchors and distort distribution alignment if treated equally.
    }
    \label{fig:confidence_motivation_examples}
\end{figure}

\subsection{Why Confidence Regularization Matters: Low-confidence Samples Undermine Geometry-Preserving Alignment}
\label{app:confidence_motivation}

\paragraph{Motivation.}
Our one-sided POT objective is designed to preserve feature-space geometry, yet large-scale datasets such as ImageNet inevitably contain images whose \emph{semantic evidence} for the nominal class label is unreliable.
As illustrated in Fig.~\ref{fig:confidence_motivation_examples}, representative cases include
(i) scenes with multiple salient objects where the labeled category is not visually dominant;
(ii) genuinely mislabeled images where the assigned category is inconsistent with the actual visual content; and
(iii) severely degraded images where extreme blur, occlusion, or scale makes the labeled object barely recognizable.
Importantly, these cases are not simply ``hard'' classification instances; for conditional generative modeling they correspond to unreliable (and sometimes contradictory) conditioning signals, and treating them on par with clean data can bias distribution-preserving condensation.

\paragraph{A concrete example: how low-confidence selections harm geometric alignment.}
Consider a class $c$ (e.g., \texttt{golden retriever}) and a candidate image whose visual content is dominated by background clutter or co-occurring objects, or whose labeled object is barely visible.
A pretrained classifier typically assigns a low confidence $p(c\mid \cdot)$ to such an image, reflecting weak class-consistent evidence.
In the embedding space induced by the pretrained encoder, these images are often mapped to regions closer to background-driven patterns or class-boundary areas than to the core class-conditional manifold.
The key issue arises when such a sample is \emph{selected} into the condensed set $S_c$:
it becomes a source-side \emph{geometric anchor} whose pairwise distances to other selected samples
contribute to the global geometry used for alignment.
Because its embedding is semantically unreliable and often lies off the class-conditional manifold,
it can distort the overall distance structure of the selected set,
shift the effective geometric center, and consume subset capacity to represent background-driven or boundary regions.
Under tight condensation budgets, even a small number of such unreliable anchors can substantially reduce intra-class coherence
and bias the selected set away from the dominant manifold,
which in turn compromises distribution-preserving alignment for diffusion training.

\paragraph{Why this is specifically harmful for diffusion-oriented condensation.}
Diffusion training is driven by likelihood maximization and is therefore highly sensitive to biases in the \emph{support} and \emph{global geometry} of the training distribution.
Unlike discriminative training, where a small number of atypical hard examples may help sharpen decision boundaries, diffusion models require the training subset to faithfully represent the class-conditional data manifold.
When semantically unreliable images enter $S_c$, they effectively behave as geometric outliers:
their embeddings distort the global distance structure of the subset and misallocate limited subset capacity toward background-driven or semantically inconsistent regions, rather than the dominant class manifold.
This effect is particularly pronounced under tight condensation budgets, where each selected sample carries substantial representational weight.
Moreover, severely degraded images provide corrupted or information-poor visual evidence, which is fundamentally misaligned with the objective of learning high-fidelity generative distributions and can directly impair likelihood-based diffusion training.
As a result, retaining such samples degrades distributional alignment and manifests as reduced semantic clarity and lower distribution fidelity in generation.

\paragraph{Confidence regularization as a semantic reliability constraint for discrete selection.}
We therefore introduce the confidence regularization
\[
\mathcal{L}_{\mathrm{conf}}=
\frac{1}{m}\sum_{i=1}^m -\log p(c\mid \mathbf{x}_i),
\]
which penalizes selecting samples that a pretrained classifier deems weakly supported by the nominal class label.
In practice, low classifier confidence correlates well with the failure cases illustrated in Fig.~\ref{fig:confidence_motivation_examples}, including mislabeled samples, severely degraded images, and visually non-dominant instances, whose embeddings are more likely to be unstable and less faithful to the class manifold.
By incorporating $\mathcal{L}_{\mathrm{conf}}$ into the subset objective optimized by greedy construction and swap-based refinement, such low-confidence candidates become less likely to be selected and more likely to be replaced if selected, thereby preventing unreliable geometric anchors from entering $S_c$.
This interpretation is consistent with our ablation behavior: removing $\mathcal{L}_{\mathrm{conf}}$ primarily decreases IS, indicating degraded class-conditional semantic clarity due to the inclusion of semantically unreliable selections.

\begin{figure}
    \centering
    \includegraphics[width=0.8\linewidth]{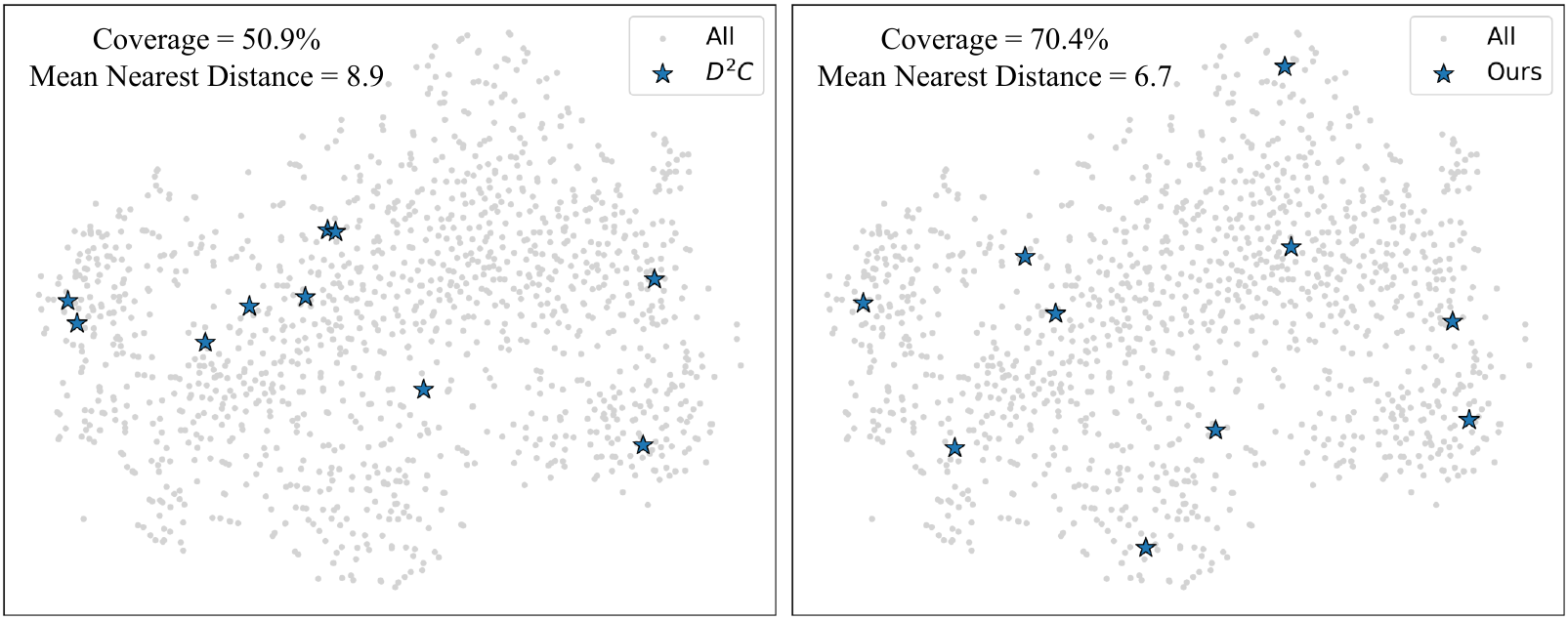}
    \caption{
    t-SNE visualization comparing D$^2$C~\cite{ddc} (left) and our method (right).
    We visualize the real data (\color{gray}{gray}\color{black}) from class 0 (tench) of ImageNet and the selected subset samples (\color{myblue}{blue}\color{black}).
In this visualization, D$^2$C selects samples by uniformly spacing along a scalar difficulty ranking, which can lead to uneven coverage in the embedding distribution.
In contrast, our one-sided partial OT–guided alignment-driven selection produces subsets that appear more broadly distributed, providing qualitative intuition for the improved alignment achieved by our method.
    }
    \label{tsne}
\end{figure}

\begin{figure}
    \centering
    \includegraphics[width=0.85\linewidth]{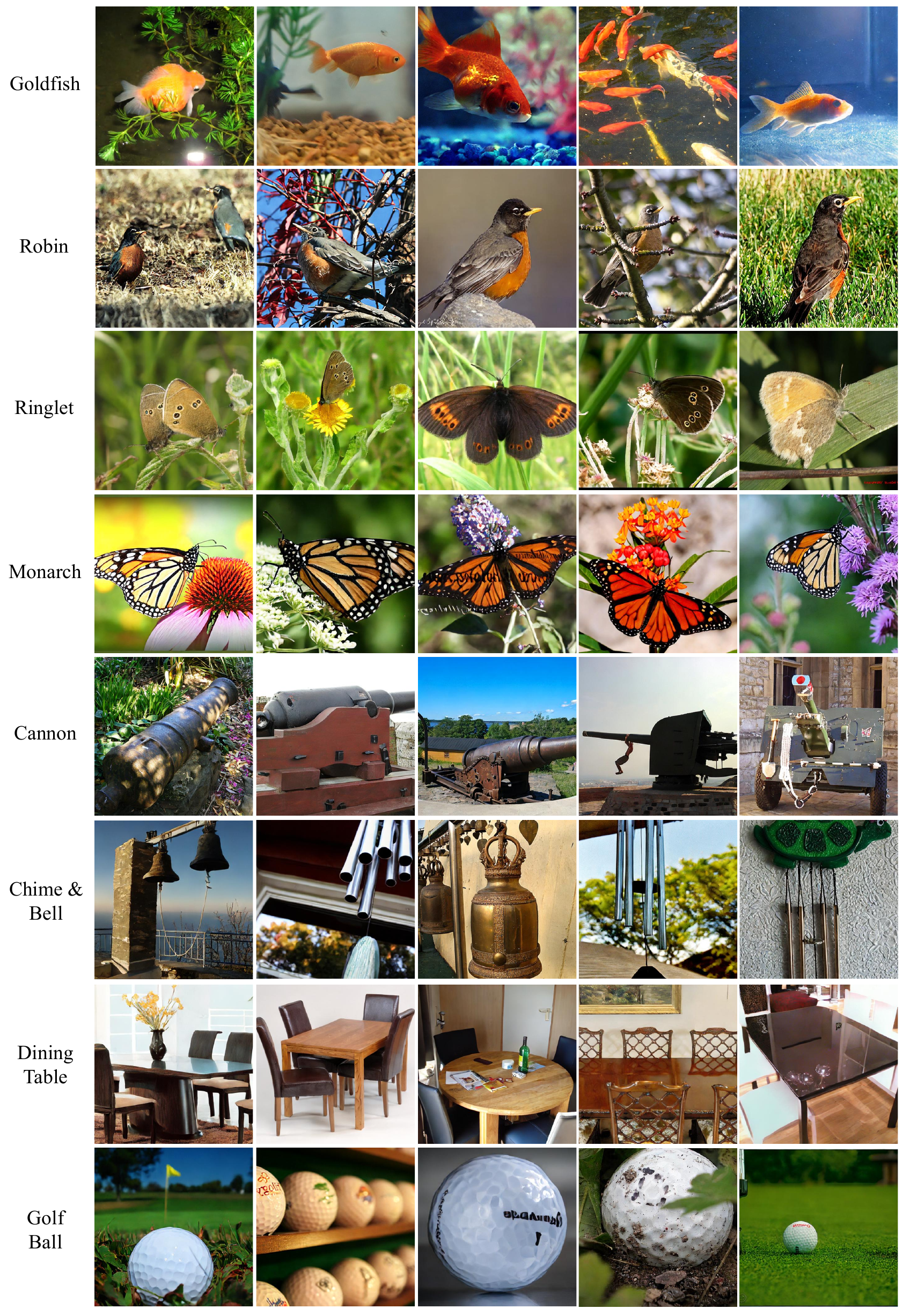}
\caption{Examples of 512 $\times$ 512 images generated by DiT-L/2~\cite{dit} after 10K training iterations on a condensed dataset (10K data budget) selected by our method.}
    \label{appendix_visual}
\end{figure}
\section{More Visualization Results}
Figure~\ref{appendix_visual} showcases $512\times512$ images generated by DiT-L/2 trained for only 10K iterations on the condensed dataset obtained with our method. The results demonstrate high visual fidelity, rich diversity, and rapid convergence, indicating that our condensed subset provides broad and faithful coverage of the real data manifold, effectively supporting high-quality diffusion training even under limited iterations.
Figure~\ref{appendix_visual2} further presents $256\times256$ samples generated by DiT-L/2 trained for 30K iterations using our 50K-image condensed subset. 
The model produces diverse and semantically coherent images with well-preserved fine-grained details, illustrating that the larger budget enhances coverage of relatively low-density regions and yields stable generative behavior.
\begin{figure}
    \centering
    \includegraphics[width=0.9\linewidth]{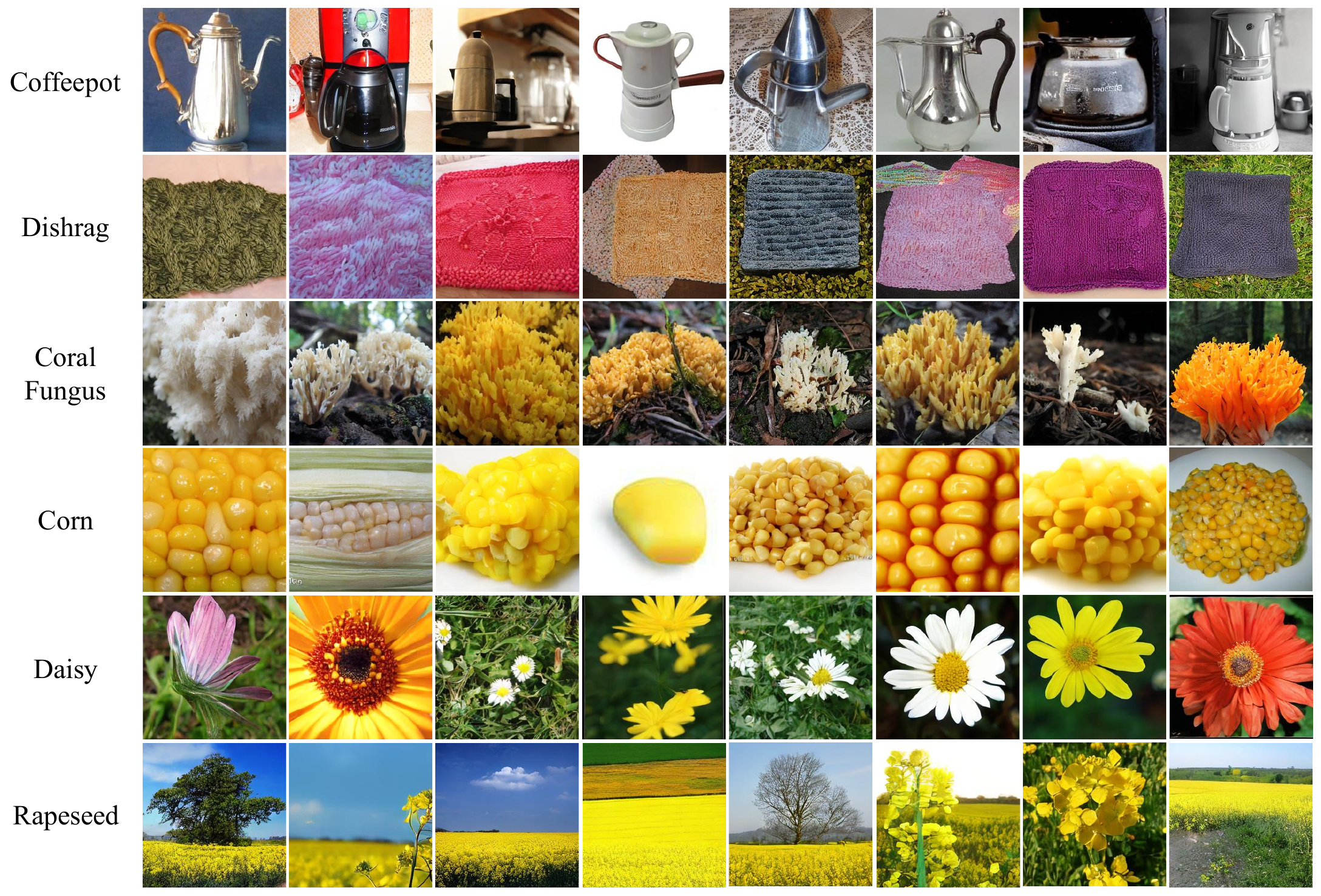}
\caption{Examples of 256 $\times$ 256 images generated by DiT-L/2~\cite{dit} after 30K training iterations on a condensed dataset (50K data budget) selected by our method.}
    \label{appendix_visual2}
\end{figure}

Figure~\ref{tsne} provides a qualitative comparison between D$^2$C~\cite{ddc} and our method on a representative ImageNet class (\emph{tench}).
We visualize the real data distribution in the embedding space using t-SNE, where gray dots denote all real samples and blue markers indicate the selected subset.
As shown in Figure~\ref{tsne} (left), D$^2$C selects samples by uniformly spacing along a scalar difficulty ranking, which can lead to uneven coverage of the underlying embedding distribution and leave certain regions under-represented.
In contrast, our method (Figure~\ref{tsne}, right), guided by one-sided partial optimal transport, selects samples that are more broadly distributed across the data manifold.
This results in higher coverage and a reduced mean nearest-neighbor distance to the real data, providing qualitative evidence that our approach achieves better geometric alignment with the full data distribution.

\begin{algorithm}[t]
\caption{Per-Class Alignment-driven Real Subset Selection via One-Sided Partial OT}
\label{alg:main}
\begin{algorithmic}[1]
\REQUIRE
Full dataset $\mathcal{T}=\bigcup_{c=1}^{C}\mathcal{T}_c$;
subset size per class $m$;
embedding extractor (to obtain $\mathcal{T}_{c(e)}$);
entropic OT parameters $(\varepsilon, T, \kappa)$;
weights $(\alpha,\beta)$;
batch size $B$;
maximum refinement iterations $I$
\ENSURE
Condensed subset $\mathcal{S}=\bigcup_{c=1}^{C}\mathcal{S}_c$ with $|\mathcal{S}_c|=m$

\FOR{each class $c$}
    \STATE Compute feature embeddings $\mathcal{T}_{c(e)}=\{\mathbf{y}_j\}_{j=1}^{n}$ from $\mathcal{T}_c$
    \STATE Compute confidence scores $p(c\mid \mathbf{y}_j)$ for $\mathcal{T}_c$
    \STATE Initialize $\mathcal{S}_c \leftarrow \textsc{GreedyInit}(\mathcal{T}_{c(e)}, m)$
    \STATE Refine $\mathcal{S}_c \leftarrow \textsc{SwapRefine}(\mathcal{S}_c,\mathcal{T}_{c(e)}, I)$
\ENDFOR
\STATE Gather $\{\mathcal{S}_c\}$ and return $\mathcal{S}$
\end{algorithmic}
\end{algorithm}

\begin{algorithm}[t]
\caption{One-Sided Partial OT Loss $\mathcal{L}_{\mathrm{OT}}$ via Dummy Source}
\label{alg:pot}
\begin{algorithmic}[1]
\REQUIRE
Condensed embeddings $\mathcal{S}_{c(e)}=\{\mathbf{x}_i\}_{i=1}^{m}$;
real embeddings $\mathcal{T}_{c(e)}=\{\mathbf{y}_j\}_{j=1}^{n}$;
$\varepsilon$; total Sinkhorn iters $T$; capacity scaling $\kappa\ge 1$; and scaling coefficient $\gamma$
\ENSURE
One-sided partial OT loss $\mathcal{L}_{\mathrm{OT}}$

\STATE Construct cost matrix $\mathbf{C}\in\mathbb{R}^{m\times n}$ with $\mathbf{C}_{ij}=\|\mathbf{x}_i-\mathbf{y}_j\|_2^2$
\STATE Construct augmented cost matrix
$
\mathbf{C}_{\text{aug}}=
\begin{bmatrix}
\mathbf{C}\\
\mathrm{median}(\mathbf{C})\gamma\mathbf{1}_n^{\top}
\end{bmatrix}
$
\STATE Set source marginal $\boldsymbol{\mu}=\frac{1}{m}\mathbf{1}_m$ \COMMENT{fully shipped}
\STATE Set reference target marginal $\bar{\boldsymbol{\nu}}=\frac{1}{n}\mathbf{1}_n$ and target capacity $\boldsymbol{\nu}=\kappa\bar{\boldsymbol{\nu}}$
\STATE Set dummy mass $s=\kappa-1$ and augmented source marginal $[\boldsymbol{\mu}; s]$
\STATE Initialize Gibbs kernel $\mathbf{K}=\exp(-\mathbf{C}_{\text{aug}}/\varepsilon)$
\STATE Initialize Sinkhorn scalings $\mathbf{u}\leftarrow\mathbf{1}$, $\mathbf{v}\leftarrow\mathbf{1}$
\FOR{$r=1$ to $T$}
    \STATE $\mathbf{u} \leftarrow [\boldsymbol{\mu}; s] \oslash (\mathbf{K}\mathbf{v})$
    \STATE $\mathbf{v} \leftarrow \boldsymbol{\nu} \oslash (\mathbf{K}^{\top}\mathbf{u})$
\ENDFOR
\STATE Compute augmented transport plan $\boldsymbol{\Pi}=\mathrm{Diag}(\mathbf{u})\,\mathbf{K}\,\mathrm{Diag}(\mathbf{v})$
\STATE Remove dummy row to obtain $\mathbf{T}_{\text{real}}\leftarrow \boldsymbol{\Pi}_{1:m,:}$
\STATE $\mathcal{L}_{\mathrm{OT}} \leftarrow \langle \mathbf{C}, \mathbf{T}_{\text{real}} \rangle$
\STATE \textbf{return} $\mathcal{L}_{\mathrm{OT}}$
\end{algorithmic}
\end{algorithm}

\begin{algorithm}[]
\caption{Stage I: Greedy Initialization for $\mathcal{S}_c$}
\label{alg:greedy}
\begin{algorithmic}[1]
\REQUIRE
Real set $\mathcal{T}_c$; real embeddings $\mathcal{T}_{c(e)}=\{\mathbf{y}_j\}_{j=1}^{n}$; subset size $m$
\ENSURE
Initial condensed subset $\mathcal{S}_c$ with $|\mathcal{S}_c|=m$

\STATE $\mathcal{S}_c \leftarrow \emptyset$
\STATE $x_1^\star \leftarrow \arg\min_{x_k\in\mathcal{T}_c} \mathcal{L}(\{x_k\},\mathcal{T}_c)$
\STATE $\mathcal{S}_c \leftarrow \{x_1^\star\}$
\FOR{$t=2$ to $m$}
    \STATE For each candidate $x_k\in\mathcal{T}_c\setminus\mathcal{S}_c$, compute marginal gain
    \STATE \quad $\Delta\mathcal{L}(x_k) \leftarrow \mathcal{L}(\mathcal{S}_c\cup\{x_k\},\mathcal{T}_c) - \mathcal{L}(\mathcal{S}_c,\mathcal{T}_c)$
    \STATE $x_t^\star \leftarrow \arg\min_{x_k\in\mathcal{T}_c\setminus\mathcal{S}_c} \Delta\mathcal{L}(x_k)$
    \STATE $\mathcal{S}_c \leftarrow \mathcal{S}_c \cup \{x_t^\star\}$
\ENDFOR
\STATE \textbf{return} $\mathcal{S}_c$
\end{algorithmic}
\end{algorithm}

\begin{algorithm}[t]
\caption{Stage II: Swap-Based Refinement for $\mathcal{S}_c$}
\label{alg:swap_fixed}
\begin{algorithmic}[1]
\REQUIRE Current subset $\mathcal{S}_c$ with $|\mathcal{S}_c|=m$; real set $\mathcal{T}_c$; max rounds $I$
\ENSURE Refined subset $\mathcal{S}_c$
\FOR{$t=1$ to $I$}
    \STATE improved $\leftarrow$ \textbf{false}
    \FORALL{$x_i \in \mathcal{S}_c$ \textbf{in fixed order}}
        \STATE Evaluate $\Delta_{i\to j}$ for all $x_j \in \mathcal{T}_c\setminus\mathcal{S}_c$ in parallel
        \STATE $j^\star \leftarrow \arg\min_{x_j \in \mathcal{T}_c\setminus\mathcal{S}_c}\Delta_{i\to j}$;\quad
               $\Delta^\star \leftarrow \Delta_{i\to j^\star}$
        \IF{$\Delta^\star < 0$}
            \STATE $\mathcal{S}_c \leftarrow (\mathcal{S}_c\setminus\{x_i\})\cup\{x_{j^\star}\}$
            \STATE improved $\leftarrow$ \textbf{true}
        \ENDIF
    \ENDFOR
    \IF{\textbf{not} improved}
        \STATE \textbf{break}
    \ENDIF
\ENDFOR
\STATE \textbf{return} $\mathcal{S}_c$
\end{algorithmic}
\end{algorithm}

\section{PseudoCode}
\label{pseudocode}
We present the pseudocode for our condensation pipeline in Algorithm~\ref{alg:main}.
The computation of the one-sided partial optimal transport objective is detailed in Algorithm~\ref{alg:pot},
while the greedy initialization and the subsequent swap-based refinement procedures are described in
Algorithm~\ref{alg:greedy} and Algorithm~\ref{alg:swap_fixed}, respectively.

At each greedy or swap evaluation, we compute the composite objective 
$\mathcal{L}(\mathcal{S}_c,\mathcal{T}_c)=\mathcal{L}_{\mathrm{OT}}+\alpha\,\mathcal{L}_{\mathrm{stat}}+\beta\,\mathcal{L}_{\mathrm{conf}}$,
where $\mathcal{L}_{\mathrm{OT}}$ is obtained by invoking the Sinkhorn solver in Algorithm~\ref{alg:pot} using the current condensed embeddings $\mathcal{S}_{c(e)}$ and the fixed real embeddings $\mathcal{T}_{c(e)}$.
In practice, for a fixed $x_i$ we evaluate all candidate swaps $(x_i\!\rightarrow\!x_j)$ in a batched manner by recomputing $\mathcal{L}_{\mathrm{OT}}$ with the updated $\mathcal{S}_{c(e)}'$, enabling GPU-parallel scoring of $\Delta_{i\rightarrow j}$.

\section{Broader Impact}
\label{imp}
Our geometry-aware dataset condensation framework enhances the efficiency of diffusion model training by reducing data size while preserving structural fidelity, enabling sustainable and accessible generative modeling. This advancement holds potential for widespread use in text-to-image generation, multimodal learning, and data-centric optimization, lowering the carbon footprint and democratizing large-scale generative modeling. Future work can further incorporate GPT-based evaluation as a complementary assessment protocol~\cite{gpt1,gpt2} and explore its use for data augmentation~\cite{lv2026icml}. Nevertheless, dataset condensation inherently modifies the data distribution and may inadvertently amplify biases or underrepresent minority patterns if applied without care. Furthermore, condensed subsets preserve critical structural and semantic information, raising concerns about privacy, copyright, and potential misuse for generating synthetic or misleading media. These risks can be mitigated through fairness auditing, balanced sampling, anonymization, and transparent documentation of data provenance and condensation settings. Overall, our method advances energy-efficient AI while underscoring the importance of responsible governance, ethical safeguards, and open evaluation to ensure that data efficiency and social accountability progress hand in hand.

\end{document}